\documentclass{article}

\usepackage{PRIMEarxiv}

\usepackage[utf8]{inputenc} 
\usepackage[T1]{fontenc}    
\usepackage{hyperref}       
\usepackage{url}            
\usepackage{booktabs}       
\usepackage{amsfonts}       
\usepackage{nicefrac}       
\usepackage{microtype}      
\usepackage{lipsum}
\usepackage{fancyhdr}       
\usepackage{graphicx}       
\graphicspath{{media/}}     


\usepackage{amsmath,amsfonts,bm}









\def\eqref#1{equation~\ref{#1}}









\def\1{\bm{1}}










\DeclareMathAlphabet{\mathsfit}{\encodingdefault}{\sfdefault}{m}{sl}
\SetMathAlphabet{\mathsfit}{bold}{\encodingdefault}{\sfdefault}{bx}{n}











\newcommand{\E}{\mathbb{E}}

\newcommand{\R}{\mathbb{R}}

\newcommand{\Var}{\mathrm{Var}}

\newcommand{\Cov}{\mathrm{Cov}}


\DeclareMathOperator*{\argmin}{arg\,min}

\DeclareMathOperator{\Tr}{Tr}



\usepackage[utf8]{inputenc} 
\usepackage[T1]{fontenc}    
\usepackage{hyperref}       
\usepackage{url}            
\usepackage{booktabs}       
\usepackage{amsfonts}       
\usepackage{amsmath}        
\usepackage{float}          
\usepackage{nicefrac}       
\usepackage{microtype}      
\usepackage{xcolor}         
\usepackage{graphicx}
\usepackage{algorithm}
\usepackage{algpseudocode}
\usepackage{booktabs}
\usepackage{enumitem}
\usepackage{wrapfig}
\usepackage{natbib}

\usepackage{amssymb, amsthm, mathtools, bm, dsfont}
\usepackage[nameinlink]{cleveref}
\usepackage{enumitem, booktabs}

\usepackage{caption}
\theoremstyle{plain}
\newtheorem{theorem}{Theorem}
\newtheorem{proposition}{Proposition}
\newtheorem{lemma}{Lemma}
\newtheorem{corollary}{Corollary}
\theoremstyle{remark}
\newtheorem{remark}{Remark}
\newtheorem{assumption}{Assumption}
\DeclareMathOperator{\diag}{diag}

\newcommand{\N}{\mathcal{N}}
\newcommand{\tele}{\mathrm{tele}}

\pagestyle{fancy}
\thispagestyle{empty}
\rhead{ \textit{ }} 

\fancyhead[LO]{WARP: Weight Teleportation for Attack-Resilient Unlearning Protocols}

\title{WARP: Weight Teleportation for Attack-Resilient Unlearning Protocols
}

\author{
  {Mohammad M Maheri} \thanks{Correspondence to: m.maheri23@imperial.ac.uk.}\\
  {Imperial College London}\\
  \And
  {Xavier Cadet}\\
  {Dartmouth College}\\
  \And
  {Peter Chin}\\
  {Dartmouth College}\\
  \And
  {Hamed Haddadi}\\
  {Imperial College London}\\
}

\begin{document}
\maketitle

\begin{abstract}
Approximate machine unlearning aims to efficiently remove the influence of specific data points from a trained model, offering a practical alternative to full retraining. However, it introduces privacy risks: an adversary with access to pre- and post-unlearning models can exploit their differences for membership inference or data reconstruction. We show these vulnerabilities arise from two factors: large gradient norms of \textit{forget-set} samples and the close proximity of unlearned parameters to the original model. To demonstrate their severity, we propose unlearning-specific membership inference and reconstruction attacks, showing that several state-of-the-art methods (e.g., NGP, SCRUB) remain vulnerable.  
To mitigate this leakage, we introduce \textsc{WARP}, a \emph{plug-and-play teleportation defense} that leverages neural network symmetries to reduce \textit{forget-set} gradient energy and increase parameter dispersion while preserving predictions. This reparameterization obfuscates the signal of forgotten data, making it harder for attackers to distinguish forgotten samples from non-members or recover them via reconstruction. Across six unlearning algorithms, our approach achieves consistent privacy gains, reducing adversarial advantage (AUC) by up to 64\% in black-box and 92\% in white-box settings, while maintaining accuracy on retained data. These results highlight teleportation as a general tool for reducing attack success in approximate unlearning.
\end{abstract}

\section{Introduction}



Machine unlearning (MU) aims to enforce the ``right to be forgotten'' by updating a trained model so that a designated \textit{forget-set} has no influence~\cite{bourtoule2021machine,zhao2024makes}. The ideal outcome matches retraining from scratch on the remaining \textit{retain-set}, with both the model’s parameters and predictions unaffected by the forgotten data, and without degrading generalization.
A primary motivation for machine unlearning is to ensure privacy compliance for sensitive information~\cite{wang2025unlearning}. Once personal data is used for training, models may memorize specific details~\cite{ravikumar2024unveiling}, creating risks of privacy breaches~\cite{bourtoule2021machine,carlini2022privacy}. Unlearning addresses this by eliminating such traces, preventing exposure.  
The most direct solution is retraining from scratch without the \textit{forget set}, but this is computationally prohibitive. \textit{Exact Unlearning} methods such as SISA~\cite{bourtoule2021machine} reduce cost by modifying training to allow provable deletion, but they require proactive deployment and add overhead.  
To avoid full retraining, \textit{Approximate Unlearning} methods finetune the original model to forget the target data while preserving utility~\cite{kurmanji2023towards,chundawat2023can,golatkar2020forgetting,thudi2022unrolling}, trading computational efficiency against formal guarantees.



At the same time, ML models are vulnerable to privacy attacks~\cite{rigaki2023survey}. In Membership Inference Attacks (MIA), an adversary determines whether a given sample was part of the training set~\cite{shokri2017membership}.  
In Data Reconstruction Attacks (DRA), the adversary seeks to recover raw data (or a close approximation) from model outputs or parameters~\cite{yin2021see,li2022auditing,jeon2021gradient,fang2023gifd}. These attacks have been demonstrated in both black-box (access to outputs) and white-box (access to weights) settings~\cite{nasr2019comprehensive}.

Ironically, MU itself can leak the very data it aims to erase. Given access to both the original and unlearned models, an adversary can mount differencing attacks~\cite{hu2024learn,bertran2024reconstruction}, which substantially improve reconstruction success. Even models previously resistant to MIAs can become vulnerable once deletion is performed~\cite{bertran2024reconstruction,chen2021machine}. The key observation is that the parameter difference between the two models approximates the gradient of the forgotten sample (up to second-order terms), effectively releasing it to the adversary. Gradient inversion techniques, as in federated learning~\cite{geiping2020inverting}, can then reconstruct the forgotten data. Thus, approximate unlearning methods, especially gradient-ascent variants~\cite{kurmanji2023towards}, can inadvertently compromise privacy instead of ensuring it.  


\begin{wrapfigure}{r}{0.45\linewidth}
  \centering
  \includegraphics[width=\linewidth]{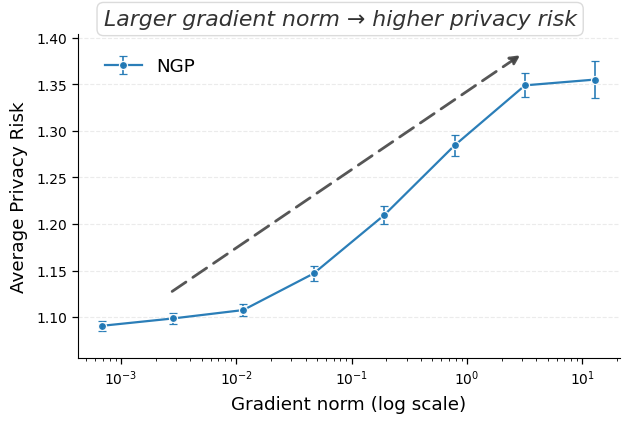}
    \caption{Privacy risk vs. gradient norms of \textit{forget-set} samples, measured with U-LiRA.}
  \label{fig:norm_vs_privacy_risk}
\end{wrapfigure}



In this work, we aim to strengthen MU against privacy attacks by characterizing two key factors driving leakage.
The first, illustrated in Figure~\ref{fig:norm_vs_privacy_risk}, is that a forgotten sample’s privacy risk correlates with its gradient norm in the original model.
Intuitively, samples with large gradient magnitudes during training or finetuning induce stronger parameter changes when removed, making them more detectable via MIA and more exploitable for reconstruction~\cite{ye2023initialization}.

Second, as shown in prior work~\cite{thudi2022unrolling,kurmanji2023towards}, most approximate unlearning methods make minor parameter updates, typically by maximizing the \textit{forget-set} loss while keeping retain-set accuracy stable.  
This keeps the unlearned model close to the original, so the parameter difference encodes information about the forgotten data.  
In gradient-ascent–based methods~\cite{kurmanji2023towards,chundawat2023can}, this difference is essentially the \textit{forget-set} gradient.  
Recent studies confirm that such updates expose information equivalent to a single gradient step on the forgotten sample~\cite{bertran2024reconstruction}, which attackers can invert to reconstruct it.

To mitigate these risks, we propose \textsc{WARP}, a plug-and-play defense that integrates into existing unlearning algorithms without training-time statistics. Our method leverages neural network teleportation~\cite{armenta2023neural}, exploiting parameter-space symmetries (e.g., rescaling or permutation) that preserve predictions. 
By applying selective teleportation steps before or during unlearning, we reduce \textit{forget-set} gradient norms while injecting symmetry-preserving randomness. This yields unlearned models that retain accuracy yet are displaced in parameter space, making it harder for an attacker to disentangle forgetting from teleportation. Consequently, membership inference and reconstruction attacks are significantly weakened, as shown in Sections~\ref{sec:exp-ulira}, \ref{sec:exp-whitebox-mia}, and~\ref{sec:exp-recon}.

Our \textbf{contributions} are summarized as follows:
\begin{itemize}[leftmargin=*]


\item \textbf{Tailored privacy attacks.}
We design MIA and DRA for the unlearning setting, where the adversary compares pre- and post-unlearning models. These attacks show that leading methods 
remain vulnerable, as parameter updates still expose information about the \textit{forget-set}.


\item \textbf{Symmetry-based defense.}
We propose \textsc{WARP}, a plug-and-play defense that, building on existing teleportation and symmetry constructions, applies loss-preserving transformations to reduce \textit{forget-set} gradient norms and increase parameter dispersion, thereby obscuring the signal exploited in reconstruction and inference, while remaining agnostic to the particular symmetry mechanism used to realize these transformations. WARP integrates into gradient-based post-hoc unlearning algorithms without requiring training-time statistics.



\item \textbf{Comprehensive evaluation.}  
We evaluate our attacks and defense across three datasets—CIFAR-10, Tiny-ImageNet, and ImageNet-1K—using ResNet-18 and ViT-B/16 models under both black-box and white-box settings. Results across multiple unlearning algorithms show that teleportation consistently reduces privacy leakage while preserving accuracy on the retain set.

\end{itemize}


\noindent Overall, our work reframes unlearning privacy risk through the lens of \emph{gradient norm reduction} and connects it to neural network symmetry, an underexplored optimization principle that lays a conceptual foundation for more privacy attack–resilient unlearning algorithms. Related works to ours are discussed in more detail in Appendix~\ref{sec:appendix-rw}. 
The implementation\footnote{Our code is available at \url{https://github.com/mammadmaheri7/WARP_Unlearning}.} is publicly available.




\section{Threat Model}


We consider a strong adversary performing \textit{sample-wise membership inference}, distinguishing whether a sample belongs to the \textit{forget-set} $\mathcal{D}_{\text{f}}$ or the \textit{test set} $\mathcal{D}_{\text{test}}$. The attacker has access to both the pre- and post-unlearning models.



\paragraph{Attacker Capabilities.}
The attacker has full access to both the original $\theta^{\text{org}}$ and unlearned model $\theta^{u}$, as well as complete knowledge of the unlearning algorithm $\mathcal{A}{\text{unlearn}}$ and its hyperparameters $\mathcal{H}{\text{unlearn}}$ (e.g., optimizer, learning rate, update steps, retain-set size).




We consider two settings:  
\textbf{Black-box} — the attacker queries outputs $f(x;\theta^{u})$.  
\textbf{White-box} — the attacker additionally accesses full internals of both models $(\theta^{\text{org}}, \theta^{u})$, including weights.

\paragraph{Attack Objective}

Given a sample $(x, y)$ from either the \textit{forget-set} $\mathcal{D}_{\text{forg}}$ or the held-out test set $\mathcal{D}_{\text{test}}$, the attacker computes a score $A'(x, y)$ and predicts membership as $A(x, y)=\mathbb{I}[A'(x, y)>\tau]$, where $\mathbb{I}[\cdot]$ is the indicator function and $\tau$ is a decision threshold. The attacker seeks a high true positive rate (TPR) on forgotten samples while maintaining a low false positive rate (FPR) on test samples. This directly measures privacy risk: if membership can be reliably inferred, incomplete unlearning is exposed and the forgotten samples identified.  
Unlike prior work~\cite{bourtoule2021machine,maheri2025zk}, our goal is to audit unlearning algorithms from a \emph{privacy perspective}, rather than evaluating indistinguishability between approximate and exact unlearning outcomes.

\section{Methodology}

\subsection{Privacy Attacks}
To systematically evaluate privacy leakage in unlearning, we consider two complementary classes of attacks: \emph{membership inference} and \emph{data reconstruction}. 

\paragraph{Black-box (U-LiRA).}  
For the black-box setting, we adopt U-LiRA~\citep{hayes2025inexact}, an adaptation of LiRA~\citep{carlini2022membership} to unlearning.  
U-LiRA leverages shadow models trained and unlearned with the same algorithm as the target, yielding a strong adaptive baseline for auditing privacy.  
We defer full algorithmic details to Appendix~\ref{sec:appendix-ulira}.

\paragraph{White-box (Gaussian Gradient--Difference).}  
In the white-box setting, we extend the Gaussian gradient–difference framework of \citet{leemann2023gaussian} to the unlearning case by contrasting gradients computed on both the original and unlearned models.  
This contrast provides a powerful signal of residual membership leakage when both model versions are available to attacker.  
The detailed proposed formulation and test statistic are presented in Appendix~\ref{sec:appendix-glir}.

\paragraph{Reconstruction Attack in Unlearning.}  
\label{sec:method-recon}

We develop a \emph{white-box} reconstruction attack tailored to approximate unlearning with retain-set updates.  
Let $\Delta\theta=\theta^{u}-\theta^{\mathrm{org}}$ be the observed parameter change after one unlearning stage (possibly aggregating multiple optimizer steps).  
As in gradient inversion, we seek an input whose parameter-gradient aligns with a target vector; here the natural target is $\Delta\theta$.  
Our baseline (single-sample) objective is:
\begin{equation}
\label{eq:baseline-inv}
\hat{x},\,\hat{y}
~\in~\arg\min_{x,y}\;
\mathcal{D}\!\left(\nabla_{\theta}\ell\!\big(f(x;\theta^{\mathrm{org}}),y\big),\;\Delta\theta\right),
\end{equation}
where $\ell$ is the training loss, $f(\cdot;\theta)$ the network, and $\mathcal{D}$ a distance (e.g., $\ell_2$ or negative cosine).

With approximate unlearning, the update $\Delta\theta$ mixes retain and forget gradients.  
For a forget example $(x_f,y_f)$ and a retain minibatch $\mathcal{B}_r$,
\begin{equation}
\label{eq:mixture}
\Delta\theta ~\approx~ -\eta\Big(g_r - \alpha\,g_f\Big),
\qquad
g_r \!=\! \tfrac{1}{|\mathcal{B}_r|}\!\sum_{(x_r,y_r)\in\mathcal{B}_r}\!\nabla_{\theta}\ell(f(x_r;\theta^{\mathrm{org}}),y_r),
\quad
g_f \!=\! \nabla_{\theta}\ell(f(x_f;\theta^{\mathrm{org}}),y_f),
\end{equation}
with effective step size $\eta$ and ascent weight $\alpha>0$.  
Directly targeting $\Delta\theta$ in \eqref{eq:baseline-inv} is therefore confounded by $g_r$.
Even when \eqref{eq:baseline-inv} is instantiated with state-of-the-art gradient inversion methods, naively inverting the unfiltered update $\Delta\theta$ remains ineffective, producing low accuracy of the reconstruction (see Section~\ref{sec:experiments}, Table~\ref{tab:recon_teleport}).

Let $G_{\mathrm{org}}=[g(b_i;\theta^{\mathrm{org}})]_{i=1}^m$ and $G_{u}=[g(b_i;\theta^{u})]_{i=1}^m$ be gradient snapshots on a small probe set drawn from the training distribution.  
We compute thin SVDs, $G_{\mathrm{org}}\!=\!U_{\mathrm{org}}\Sigma_{\mathrm{org}}V_{\mathrm{org}}^\top$ and $G_u\!=\!U_{u}\Sigma_{u}V_{u}^\top$, and keep the top-$k$ left singular vectors to obtain orthonormal bases (columns) for the dominant gradient subspaces.  
Define the \emph{orthogonal projectors}
\[
\Pi_{\mathrm{org}} \!=\! U_{\mathrm{org}}U_{\mathrm{org}}^\top,\qquad
\Pi_{u} \!=\! U_{u}U_{u}^\top,\qquad
\Pi_{u}^{\perp} \!=\! I - \Pi_{u}.
\]
Unlearning attenuates the forget component, so retain gradients are expected to persist in both models, whereas the forget component is prominent in $\theta^{\mathrm{org}}$ but suppressed in $\theta^{u}$.  
We therefore \emph{orthogonalize} the update against the unlearned subspace and keep only directions supported by the original model:
\begin{equation}
\label{eq:filtered-target}
\tilde g_f \;=\; \Pi_{\mathrm{org}}\,\Pi_{u}^{\perp}\!\left(-\tfrac{1}{\eta}\,\Delta\theta\right).
\end{equation}
Intuitively, $\Pi_{u}^{\perp}$ removes directions consistent with retain gradients that remain after unlearning, while $\Pi_{\mathrm{org}}$ preserves directions active before unlearning where the forget signal resides.  
If the retain subspace is well captured, then $\Pi_{u}^{\perp} g_r \!\approx\! 0$ and $\Pi_{\mathrm{org}}\Pi_{u}^{\perp}(\alpha g_f)\!\approx\!\alpha g_f$, yielding a high-SNR estimate of the forget gradient.

We reconstruct the forgotten sample by solving the filtered inversion:
\begin{equation}
\label{eq:final-inv}
\hat{x}_f,\,\hat{y}_f
~\in~\arg\min_{x,y}\;
\mathcal{D}\!\left(\nabla_{\theta}\ell\!\big(f(x;\theta^{\mathrm{org}}),y\big),\;\tilde g_f\right),
\end{equation}
with optional priors or constraints on $(x,y)$.  
In practice, we choose $k$ to retain a fixed fraction of gradient energy (e.g., 90–95\%), which stabilizes the projectors and reliably isolates the forget component via orthogonalization. We empirically validate that orthogonal subspace filtering boosts reconstruction success across models and datasets; see Section~\ref{sec:exp-recon} and Appendix Table~\ref{tab:recon_core}.

\subsection{WARP (Teleportation-based Defense)}
\label{sec:method-teleportation-based-defense}

\paragraph{Motivation I: Parameter closeness increases privacy leakage.}
We formulate post-hoc unlearning as minimizing a composite objective that balances forgetting on $\mathcal{D}_{\mathrm{f}}$ with utility on $\mathcal{D}_{\mathrm{r}}$:
\begin{equation}
\label{eq:unlearn-obj}
\min_{\theta}\;\; \underbrace{\ell_{\mathrm{f}}\!\big(\theta \mid \mathcal{D}_{\mathrm{f}}\big)}_{\text{Forget}}
\;+\;
\lambda\,\underbrace{\ell_{\mathrm{r}}\!\big(\theta \mid \mathcal{D}_{\mathrm{r}}\big)}_{\text{Retain}},
\qquad \lambda \ge 0,
\end{equation}
where $\theta$ denotes model parameters; $\ell_{\mathrm{f}}$ is any differentiable \emph{forgetting surrogate} that penalizes high confidence or reduces fidelity on $\mathcal{D}_{\mathrm{f}}$ (e.g., loss-inflation, uniform/soft labels, margin expansion); and $\ell_{\mathrm{r}}$ is the standard training/consistency loss on $\mathcal{D}_{\mathrm{r}}$ to preserve performance. The trade-off coefficient $\lambda$ controls how strongly the unlearning step remains anchored to the retain-set: larger $\lambda$ keeps $\theta^u$ closer to $\theta^{\mathrm{org}}$, preserving accuracy but reducing the parameter shift introduced by forgetting.
A first–order optimizer with mini-batches $\mathcal{B}_{\mathrm{f}}\!\subset\!\mathcal{D}_{\mathrm{f}}$ and $\mathcal{B}_{\mathrm{r}}\!\subset\!\mathcal{D}_{\mathrm{r}}$ yields the iterative update
\begin{equation}
\label{eq:first-order}
\theta_{t+1}
=
\theta_t
-\eta_t\Big(
\nabla_{\theta}\ell_{\mathrm{f}}\!\big(\theta_t \mid \mathcal{B}_{\mathrm{f}}\big)
+\lambda\,\nabla_{\theta}\ell_{\mathrm{r}}\!\big(\theta_t \mid \mathcal{B}_{\mathrm{r}}\big)
\Big),
\end{equation}
which encompasses common post-training approximate unlearning schemes; for instance, “negative-gradient” methods are recovered by taking $\ell_{\mathrm{f}}(\cdot)=-\ell_{\text{train}}(\cdot)$ (i.e., ascent on the standard training loss over $\mathcal{D}_{\mathrm{f}}$), whereas rehearsal/consistency-based approaches instantiate $\ell_{\mathrm{r}}$ with supervised loss or distillation on $\mathcal{D}_{\mathrm{r}}$ \cite{thudi2022unrolling,kurmanji2023towards,chundawat2023can}.

Because \eqref{eq:unlearn-obj} explicitly regularizes utility on $\mathcal{D}{\mathrm{r}}$ and is optimized with small steps and early stopping on $\mathcal{D}{\mathrm{f}}$, the resulting unlearned parameters $\theta^{u}$ typically remain \emph{close} to the original $\theta^{\mathrm{org}}$ in parameter space. The displacement $\Delta\theta=\theta^{u}-\theta^{\mathrm{org}}$ is well-approximated (to first order) by a weighted combination of gradients on the \textit{forget-set}, mildly contaminated by retain gradients \cite{thudi2022unrolling,kurmanji2023towards,huang2024unified}.
This proximity creates a privacy attack surface: An adversary with access to $(\theta^{\mathrm{org}},\theta^{u})$ can leverage $\Delta\theta$ to perform membership inference or gradient-based reconstruction of $\mathcal{D}_{\mathrm{f}}$ \cite{hu2024learn,bertran2024reconstruction}, motivating the defenses applied over unlearning algorithms.



\paragraph{Motivation II: Gradient norm and curvature amplify leakage.}
Recent evidence suggests that the per-sample gradient trajectory is a strong predictor of privacy vulnerability. \cite{tobaben2024understanding} show that training examples that accumulate larger gradient norms during optimization are significantly more prone to MIA, reflecting the intuition from differential privacy that each update’s privacy loss scales with gradient magnitude. Complementing this, \cite{ravikumar2024curvature} demonstrate that curvature around training samples—captured via local sharpness of the loss—serves as a reliable discriminator between members and non-members, with sharper regions implying higher membership exposure. These findings aligns with theoretical analyses such as \cite{ye2023initialization}, who prove that large per-sample gradients at initialization inflate the KL divergence between neighboring training trajectories, directly increasing the sample’s privacy risk. Motivated by this, we hypothesize that approximate unlearning inherits the same vulnerability: samples with higher gradient norms tend to push parameters towards sharper local extrema during both training and unlearning, thereby overshooting the target update and leaving a stronger privacy footprint. Our experiments (Fig.\ref{fig:norm_vs_privacy_risk}) confirm this intuition, revealing a clear correlation between a sample’s gradient norm in the original model and its susceptibility to membership inference after unlearning.

To simultaneously address (i) the parameter–space proximity that enables differencing and (ii) the gradient–norm driver of leakage, we leverage \emph{loss-invariant symmetries} of deep networks.



\paragraph{Symmetry framework.}
Let $\mathcal{G}$ denote a set of symmetry transformations acting on parameters $\theta$ (and, when needed, internal representations) such that the task loss is invariant: $\mathcal{L}(X,\theta)=\mathcal{L}(g\!\cdot\!(X,\theta))$ for all $g\in\mathcal{G}$ \cite{zhao2022symmetry,zhao2023improving,armenta2023neural,simsek2021geometry}.  
A \emph{teleportation} step chooses $g$ and updates $\theta\leftarrow g\!\cdot\!\theta$, moving within the loss level set.  
In our defense, we select $g$ to reduce the gradient norm of the \textit{forget-set} while preserving utility on the retain-set:

\begin{equation}
\label{eq:teleport-g-opt}
g^\star \in \arg\min_{g\in\mathcal{G}}
\Big\{ \underbrace{\textstyle\sum_{(x,y)\in\mathcal{D}_{\mathrm f}}
\|\nabla_\theta \ell(f(x;g\cdot\theta),y)\|_2^2}_{\text{shrink forget-set gradients}}
\;-\; \beta\,\underbrace{\|g\cdot\theta-\theta\|_2^2}_{\text{increase parameter dispersion}} \Big\}
\end{equation}
\begin{equation*}
\text{s.t.}\quad
\ell_{\mathrm r}(g\cdot\theta\,|\,\mathcal{D}_{\mathrm r})
\;\le\; \ell_{\mathrm r}(\theta\,|\,\mathcal{D}_{\mathrm r})+\varepsilon.
\end{equation*}


with trade-off $\beta\!\ge\!0$ and tolerance $\varepsilon\!\ge\!0$.  
The first term reduces squared gradient norms of forget examples (Motivation~II); 
the dispersion term adds symmetry-preserving randomness, displacing parameters from $\theta^{\mathrm{org}}$ (Motivation~I); 
the constraint preserves retain performance.

WARP operates on an abstract prediction-preserving symmetry map $T_\phi$, and any such symmetry family can instantiate the framework. In practice, we use two concrete realizations—the retain–null-space projection introduced in the next paragraph, and the change-of-basis teleportation detailed in Appendix~\ref{sec:appendix-alternative}—to illustrate this generality.
To complement this algorithmic view, Appendix~\ref{sec:appendix-theoretical} develops teleportation-aware information-theoretic bounds on gradient-based reconstruction, showing how injecting symmetry-induced noise via $T_\phi$ expands the symmetry orbit and provably increases the expected reconstruction error for attackers observing $(\theta^{\mathrm{org}}, \theta^{u})$.

\paragraph{Primary instantiation: teleportation with retain null-space projection.}
\label{pargraph:primary-instansiation}
We first describe one convenient way to instantiate $T_\phi$ using retain–null-space projections~\cite{wu2025teleportation}.
To optimize \eqref{eq:teleport-g-opt} efficiently on modern architectures without explicit group actions, we adopt \emph{teleportation with input null-space gradient projection} \cite{wu2025teleportation} and instantiate it using the recent projector formulation that keeps updates on the loss-invariant level set by per-layer projections onto the input null space (thus leaving the task loss unchanged up to numerical error).  
Concretely, define the \emph{teleportation loss}
\[
\mathcal{L}_{\mathrm{tel}}(\theta)
~=~
\sum_{(x,y)\in\mathcal{B}_{\mathrm f}}
\left\|\nabla_\theta \ell\big(f(x;\theta),y\big)\right\|_2^2
\;-\;
{\beta}\,\|\theta-\theta^{\mathrm{org}}\|_2^2,
\]

where $\mathcal{B}_{\mathrm f}$ is a minibatch from $\mathcal{D}_{\mathrm f}$.  
Let $R_\ell$ be the per-layer representation matrix from a \emph{retain} minibatch (layer-$\ell$ inputs), with thin SVD $R_\ell = U_\ell \Sigma_\ell V_\ell^\top$.  
We keep the top-$k$ left singular vectors $B_\ell = U_{\ell,1:k}$ to span the retain subspace and define the orthogonal projector onto its complement $\Pi_\ell^\perp = I - B_\ell B_\ell^\top$.  
A teleportation step then applies the layer-wise update

\begin{equation}
\label{eq:teleport-update}
W_{\ell}^{\,t+1}
\;\leftarrow\;
W_{\ell}^{\,t}
-\eta_{\mathrm{tel}}\;\Pi_\ell^\perp\!\big(\nabla_{W_\ell}\mathcal{L}_{\mathrm{tel}}(\theta^{\,t})\big)
\end{equation}

which (i) \emph{reduces} the forget-set gradient norms by descending on $\mathcal{L}_{\mathrm{tel}}$,  
(ii) \emph{preserves} the function on the retain-set by restricting motion to the retain-orthogonal subspace. 
The projection operator in~\eqref{eq:teleport-update} corresponds to the input-null-space projector. This is implemented by subtracting the component in the subspace of the core gradient, leaving only the residual for the teleport step.




To align the invariance with utility preservation, we compute $B_\ell$ \emph{only from retain data}. 
Let $R_\ell(\mathcal{D}_{\mathrm r}) = [\phi_\ell(x)]_{x \in \mathcal{B}_{\mathrm r}}$ denote the matrix formed by stacking the layer-$\ell$ inputs for a retain minibatch $\mathcal{B}_{\mathrm r}$. Then:

\begin{equation}
R_\ell(\mathcal{D}_{\mathrm r}) = U_\ell \Sigma_\ell V_\ell^\top,\qquad
B_\ell = U_{\ell,1:k},\qquad
\Pi_\ell^\perp = I - B_\ell B_\ell^\top .
\end{equation}

We set $k$ to capture a fixed fraction of retain variance (typically $95\%\text{–}99\%$) and apply the resulting projectors in \eqref{eq:teleport-update}. This confines each teleport step to the retain-orthogonal subspace, stabilizing predictions on $\mathcal{D}_{\mathrm r}$ while suppressing gradient energy on $\mathcal{D}_{\mathrm f}$. Since $\Pi_\ell^\perp$ removes directions spanned by retain representations, suitable choices of rank $k$ and step size $\eta_{\mathrm{tel}}$ ensure that
\[
\big|\ell_{\mathrm r}(g\!\cdot\!\theta\,|\,\mathcal D_{\mathrm r})-\ell_{\mathrm r}(\theta\,|\,\mathcal D_{\mathrm r})\big|\;\le\;\varepsilon,
\]
which matches the constraint below \eqref{eq:teleport-g-opt}; in practice, prediction drift on $\mathcal D_{\mathrm r}$ remains within numerical tolerance (see Appendix~\ref{sec:teleport-hp-sensitivity} for hyperparameter sensitivity).
To underline that WARP is not tied to retain–null-space projections, Appendix~\ref{sec:appendix-alternative} instantiates $T_\phi$ using the SVD-free change-of-basis symmetries introduce in~\cite{armenta2023neural}.


\paragraph{Plug-and-play scope.}
Teleportation is interleaved with the standard unlearning update \eqref{eq:first-order}, requiring no training-time per-sample gradients or stored statistics. The update \eqref{eq:teleport-update} is applied at intervals $t \in K \subset {0,\ldots,T{-}1}$ (e.g., every $S$ steps), keeping \textit{forget-set} gradient norms low while preserving retention performance. The full algorithm appears in Appendix~\ref{sec:appendix-teleport}.

\section{Experiments}
\label{sec:experiments}

We now empirically evaluate the proposed teleportation-based defense across multiple unlearning algorithms, datasets, and model architectures. 
Our experiments are designed to answer the following research questions:  
(i) How vulnerable are state-of-the-art unlearning algorithms to privacy attacks under both black-box and white-box threat models?
(ii) To what extent does teleportation reduce membership and reconstruction leakage without sacrificing utility on the retain-set?


\paragraph{Experimental Setup. }
\label{sec:exp-setup}

We conduct experiments on CIFAR-10, Tiny-ImageNet, and ImageNet-1K.  
On CIFAR-10 we use ResNet-18, while on ImageNet we evaluate ViT-B/16, covering both convolutional and transformer models.  
All models are trained with SGD and standard augmentation.  
Following prior work~\cite{kurmanji2023towards,chundawat2023can}, forget sets $\mathcal{D}_f$ are sampled as roughly $1\%$ of training data per class, with retain sets $\mathcal{D}_r$ comprising the rest.   


\paragraph{Baselines. }
\label{sec:methods-baselines}
We benchmark six representative unlearning algorithms—\textsc{NegGrad+}~\cite{kurmanji2023towards}, \textsc{SCRUB}~\cite{kurmanji2023towards}, \textsc{SalUn}~\cite{fan2023salun}, \textsc{PGU}~\cite{hoang2024learn}, \textsc{BadTeacher}~\cite{chundawat2023can}, and \textsc{SRF-ON}~\cite{huang2024unified}—covering paradigms of gradient ascent, regularization, saliency, projection, and distillation. Full details are in Appendix~\ref{sec:appendix-baselines}.

\subsection{Overview Effectiveness of WARP}

\begin{figure}[t]
    \centering
    \begin{minipage}{0.32\linewidth}
        \centering
        \includegraphics[width=\linewidth]{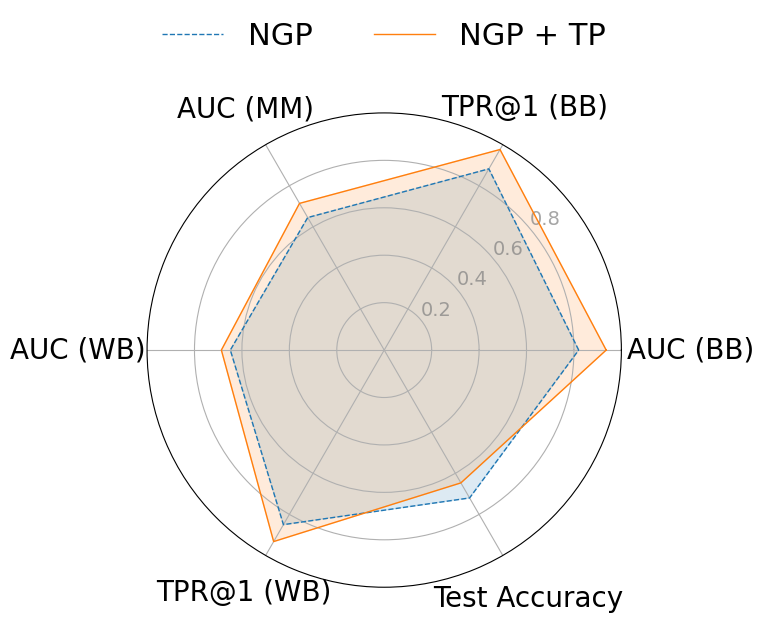}
        \vspace{0.3em}
        \textbf{NGP}
        \label{fig:npg_spider}
    \end{minipage}\hfill
    \begin{minipage}{0.32\linewidth}
        \centering
        \includegraphics[width=\linewidth]{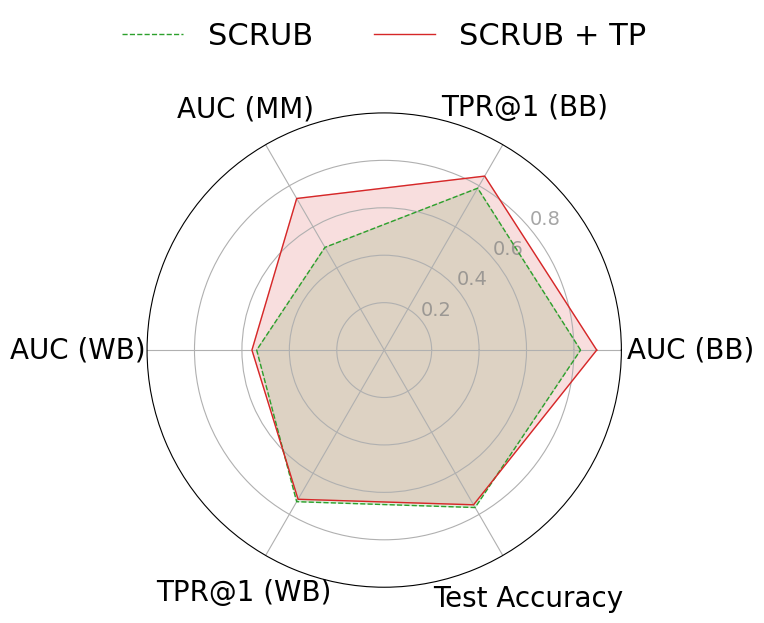}
        \vspace{0.3em}
        \textbf{SCRUB}
        \label{fig:scrub_spider}
    \end{minipage}\hfill
    \begin{minipage}{0.32\linewidth}
        \centering
        \includegraphics[width=\linewidth]{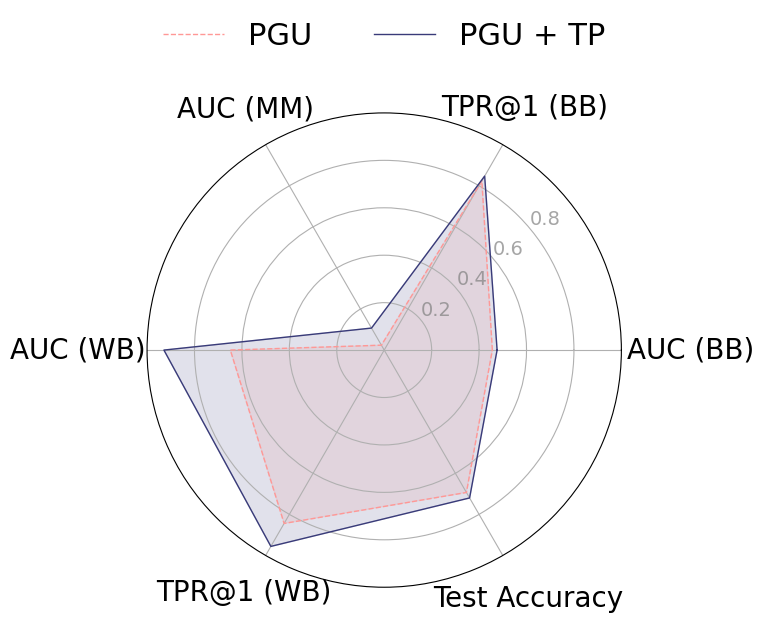}
        \vspace{0.3em}
        \textbf{PGU}
        \label{fig:pgu_spider}
    \end{minipage}

    \vspace{0.8em} 
    \begin{minipage}{0.32\linewidth}
        \centering
        \includegraphics[width=\linewidth]{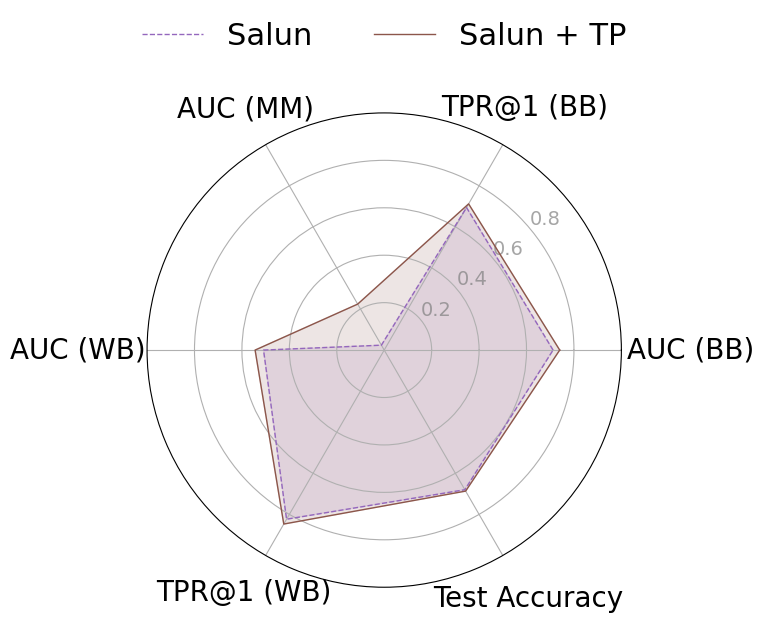}
        \vspace{0.3em}
        \textbf{Salun}
        \label{fig:salun_spider}
    \end{minipage}\hfill
    \begin{minipage}{0.32\linewidth}
        \centering
        \includegraphics[width=\linewidth]{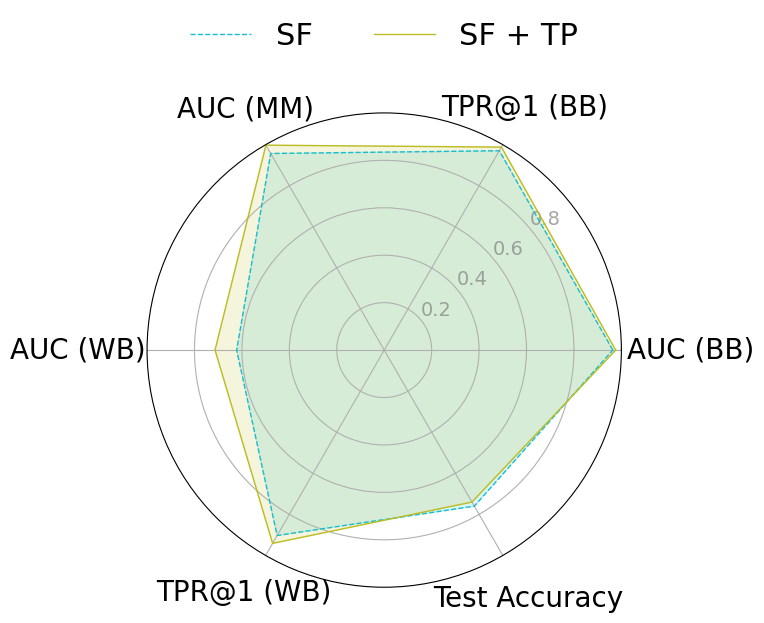}
        \vspace{0.3em}
        \textbf{SF}
        \label{fig:sf_spider}
    \end{minipage}\hfill
    \begin{minipage}{0.32\linewidth}
        \centering
        \includegraphics[width=\linewidth]{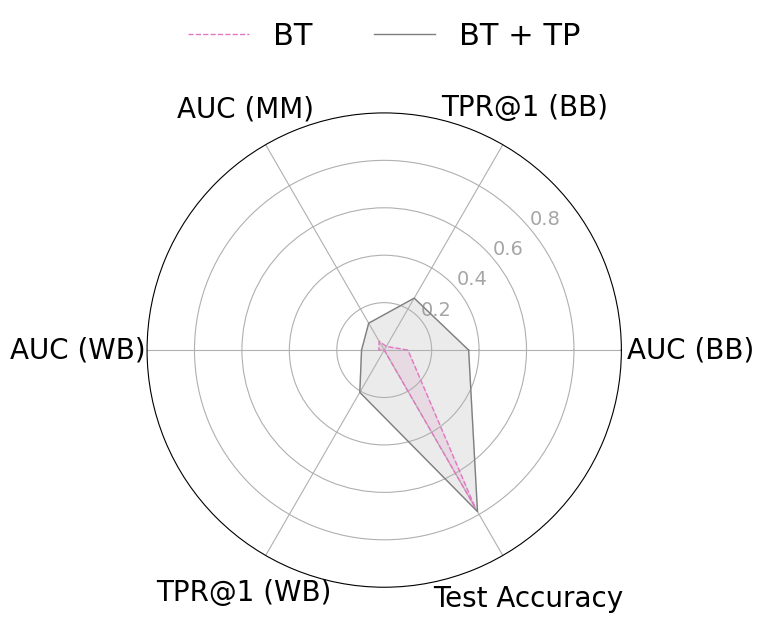}
        \vspace{0.3em}
        \textbf{BT}
        \label{fig:bt_spider}
    \end{minipage}

    \caption{Comparison of unlearning vs. teleportation across six unlearning methods.}
    \label{fig:spider_six_methods}
\end{figure}

Figure~\ref{fig:spider_six_methods} summarizes privacy and utility across six unlearning methods with and without our plug-in defense. Each radar chart reports black-box membership inference risk (AUC and TPR at low FPR), accuracy on the most-memorized subset, white-box membership inference risk (AUC and TPR at low FPR), and standard test accuracy. The most-memorized subset is selected following our U-LiRA protocol in Sec.~\ref{sec:exp-ulira}, motivated by prior findings that highly memorized samples carry elevated unlearning risk~\cite{naderloui2025rectifying}. For visualization, all metrics are min–max normalized across methods. Privacy metrics in which lower is better are inverted by plotting $1-\text{metric}$, so that larger polygons correspond to stronger privacy, while higher test accuracy remains preferable.

Three key observations emerge. First, no unlearning algorithm dominates across all axes. For instance, SF performs well under black-box auditing but is weaker under white-box auditing and in test accuracy, illustrating the necessity of evaluating under both threat models. Second, algorithms that appear robust under black-box evaluation such as NGP and SF still exhibit substantial leakage under our white-box test, underscoring the importance of auditing with gradient- or weight-based evidence. Third, adding our symmetry-based teleportation module, instantiated via retain null-space projection, consistently improves privacy across both black-box and white-box metrics while maintaining utility. In some cases, such as BT and SF, teleportation even improves test accuracy. The only noticeable accuracy drop occurs for NGP (about one percentage point), for which we provide a detailed privacy–utility trade-off analysis in Appendix~\ref{sec:appendix-exp-ngp-tradeoff}. 
The runtime overhead of teleportation is analyzed separately in Appendix~\ref{sec:appendix-runtime}, and Appendix~\ref{sec:teleport-hp-sensitivity} presents ablations showing that WARP’s performance does not hinge on fragile choices of teleportation hyperparameters. Overall, these results demonstrate that the proposed defense empirically reduces attack success consistently and effectively across a diverse set of unlearning algorithms and threat models.
For completeness, we also compare WARP against the strongest noise-based alternative, namely projected DP–Langevin unlearning~\cite{chien2024langevin}, using its formally calibrated update rule; the full comparison is provided in Appendix~\ref{sec:dp-comparison}.

\subsection{U-LiRA (Black-box)}
\label{sec:exp-ulira}

\begin{table}[t]
\centering
\small
\setlength{\tabcolsep}{2pt}
\caption{\textbf{Privacy (Black-box) with and without WARP.} 
Reported are risks on \emph{all forget samples} and the \emph{most–memorized} 1\% (AUC, TPR@0.1/1/5\%), plus test accuracy. 
Each row shows baseline, WARP, and relative improvement (\%).}
\vspace{2mm}
\resizebox{0.75\linewidth}{!}{%
\begin{tabular}{lccccccccc}
\toprule
& \multicolumn{4}{c}{All samples (BB)} & \multicolumn{4}{c}{Most–memorized (top 1\%)} & \multicolumn{1}{c}{Acc.} \\
\cmidrule(lr){2-5}\cmidrule(lr){6-9}\cmidrule(lr){10-10}
Method & AUC & TPR@0.1 & TPR@1 & TPR@5 & AUC & TPR@0.1 & TPR@1 & TPR@5 & Test \\
\midrule
NGP (base)        & 0.545 & 0.012 & 0.030 & 0.077 & 0.649 & 0.058 & 0.157 & 0.277 & \textbf{0.808} \\
\textbf{+ WARP}   & \textbf{0.516} & \textbf{0.003} & \textbf{0.014} & \textbf{0.055} & \textbf{0.598} & \textbf{0.015} & \textbf{0.082} & \textbf{0.206} & 0.797 \\
Improvement (\%)  & 64.4 & 81.8 & 80.0 & 81.5 & 34.2 & 75.4 & 51.0 & 31.3 & -5.7 \\
\midrule
SCRUB (base)      & 0.543 & 0.020 & 0.047 & 0.092 & 0.710 & 0.086 & 0.227 & 0.397 & \textbf{0.815} \\
\textbf{+ WARP}   & \textbf{0.526} & \textbf{0.015} & \textbf{0.036} & \textbf{0.078} & \textbf{0.610} & \textbf{0.041} & \textbf{0.119} & \textbf{0.213} & 0.813 \\
Improvement (\%)  & 39.5 & 26.3 & 29.7 & 33.3 & 47.6 & 52.9 & 49.8 & 53.0 & -1.1 \\
\midrule
PGU (base)        & 0.636 & 0.024 & 0.040 & \textbf{0.098} & 0.910 & 0.201 & 0.511 & 0.706 & 0.804 \\
\textbf{+ WARP}   & \textbf{0.631} & \textbf{0.018} & \textbf{0.036} & 0.104 & \textbf{0.875} & \textbf{0.160} & \textbf{0.431} & \textbf{0.663} & \textbf{0.808} \\
Improvement (\%)  & 3.7  & 26.1 & 13.3 & -12.5 & 8.5  & 20.5 & 16.0 & 6.6 & +2.0 \\
\midrule
Salun (base)      & 0.572 & 0.020 & 0.062 & 0.121 & 0.910 & 0.129 & 0.321 & 0.520 & 0.802 \\
\textbf{+ WARP}   & \textbf{0.565} & \textbf{0.019} & \textbf{0.059} & \textbf{0.113} & \textbf{0.826} & \textbf{0.107} & \textbf{0.264} & \textbf{0.487} & \textbf{0.803} \\
Improvement (\%)  & 9.7  & 5.3  & 5.8  & 11.3 & 20.5 & 17.2 & 18.3 & 7.0 & +0.5 \\
\midrule
SF (base)         & 0.509 & 0.004 & 0.015 & 0.056 & 0.518 & 0.089 & 0.034 & 0.079 & \textbf{0.814} \\
\textbf{+ WARP}   & \textbf{0.506} & \textbf{0.002} & \textbf{0.012} & \textbf{0.051} & \textbf{0.501} & \textbf{0.006} & \textbf{0.026} & \textbf{0.068} & 0.811 \\
Improvement (\%)  & 33.3 & 66.7 & 60.0 & 83.3 & 94.4 & 94.3 & 33.3 & 37.9 & -1.6 \\
\midrule
BT (base)         & 0.725 & 0.000 & 0.177 & 0.287 & 0.902 & 0.119 & 0.295 & 0.582 & 0.816 \\
\textbf{+ WARP}   & \textbf{0.661} & 0.000 & \textbf{0.137} & \textbf{0.219} & \textbf{0.865} & \textbf{0.113} & \textbf{0.275} & \textbf{0.537} & \textbf{0.818} \\
Improvement (\%)  & 28.4 & -- & 24.0 & 28.7 & 9.2 & 5.1 & 7.0 & 8.5 & +1.1 \\
\bottomrule
\end{tabular}
}
\label{tab:bb_mm_advred}
\end{table}

We evaluate our teleportation defense with U-LiRA~\cite{hayes2025inexact}, a state-of-the-art black-box unlearning auditor.
Following Deep Unlearn~\cite{cadet2024deep}, we train $T=64$ shadow models with $10$ random forget sets each.
To model a strong adaptive adversary, shadows use the same unlearning algorithm, teleportation, and hyperparameters as the target, reducing proxy–target miscalibration~\cite{cretu2023investigating}.
Details of U-LiRA appear in Appendix~\ref{sec:appendix-ulira}.


As emphasized in prior work~\cite{carlini2022membership}, the most informative regime is low false-positive rates (FPR), where practical attacks must operate.
We therefore report AUC as well as TPR@0.1, TPR@1, and TPR@5, which capture attacker success in this stringent regime.
In addition, following RULI~\cite{naderloui2025rectifying}, we stratify the \textit{forget-set} by \emph{memorization} (ranked by training confidence) and evaluate U-LiRA on the most–memorized slice.
These points carry elevated privacy risk, so we report low-FPR TPR on this subset alongside aggregate metrics.


Table~\ref{tab:bb_mm_advred} shows that adding our teleportation plug-in reduces black-box membership leakage across all methods, on both the full \textit{forget-set} and the most-memorized slice, with the largest gains at low FPR.
For example, NGP’s TPR@1 nearly halves (0.030→0.014), SCRUB’s memorized-slice AUC drops by 0.10 (0.710→0.610), and SF’s AUC falls to near-random (0.501).
Low-FPR TPR gains are often large even when aggregate AUC shifts are modest, showing that teleportation suppresses the high-confidence tails attacks exploit.
Some methods remain leaky on memorized points, but teleportation frequently drives this slice close to random without hurting accuracy.
Its impact is strongest on TPR@0.1 and TPR@1, as retain-null-space projection reduces forget gradients and shrinks extreme margins, weakening the rare signals enabling low-FPR success.

\subsection{White-box MIA}
\label{sec:exp-whitebox-mia}

\begin{figure}[t]
    \centering
    \begin{minipage}[c]{0.52\linewidth}
        \centering
        \includegraphics[width=\linewidth]{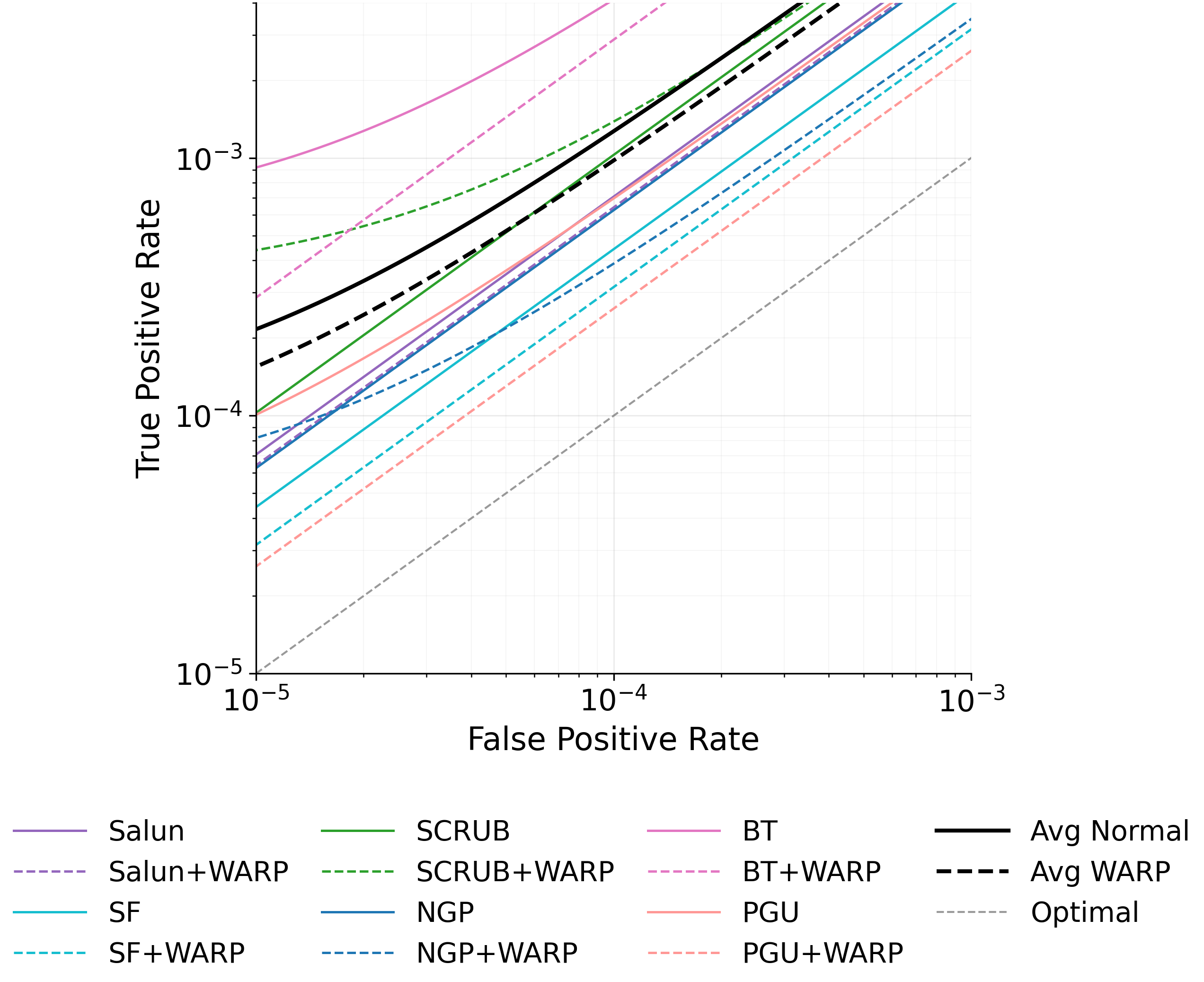}
    \end{minipage}\hfill
    \begin{minipage}[c]{0.46\linewidth}
        \centering
        \renewcommand{\arraystretch}{0.9} 
        \setlength{\tabcolsep}{3pt}       
        \resizebox{\linewidth}{!}{%
        \begin{tabular}{lcccc}
        \toprule
        Method & AUC & TPR@0.1 & TPR@1 & TPR@5 \\
        \midrule
        NGP (base)   & 0.642 & 0.004 & 0.034 & 0.139 \\
        \textbf{+ WARP}       & \textbf{0.614} & \textbf{0.002} & \textbf{0.021} & \textbf{0.097} \\
        Improvement (\%) & 17.0 & 50.0 & 40.6 & 34.2 \\
        \midrule
        SCRUB (base) & 0.700 & 0.011 & 0.102 & 0.287 \\
        \textbf{+ WARP}       & \textbf{0.657} & \textbf{0.006} & \textbf{0.061} & \textbf{0.193} \\
        Improvement (\%) & 14.3 & 54.5 & 42.5 & 33.5 \\
        \midrule
        PGU (base)   & 0.659 & 0.007 & 0.064 & 0.215 \\
        \textbf{+ WARP}       & \textbf{0.533} & \textbf{0.002} & \textbf{0.025} & \textbf{0.085} \\
        Improvement (\%) & 92.9 & 83.3 & 64.5 & 65.5 \\
        \midrule
        Salun (base) & 0.721 & 0.008 & 0.069 & 0.230 \\
        \textbf{+ WARP}       & \textbf{0.705} & \textbf{0.006} & \textbf{0.062} & \textbf{0.214} \\
        Improvement (\%) & 9.5 & 33.3 & 10.1 & 7.0 \\
        \midrule
        SF (base)    & 0.670 & 0.005 & 0.043 & 0.161 \\
        \textbf{+ WARP}       & \textbf{0.629} & \textbf{0.003} & \textbf{0.030} & \textbf{0.124} \\
        Improvement (\%) & 29.2 & 50.0 & 34.9 & 23.2 \\
        \midrule
        BT (base)    & 0.938 & 0.037 & 0.346 & 0.809 \\
        \textbf{+ WARP}       & \textbf{0.907} & \textbf{0.028} & \textbf{0.279} & \textbf{0.684} \\
        Improvement (\%) & 49.2 & 25.7 & 19.4 & 18.4 \\
        \bottomrule
        \end{tabular}}
    \end{minipage}
\caption{\textbf{White-box privacy with and without WARP.}
Gaussian gradient–diff test on 640 unlearned models. 
ROC curves (left) and AUC/TPRs (right); full ROC plots are in Appendix~\ref{sec:appendix-fullWBROC}. 
}
    \label{fig:wb_privacy}
\end{figure}

We evaluate the Gaussian gradient–difference test of Section~\ref{sec:appendix-glir} under the setup of Section~\ref{sec:exp-setup}, using ResNet-18 on CIFAR-10 and ViT-B/16 on Tiny-ImageNet (full ViT in Appendix~\ref{sec:appendix-wevit}). For the null background we draw $m{=}1000$ non-members from $\mathcal{D}_{\text{test}}$, estimate $(\hat\mu,\hat\Sigma)$ with ridge $\lambda{=}10^{-3}$, and restrict the test to the top-10\% most-variant $\Delta(b)$ coordinates. Figure~\ref{fig:wb_privacy} shows ROC curves with and without teleportation (log–log for low-FPR). Across methods, teleported variants shift toward chance ($\text{TPR}=\text{FPR}$) and flatten between $10^{-5}$–$10^{-2}$ FPR, suppressing high-confidence tails. The strongest effect appears for \textsc{BT} and \textsc{PGU}, which show the largest AUC drops, while \textsc{NGP}, \textsc{SF}, and \textsc{Salun} show smaller but consistent shifts. An exception is \textsc{SCRUB}, where teleportation lowers ROC above $10^{-3}$ FPR but slightly raises TPR at $<10^{-3}$, due to knowledge distillation interacting with symmetry moves that amplify high-leverage directions. Overall, null-space teleportation reduces white-box evidence at low FPR, with a narrow corner case for \textsc{SCRUB}.

\subsection{Reconstruction Attack Results}
\label{sec:exp-recon}

\begin{table}[t]
\centering
\caption{\textbf{Effect of teleportation defense} on reconstruction (ImageNet-1K, ResNet-18, NGP). 
}
\label{tab:recon_teleport}
\setlength{\tabcolsep}{4pt}
\footnotesize
\resizebox{\columnwidth}{!}{
\begin{tabular}{lcccccc}
\toprule
\textbf{Variant} & \textbf{PSNR (dB)} $\uparrow$ & \textbf{LPIPS (VGG)} $\downarrow$ & \textbf{LPIPS (Alex)} $\downarrow$ & \textbf{SSIM} $\uparrow$ & \textbf{Test MSE} $\downarrow$ & \textbf{Feat MSE} $\downarrow$ \\
\midrule
Ours (normal unlearning) 
& $10.74\pm0.31$ 
& $0.56\pm0.013$ 
& $0.34\pm0.015$ 
& $0.12\pm0.008$ 
& $0.10\pm0.007$ 
& $5.39\pm0.50$ \\
Ours + \textit{WARP} 
& $7.38\pm0.40$ 
& $0.68\pm0.01$ 
& $0.46\pm0.02$ 
& $0.08\pm0.006$ 
& $0.21\pm0.02$ 
& $11.28\pm1.89$ \\
\midrule
\textit{Improvement of Defense (\%)} 
& $+45.5$ 
& $+21.2$ 
& $+26.1$ 
& $+31.6$ 
& $+52.4$ 
& $+52.2$ \\
\bottomrule
\end{tabular}
}
\end{table}


\begin{figure}[t]
\centering
\setlength{\tabcolsep}{1pt} 
\renewcommand{\arraystretch}{0.9} 
\resizebox{0.7\linewidth}{!}{
\begin{tabular}{c c c @{\hskip 4pt}|@{\hskip 4pt} c c c}
\textbf{Original} & \textbf{NGP} & \textbf{NGP+WARP} & 
\textbf{Original} & \textbf{NGP} & \textbf{NGP+WARP} \\
\includegraphics[width=0.12\linewidth]{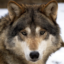} &
\includegraphics[width=0.12\linewidth]{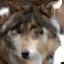} &
\includegraphics[width=0.12\linewidth]{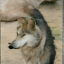} &
\includegraphics[width=0.12\linewidth]{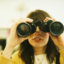} &
\includegraphics[width=0.12\linewidth]{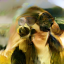} &
\includegraphics[width=0.12\linewidth]{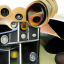} \\
\includegraphics[width=0.12\linewidth]{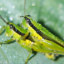} &
\includegraphics[width=0.12\linewidth]{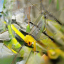} &
\includegraphics[width=0.12\linewidth]{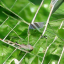} &
\includegraphics[width=0.12\linewidth]{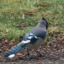} &
\includegraphics[width=0.12\linewidth]{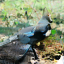} &
\includegraphics[width=0.12\linewidth]{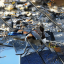} \\
\end{tabular}
}
\caption{\textbf{Reconstructions under NGP vs. NGP+WARP.}} 
\label{fig:recon_ngp_tp}
\end{figure}


We evaluate the white-box reconstruction attack of Section~\ref{sec:method-recon} on \textbf{ImageNet-1K} with \textbf{ResNet-18}, focusing on \textbf{NGP}. 
We reconstruct a \emph{single} forgotten example and average over \textbf{100} uniformly sampled points. 
For each target we use a retain minibatch of size \(|\mathcal{B}_r|=5\). 
Subspace projectors are built per layer from probe gradients: we draw \(m{=}100\) training samples to form \(G_{\mathrm{org}},G_u\), compute thin SVDs, and keep rank \(k\) preserving \(\mathbf{90\%}\) gradient energy. 
We then apply \(\Pi_u^\perp\) and \(\Pi_{\mathrm{org}}\) layerwise to obtain the filtered target \(\tilde g_f\). 
The attacker knows the label \(y_f\) and optimizes \eqref{eq:final-inv} with a TV regularizer~\cite{geiping2020inverting}. 
The matching loss uses \emph{masked} per-layer gradients: for each layer, all coordinates are kept and a weighted dot-product alignment is computed~\cite{fang2023gifd}.

\paragraph{Effect of teleportation.} 
Table~\ref{tab:recon_teleport} and Figure~\ref{fig:recon_ngp_tp} compare reconstruction risk under standard \textsc{NGP} unlearning and its teleported variant using change-of-basis reparameterization. Despite negligible cost, this symmetry-based randomization disrupts reconstruction: even strong generative-prior attacks fail to recover meaningful features of forgotten data. Teleportation injects a symmetry component into $\Delta\theta$ that is nearly orthogonal to per-sample gradients~\cite{armenta2023neural}, reducing alignment with the true forget gradient $g_f$ and driving gradient-matching toward low signal-to-noise optima. It also undermines our subspace-filtered attack (Eq.~\ref{eq:filtered-target}), since teleportation reshapes gradient subspaces so $U_{\mathrm{org}}$ and $U_u$ overlap little, leaving the residual $\Pi_{\mathrm{org}}\Pi_u^{\perp}(-\Delta\theta/\eta)$ small and noisy. In practice, optimization collapses to the generative prior or class cues, yielding label-consistent but semantically poor reconstructions (Figure~\ref{fig:recon_ngp_tp}). Symmetry moves thus decouple updates from data-dependent directions, removing the geometric handle exploited by white-box reconstruction.
This motivates examining how teleportation reshapes the information relationship between parameters and training data (forget-set); a stronger symmetry-aware adaptive reconstruction attack is evaluated in Appendix~\ref{sec:adaptive-recon}, and Appendix~\ref{sec:appendix-theoretical} provides complementary information-theoretic bounds showing how teleportation expands the symmetry orbit and increases expected reconstruction error.

\section{Conclusion and Future Work}

Approximate unlearning provides scalability but introduces privacy risks.
We showed that adversaries with access to original and unlearned models can mount strong membership inference and reconstruction attacks.
These risks stem from two properties: parameter proximity and large forget-set gradient norms, which amplify leakage.

To counter this, we proposed \textsc{WARP}, a symmetry-based defense that interleaves teleportation with unlearning.
By exploiting network symmetries, \textsc{WARP} reduces forget-set gradient energy and displaces parameters in symmetry-preserving directions, weakening both membership and reconstruction leakage while preserving retain performance.
Across six unlearning algorithms, \textsc{WARP} improves privacy, cutting adversarial advantage by up to $64\%$ in black-box and $92\%$ in white-box settings.
We also stress the need for white-box auditing: methods seemingly robust in black-box mode (e.g., SF~\cite{huang2024unified}) still leak when gradients are exposed.
Even simple teleportation disrupts reconstruction, reducing quality by $\sim45\%$.

Our findings suggest future directions.
First, extending Langevin-based privacy analyses to practical unlearning with gradient ascent and symmetry moves is promising.
Second, recent work shows approximate unlearning leaves low-rank weight signals, reversible via re-unlearning~\cite{fan2025towards} or removed by quantization~\cite{zhang2024catastrophic}.
Exploring teleportation directly on weights may help obscure these signals and mitigate reversals.
Finally, as the study of neural network symmetries continues to evolve and more efficient estimators and richer invariance families become available, WARP can directly inherit these advances by instantiating its symmetry map with stronger or cheaper symmetry mechanisms, which further strengthens its resistance to unlearning attacks.

\subsubsection*{Acknowledgments}
The research in this paper was supported by the UKRI  Open Plus Fellowship (EP/W005271/1 Securing the Next Billion Consumer Devices on the Edge) and EU CHIST-ERA GNNs for Network Security and Privacy (GRAPHS4SEC) projects and by the Defense Advanced Research Projects Agency (DARPA), under contract W912CG23C0031.


\bibliographystyle{plainnat} 
\bibliography{references,non_zotero_refrences} 

\appendix

\section{Related Work}
\label{sec:appendix-rw}

\paragraph{Approximate Unlearning.}

The removal of training samples was introduced by \citet{cao2015towards} in the context of the “right to be forgotten.” Retraining from scratch guarantees deletion but is infeasible for modern networks~\cite{vatter2023evolution}. Exact unlearning methods such as SISA~\cite{bourtoule2021machine} and Amnesiac Unlearning~\cite{graves2021amnesiac} lower cost through partitioning or selective retraining but still require storage and scale poorly~\cite{nguyen2022survey}.

Approximate unlearning directly updates the trained model to erase the \textit{forget-set}~\cite{kurmanji2023towards,chundawat2023can,golatkar2020forgetting,thudi2022unrolling}. These methods aim to match the predictive distribution of retraining while preserving retain accuracy, offering a practical forgetting–utility trade-off with large savings in computation and memory.
Related methods target structured forget sets such as entire classes~\cite{chundawat2023zero,seo2025revisiting}, or tackle the harder instance-wise setting, where arbitrary samples must be removed~\cite{fan2024challenging,cha2024learning,zhao2024makes}. Many approaches rely on training-time side information like per-sample gradients~\cite{qiao2024hessian,mehta2022deep}, or assume specialized regimes with adversarial robustness~\cite{liu2023muter} or differential-privacy noise~\cite{chien2024langevin,chien2024certified,sepahvand2025leveraging}. While effective, these assumptions add resource overhead, limiting post-hoc use. Our focus, therefore, is training-agnostic, instance-wise unlearning that takes only a pretrained classifier and a designated \textit{forget-set}, without stored gradients or training modifications~\cite{kurmanji2023towards,thudi2022unrolling}.

\paragraph{Privacy Unlearning.}
The effectiveness of approximate unlearning is accessed by two criteria: (I) the model should maintain accuracy on non-forgotten data, and (II) its outputs on the \textit{forget-set} should be indistinguishable from those of a model with no access to it~\cite{naderloui2025rectifying}. In practice, this is evaluated using MIA~\cite{shokri2017membership,carlini2022membership}, which test whether a sample was part of training. Effective unlearning removes this membership advantage on the \textit{forget-set}.

Most prior work evaluates unlearning by comparing outputs of the unlearned model to a retrained reference on the \textit{forget-set}~\cite{cadet2024deep,kurmanji2023towards,hayes2025inexact,georgiev2024attribute,naderloui2025rectifying}.
This black-box view ignores parameters, even though in practice—such as MU on edge devices—an adversary may access both original and unlearned models.
Some studies consider this stronger setting: \cite{chen2021machine} showed that output-comparison across models can detect unlearning, while others adapted reconstruction to infer forgotten data from parameter differences\cite{salem2020updates,hu2024learn,bertran2024reconstruction}.
These works, however, are limited to toy models and simplified updates, leaving privacy risk under realistic conditions unclear.
In particular, they do not capture the robustness of recent multi-step approximate methods such as NGP or SCRUB~\cite{kurmanji2023towards,chundawat2023can}, where iterative updates with retain-set supervision weaken inversion of \textit{forget-set} gradients.
We address this gap with stronger white-box MIAs (Sec.\ref{sec:appendix-glir}) and DRAs (Sec.\ref{sec:method-recon}) tailored to realistic unlearning.

\paragraph{Neural Network Symmetry.}
Continuous symmetries in neural networks arise when transformations of the weights leave the output unchanged.
Such invariances, a byproduct of overparameterization, mean that many distinct weight configurations represent the same function~\cite{gluch2021noether}.
They appear in homogeneous activations~\cite{badrinarayanan2015symmetry,du2018algorithmic,maheri2025telesparse} and in components like softmax and batch normalization~\cite{kunin2020neural}, and have been linked to both improved optimization and generalization.
Neural teleportation leverages these symmetries by relocating parameters within the loss-invariant level set, yielding equivalent models that accelerate optimization\cite{armenta2021representation,armenta2023neural}.
Building on this idea,\cite{zhao2022symmetry} introduced symmetry teleportation, which searches for beneficial relocations while providing a framework for analyzing symmetry-induced minima.
More recently, teleportation with null-space gradient projection~\cite{wu2025teleportation} leverages the input null space: moving along projected directions leaves the function unchanged, directly aligning with the goal of teleportation.

\section{U-LiRA Algorithm}
\label{sec:appendix-ulira}

To evaluate sample-wise privacy leakage, we employ the U-LiRA attack~\cite{cadet2024deep,hayes2025inexact}, an adaptation of LiRA~\cite{carlini2022membership} to the unlearning setting. 
The attack relies on shadow models to estimate two distributions for a target sample $(x,y)$: 
(i) models trained with $(x,y)$ and subsequently unlearned using the same unlearning algorithm, and 
(ii) models trained from scratch without $(x,y)$. 
By fitting simple parametric models (e.g., Gaussians) to the outputs of these shadow ensembles, U-LiRA computes the likelihood of the target model’s output under each case and classifies membership according to a likelihood ratio test. 

Crucially, all shadow models are trained with the \emph{same unlearning algorithm and hyperparameters} as the audited model. 
This makes U-LiRA effectively an \emph{adaptive attack}, since it tailors the proxies to each specific unlearning method. 
Such alignment minimizes miscalibration between shadow and target models and is known to increase attack success~\cite{cretu2023investigating}. 
Therefore, U-LiRA serves as a strong black-box baseline for auditing privacy in unlearning.
A complete description of the algorithm can demonstrated in Algorithm~\ref{alg:ulira}.

\begin{algorithm}[H]
\caption{U-LiRA (used for auditing unlearning)}
\label{alg:ulira}
\begin{algorithmic}[1]
\Require Target model $\theta^\ast$, learning algorithm $A$, unlearning algorithm $U$, number of shadows $T$, sample $(x,y)$
\Ensure Prediction: is $(x,y)$ in the \textit{forget-set}?
\State Initialize empty lists $O \gets \{\}$ and $\hat{O} \gets \{\}$
\For{$t = 1$ to $T$}
    \State Sample dataset $D$ containing $(x,y)$
    \State Train $\theta^0 \gets A(D)$
    \State Unlearn $\theta^f \gets U(\theta^0,\{(x,y)\})$
    \State Retrain $\theta^r \gets A(D \setminus \{(x,y)\})$
    \State Record $O[t] \gets f(x;\theta^f)_y$, \ \ $\hat{O}[t] \gets f(x;\theta^r)_y$
\EndFor
\State Fit Gaussian $(\mu,\sigma^2)$ to $O$, and $(\hat{\mu},\hat{\sigma}^2)$ to $\hat{O}$
\State Compute $o^\ast \gets f(x;\theta^\ast)_y$
\State Compute likelihood ratio:
\[
p_{\mathrm{member}} = 
\frac{\mathcal{N}(o^\ast;\mu,\sigma^2)}{\mathcal{N}(o^\ast;\mu,\sigma^2)+\mathcal{N}(o^\ast;\hat{\mu},\hat{\sigma}^2)}
\]
\If{$p_{\mathrm{member}} > 0.5$}
    \State \Return ``member of training''
\Else
    \State \Return ``non-member''
\EndIf
\end{algorithmic}
\end{algorithm}

\section{White-box Gaussian Gradient--Difference Attack Algorithm}
\label{sec:appendix-glir}

Guided by the GLiR framework of \cite{leemann2023gaussian}, we formulate sample-wise MIA in the unlearning setting as a binary hypothesis test that uses \emph{both} the pre-unlearning and post-unlearning models. Let \(A\) denote the training algorithm, \(U\) the unlearning operator, \(S\) the original training set, and \(F\subseteq S\) the forget subset. For a candidate example \((x,y)\), we test
\[
\begin{aligned}
H_0 &: (x,y)\sim\mathcal D_{\text{test}}, \quad (\theta^{\mathrm{org}},\theta^{u})=\big(A(S),\,U(A(S),F)\big)\ \text{with } x\notin S,\ x\notin F,\\
H_1 &: (x,y)\in\mathcal D_{\text{forg}}, \quad (\theta^{\mathrm{org}},\theta^{u})=\big(A(S),\,U(A(S),F)\big)\ \text{with } x\in S\ \text{and}\ x\in F,
\end{aligned}
\]
i.e., under \(H_1\) the point participated in the original training and was subsequently targeted by unlearning, whereas under \(H_0\) it was never used. With white-box access, we form the gradient-difference statistic
\[
\Delta(x)\;=\;\nabla_{\theta}\,\ell\!\left(f(x;\theta^{u}),y\right)\;-\;\nabla_{\theta}\,\ell\!\left(f(x;\theta^{\mathrm{org}}),y\right)\in\mathbb{R}^{d}.
\]
Assuming access to draws from \(\mathcal D_{\text{test}}\), the adversary builds a background set \(B=\{(b_i,\tilde y_i)\}_{i=1}^{m}\sim\mathcal D_{\text{test}}^{m}\) and estimates the null (non-member) distribution of gradient differences via
\[
\hat\mu=\frac{1}{m}\sum_{i=1}^{m}\Delta(b_i),\qquad 
\hat\Sigma=\frac{1}{m-1}\sum_{i=1}^{m}\big(\Delta(b_i)-\hat\mu\big)\big(\Delta(b_i)-\hat\mu\big)^{\!\top}.
\]
Following \cite{leemann2023gaussian}, we adopt a Gaussian model for \(\Delta(x)\) under \(H_0\) and compute the whitened Mahalanobis statistic
\[
s(x)\;=\;\big(\Delta(x)-\hat\mu\big)^{\!\top}\!\big(\hat\Sigma+\lambda I\big)^{-1}\!\big(\Delta(x)-\hat\mu\big),
\]
with a small ridge \(\lambda>0\) for numerical stability. Under \(H_0\), \(s(x)\) is approximately \(\chi^2_d\)-distributed, yielding the log-\(p\)-value score
\[
A'(x,y)\;=\;-\log\!\Big(1-F_{\chi^2_{d}}\!\big(s(x)\big)\Big),
\]
and the final decision rule
\[
A(x,y)\;=\;\mathbb{I}\!\left[A'(x,y)>\tau\right],
\]
predicting \emph{forgotten} when the score exceeds threshold \(\tau\). 
Algorithm~\ref{alg:ggd-unlearn} provides the full details of the proposed attack.  

\paragraph{Relation to GLiR and unlearning specifics.}
GLiR aggregates evidence across training steps by comparing per-step sample gradients to a Gaussian background of batch gradients; our adaptation replaces the (typically unavailable) per-step trajectory with the two-model contrast \(\Delta(x)\). The geometry is unchanged: Evidence corresponds to the squared norm of the whitened difference, \(\|(\hat\Sigma+\lambda I)^{-1/2}\Delta(x)\|_2^2\). Unlike standard MIAs that query a single model, the test exploits white-box access to \(\theta^{\mathrm{org}}\) and \(\theta^{u}\) and targets the unlearning-specific alternative \(H_1\) (membership in both \(S\) and \(F\)), providing a simple and powerful auditor for residual leakage after unlearning.

\begin{algorithm}[H]
\caption{White-box Gaussian Gradient--Difference Attack for Unlearning Audit}
\label{alg:ggd-unlearn}
\begin{algorithmic}[1]
\setlength{\itemsep}{2pt}
\Require Pre-unlearning model $\theta^{\mathrm{org}}$, post-unlearning model $\theta^{u}$, candidate sample $(x,y)$, loss $\ell$, predictor $f(\cdot;\theta)$, background sampler $\mathcal{S}_{\text{test}}(m)$ that returns $m$ i.i.d.\ draws from $\mathcal{D}_{\text{test}}$
\Require Hyperparameters: background size $m$, repetitions $T$, ridge $\lambda>0$, decision threshold $\tau$
\State $S \leftarrow 0$ \Comment{initialize cumulative evidence}
\For{$t=1$ {\bf to} $T$}
    \State $B_t=\{(b_i,\tilde y_i)\}_{i=1}^{m} \leftarrow \mathcal{S}_{\text{test}}(m)$ 
    \Comment{if labels are unavailable, set $\tilde y_i\!=\!\arg\max f(b_i;\theta^{\mathrm{org}})$}
    \For{$i=1$ {\bf to} $m$}
        \State $\Delta_i \leftarrow \nabla_{\theta}\ell(f(b_i;\theta^{u}),\tilde y_i) - \nabla_{\theta}\ell(f(b_i;\theta^{\mathrm{org}}),\tilde y_i) \in \mathbb{R}^{d}$
    \EndFor
    \State $\hat{\mu}_t \leftarrow \frac{1}{m}\sum_{i=1}^{m}\Delta_i$
    \State $\hat{\Sigma}_t \leftarrow \frac{1}{m-1}\sum_{i=1}^{m}(\Delta_i-\hat{\mu}_t)(\Delta_i-\hat{\mu}_t)^{\top}$
    \State $\hat{\Sigma}_{t,\lambda}\leftarrow \hat{\Sigma}_t+\lambda I_d$ \Comment{ridge for numerical stability}
    \State $\Delta_x \leftarrow \nabla_{\theta}\ell(f(x;\theta^{u}),y)-\nabla_{\theta}\ell(f(x;\theta^{\mathrm{org}}),y)$
    \State $v \leftarrow \Delta_x-\hat{\mu}_t$
    \State Solve $\hat{\Sigma}_{t,\lambda}\, w = v$ for $w$ (e.g., Cholesky); \quad $s_t \leftarrow v^\top w$
    \State $\ell_t \leftarrow -\log\!\big(1 - F_{\chi^2_{d}}(s_t)\big)$ \Comment{log tail $p$-value under $H_0$}
    \State $S \leftarrow S + \ell_t$
\EndFor
\State \textbf{return} $\textsc{Forgotten}$ if $S>\tau$; else $\textsc{Test}$
\end{algorithmic}
\end{algorithm}

\section{Alternative symmetry: change-of-basis neural teleportation.}
\label{sec:appendix-alternative}
We also support the ``neural teleportation'' family of symmetry moves from \cite{armenta2023neural}. 
Let $\tau_a>0$ be a scale attached to neuron $a$. 
For an edge $a\!\to\! b$ with weight $\theta_{ab}$ the teleported weight is
\begin{equation}
\label{eq:nt-weight}
\theta'_{ab}=\frac{\tau_b}{\tau_a}\,\theta_{ab},
\end{equation}
and if $f_d$ is the activation at neuron $d$ then the teleported activation is
\begin{equation}
\label{eq:nt-activation}
g_d(x)=\tau_d\,f_d\!\left(\frac{x}{\tau_d}\right),
\end{equation}

which preserves the function for positively homogeneous activations and extends naturally to batch-norm scales \cite{armenta2023neural}.
In a subset of experiments, we choose $\tau$ to further increase parameter dispersion under loss invariance (outputs unchanged), thereby weakening the differencing signal and making reconstruction harder; most results rely on the null space instantiation in \eqref{eq:teleport-update}. In the experimental section, it is explicitly indicated when both mechanisms are enabled.

\section{Baselines}
\label{sec:appendix-baselines}

We evaluate our teleportation-based defense as a \emph{plug-and-play} module layered on top of several state-of-the-art approximate post-hoc unlearning methods. These baselines are representative of the most widely studied approaches in recent literature, requiring no access to training-time auxiliary statistics (e.g., per-sample gradients) and operating directly on a pretrained model. Specifically, we consider:

\begin{enumerate}
    
    \item \textbf{NegGrad+ (NGP)}~\cite{kurmanji2023towards}:  
    An improved variant of GA that incorporates a regularization term on the retain-set. The method balances ascent on the \textit{forget-set} with descent on the retain-set, aiming to preserve model utility while unlearning.
    
    \item \textbf{SCRUB}~\cite{kurmanji2023towards}:  
    A knowledge distillation approach that aligns the unlearned model with the original model on the retain-set via a consistency loss, while simultaneously removing the \textit{forget-set}’s influence. SCRUB represents one of the most competitive baselines in recent evaluations.

    \item \textbf{SalUn}~\cite{fan2023salun}:  
    A saliency-based unlearning method that directs updates to a subset of weights deemed \emph{salient} for forgetting, identified via gradient-based weight saliency maps.  
    By restricting optimization to these salient weights, SalUn enhances stability and efficiency compared to updating the full parameter set, and aims to reduce the gap to exact retraining.
    
    \item \textbf{Projected Gradient Unlearning (PGU)}~\cite{hoang2024learn}:  
    A method that projects the gradient ascent update for the \textit{forget-set} onto a subspace orthogonal to retain-set, thereby mitigating catastrophic forgetting. PGU is particularly relevant as it addresses gradient-level entanglement between forget and retain data.
    
    \item \textbf{BadTeacher (BT)}~\cite{chundawat2023can}:  
    A recent distillation-based unlearning method where the unlearned model (student) is trained against a deliberately corrupted teacher that provides noisy or adversarial labels for the \textit{forget-set}, encouraging the student to erase their influence while preserving performance on the retain-set.

    \item \textbf{SRF-ON (SF)}~\cite{huang2024unified}:  
A geometry-aware unlearning method that decomposes updates into forget ascent, retain descent, and saliency modulation.  
By embedding updates into the manifold of retain data and approximating Hessian modulation with a fast–slow strategy, SRF-ON improves stability–plasticity trade-offs and enables efficient large-scale unlearning.

\end{enumerate}

These methods span the main paradigms of approximate unlearning—gradient ascent, retain-aware regularization, distillation, and projection-based updates—making them representative state-of-the-art baselines.

\section{Additional White-box Results on CIFAR-10}
\label{sec:appendix-fullWBROC}

Figure~\ref{fig:roc_full} reports the complete ROC curves for the Gaussian gradient–diff test, covering the entire FPR range. 
These correspond to the same 640 unlearned models as in Figure~\ref{fig:wb_privacy}, shown here without zoom to provide the full view.

\begin{figure}[h]
    \centering
    \includegraphics[width=0.7\linewidth]{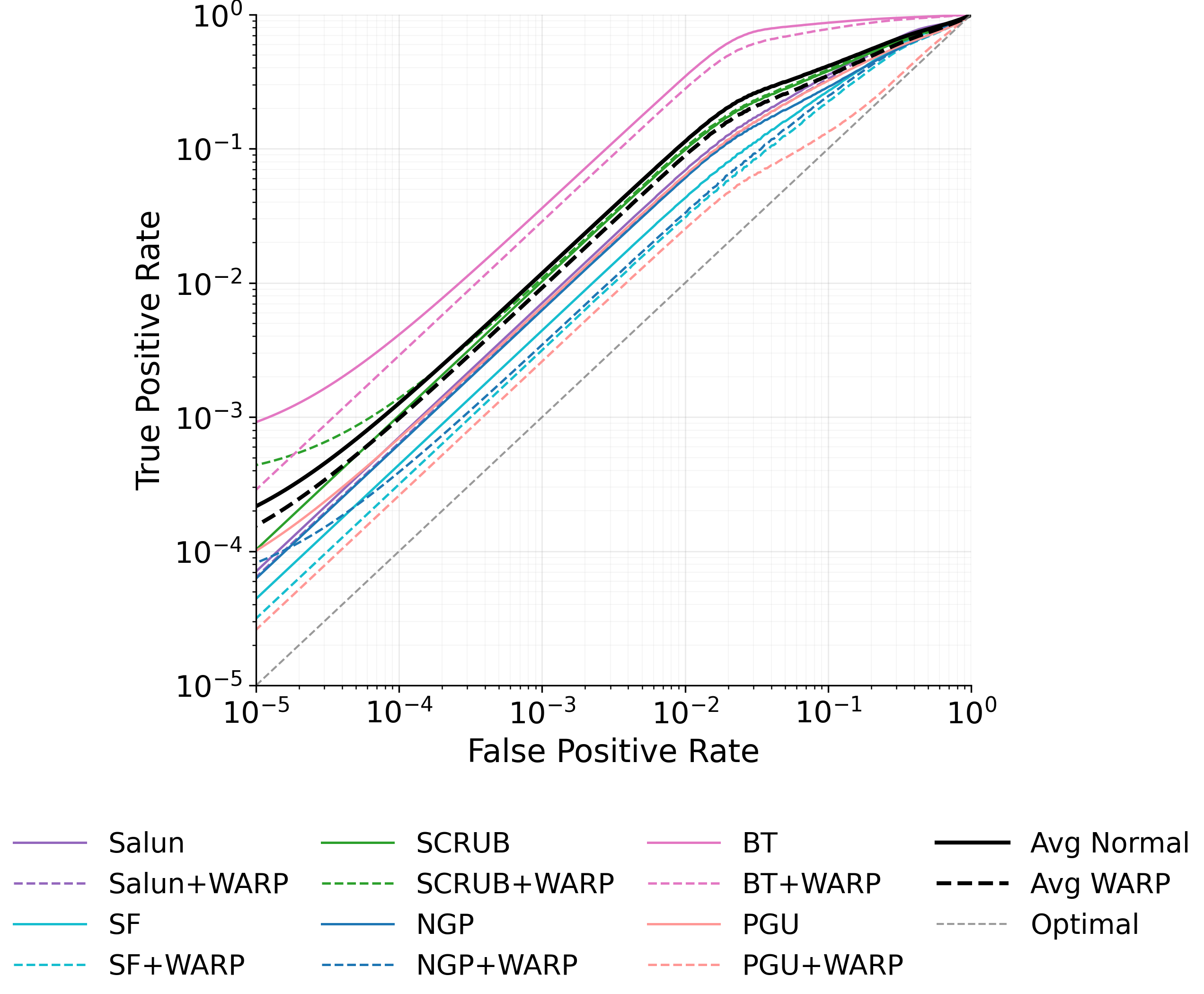}
    \caption{\textbf{Complete ROC curves for the white-box Gaussian gradient–diff test.} 
    Averaged over 640 unlearned models, identical to Figure~\ref{fig:wb_privacy}. 
    Lower curves (closer to the random-guess diagonal) indicate stronger privacy.}
    \label{fig:roc_full}
\end{figure}

\section{Reconstruction Attack Baselines and Comparison.}
Table~\ref{tab:recon_core} compares three strategies for unlearning:
(i) \emph{simple differencing}, directly inverting $\Delta\theta$\cite{hu2024learn,bertran2024reconstruction};
(ii) \emph{generative inversion} (GIFD)\cite{fang2023gifd} applied to $\Delta\theta$; and
(iii) \emph{Ours}, which adds \emph{orthogonal subspace filtering} (Eq.~\eqref{eq:filtered-target}) to a generative backbone.
Results average 100 forgotten samples on ImageNet-1K with ResNet-18 under NGP unlearning.

\begin{table}[t]
\centering
\caption{\textbf{Reconstruction on ImageNet-1K (ResNet-18), NGP (no defense).}
Averages over 100 forgotten samples. Higher is better for PSNR/SSIM; lower is better for LPIPS/MSE.}
\label{tab:recon_core}
\resizebox{\textwidth}{!}{%
\begin{tabular}{lcccccc}
\toprule
\textbf{Method} & \textbf{PSNR (dB)} $\uparrow$ & \textbf{LPIPS (VGG)} $\downarrow$ & \textbf{LPIPS (Alex)} $\downarrow$ & \textbf{SSIM} $\uparrow$ & \textbf{Test MSE} $\downarrow$ & \textbf{Feat MSE} $\downarrow$ \\
\midrule
GIFD \cite{fang2023gifd}
& $8.28 \pm 0.28$ & $0.630 \pm 0.012$ & $0.448 \pm 0.016$ & $0.098 \pm 0.007$ & $0.174 \pm 0.012$ & $6.725 \pm 0.506$ \\
\textbf{Ours} (subspace-filtered + GFID) 
& $\mathbf{10.74} \pm 0.31$ & $\mathbf{0.564} \pm 0.013$ & $\mathbf{0.345} \pm 0.015$ & $\mathbf{0.117} \pm 0.008$ & $\mathbf{0.100} \pm 0.007$ & $\mathbf{5.388} \pm 0.497$ \\
\midrule
\textbf{Improvement (\%)} & $+29.7$ & $+10.5$ & $+22.9$ & $+19.4$ & $+42.5$ & $+19.9$ \\
\bottomrule
\end{tabular}%
}
\end{table}

\section{Additional Results: ViT on Tiny-ImageNet}
\label{sec:appendix-wevit}

To extend the white-box analysis of Section~\ref{sec:exp-whitebox-mia}, we evaluate Vision Transformer models trained on Tiny-ImageNet. We adopt ViT-B/16 as the base architecture and follow the same setup described in Section~\ref{sec:exp-setup}, with the \textit{forget-set} constructed by randomly sampling $1\%$ of the training data and the retain-set consisting of the remainder. All models are trained with SGD and standard augmentations for ViT training. Unlearning is applied with NGP (\textsc{NGP}) and its teleported variant (\textsc{NGP+WARP}).

\begin{table}[t]
\centering
\small
\setlength{\tabcolsep}{4pt}
\caption{\textbf{White-box membership inference risk with and without teleportation (ViT, Tiny-ImageNet).} 
Results are reported as mean $\pm$ standard deviation across five splits. 
Improvements are computed as advantage reduction over random guessing.}
\vspace{2mm}
\begin{tabular}{lccccc}
\toprule
Method & AUC & TPR@0.01\% & TPR@0.1\% & TPR@1\% & TPR@5\% \\
\midrule
NGP (base)       
& $0.792 \pm 0.019$ 
& $0.0019 \pm 0.001$ 
& $0.0188 \pm 0.009$ 
& $0.178 \pm 0.072$ 
& $0.444 \pm 0.035$ \\
\,+ WARP     
& $\mathbf{0.755 \pm 0.019}$ 
& $\mathbf{0.0008 \pm 0.000}$ 
& $\mathbf{0.0079 \pm 0.003}$ 
& $\mathbf{0.075 \pm 0.027}$ 
& $\mathbf{0.302 \pm 0.054}$ \\
Improvement (\%) 
& \textbf{12.7} 
& \textbf{61.1} 
& \textbf{61.2} 
& \textbf{61.2} 
& \textbf{36.1} \\
\bottomrule
\end{tabular}
\label{tab:wb_vit}
\end{table}

\begin{figure}[t]
    \centering
    \begin{minipage}{0.48\linewidth}
        \centering
        \includegraphics[width=\linewidth]{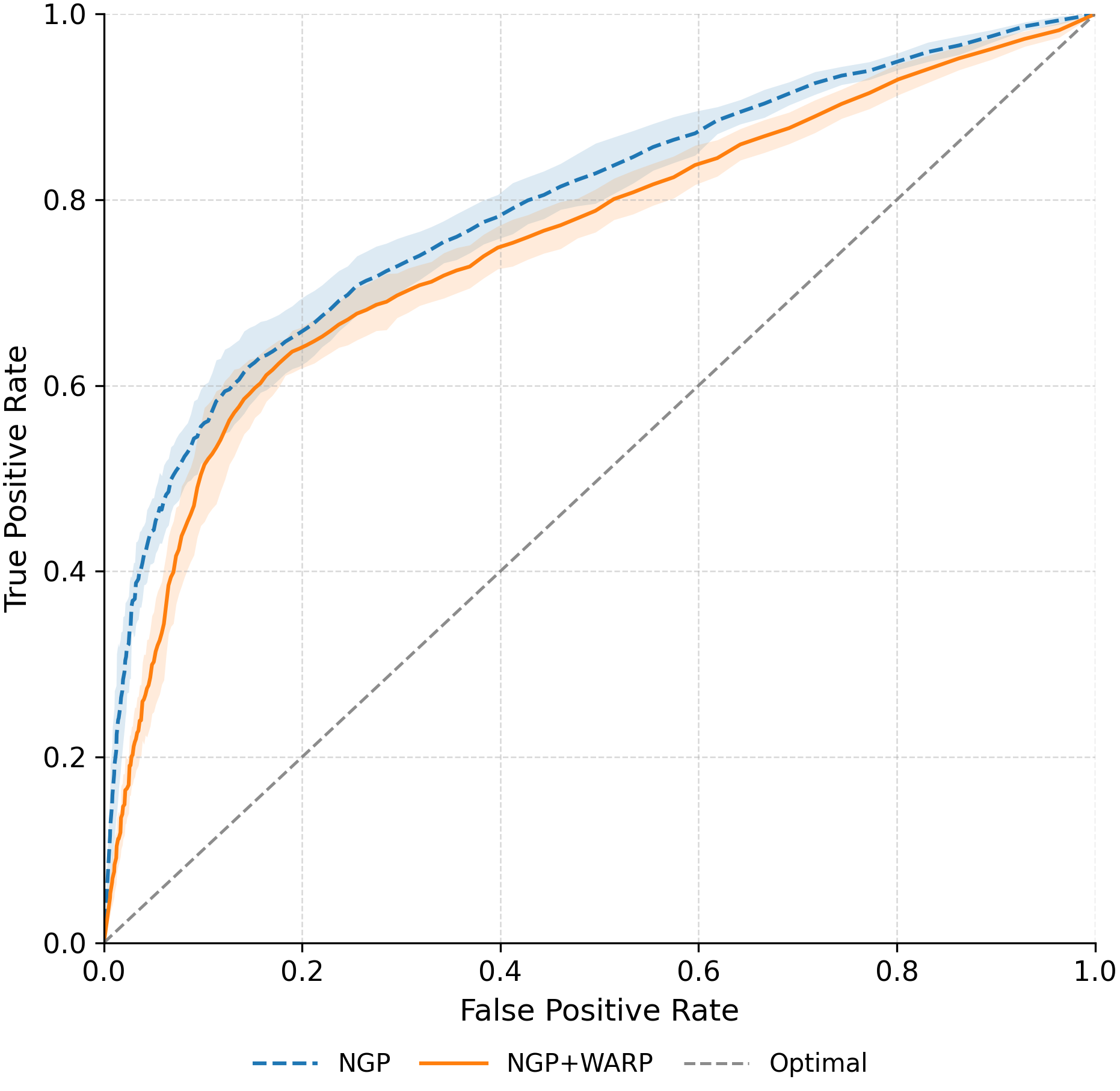}
        \vspace{0.3em}
        (a) ROC curve (linear scale)
    \end{minipage}\hfill
    \begin{minipage}{0.48\linewidth}
        \centering
        \includegraphics[width=\linewidth]{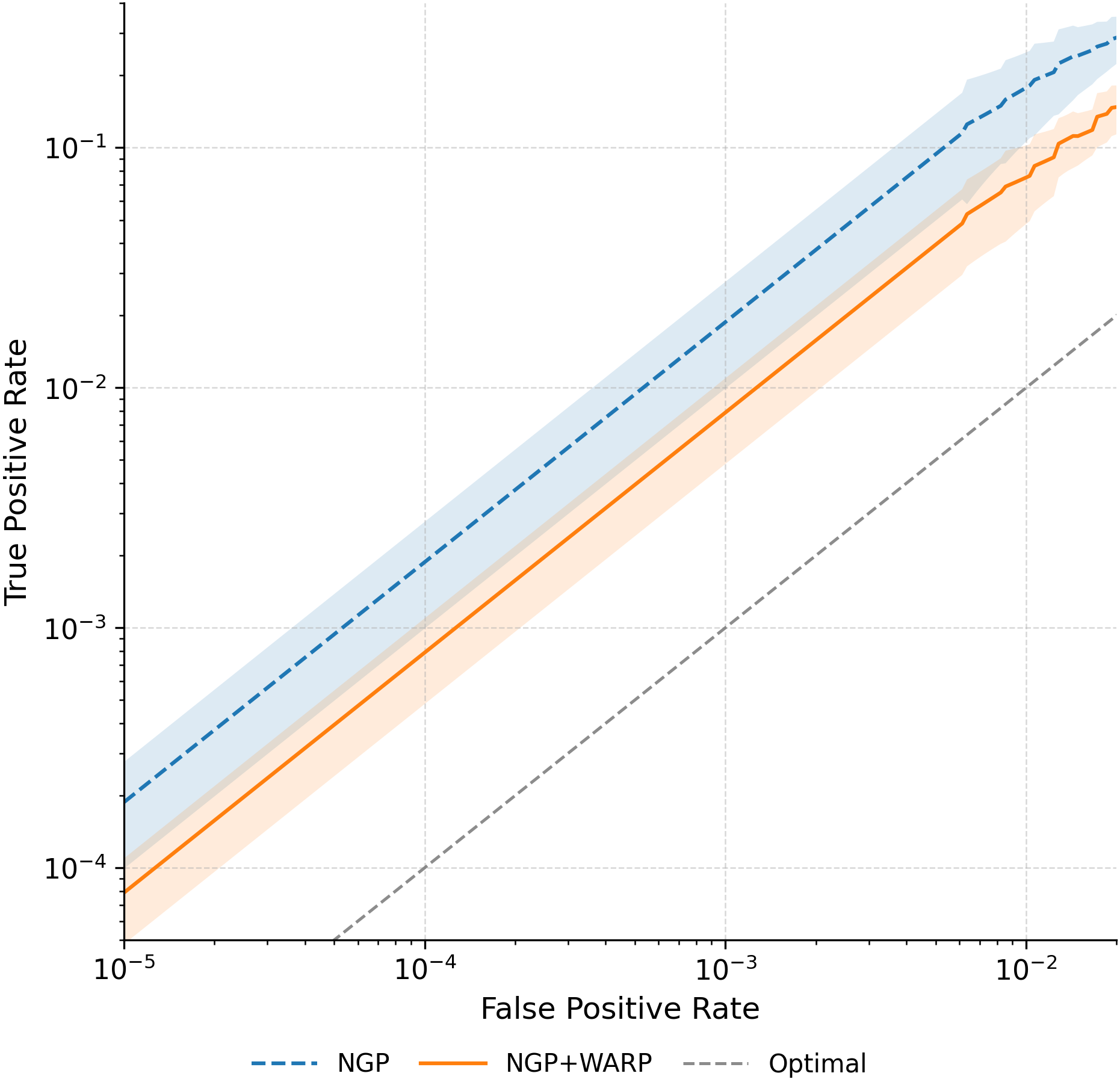}
        \vspace{0.3em}
        (b) ROC curve (log–log scale, low-FPR region)
    \end{minipage}
    \caption{\textbf{White-box ROC for the Gaussian gradient–difference test on ViT-B/16 (Tiny-ImageNet).}  
    Each curve is averaged over five different forget-set splits, with shaded regions showing the standard deviation.  
    Both figures compare \textsc{NGP} and \textsc{NGP+WARP}; (a) presents the full ROC on a linear axis, while (b) zooms into the low-FPR regime on log–log scale, which is the operational region for practical attacks.}
    \label{fig:glir_vit}
\end{figure}

As shown in Table~\ref{tab:wb_vit} and Figure~\ref{fig:glir_vit}, \textsc{WARP} substantially reduces attack success across all thresholds, with the largest relative gains at low false-positive rates where practical attacks operate. These results confirm that the symmetry-based defense proposed in \textsc{WARP} extends effectively to transformer models, demonstrating applicability beyond convolutional architectures.

\section{Privacy-Utility Trade-off}
\label{sec:appendix-exp-ngp-tradeoff}

\begin{figure}
    \centering
    \includegraphics[width=0.8\linewidth]{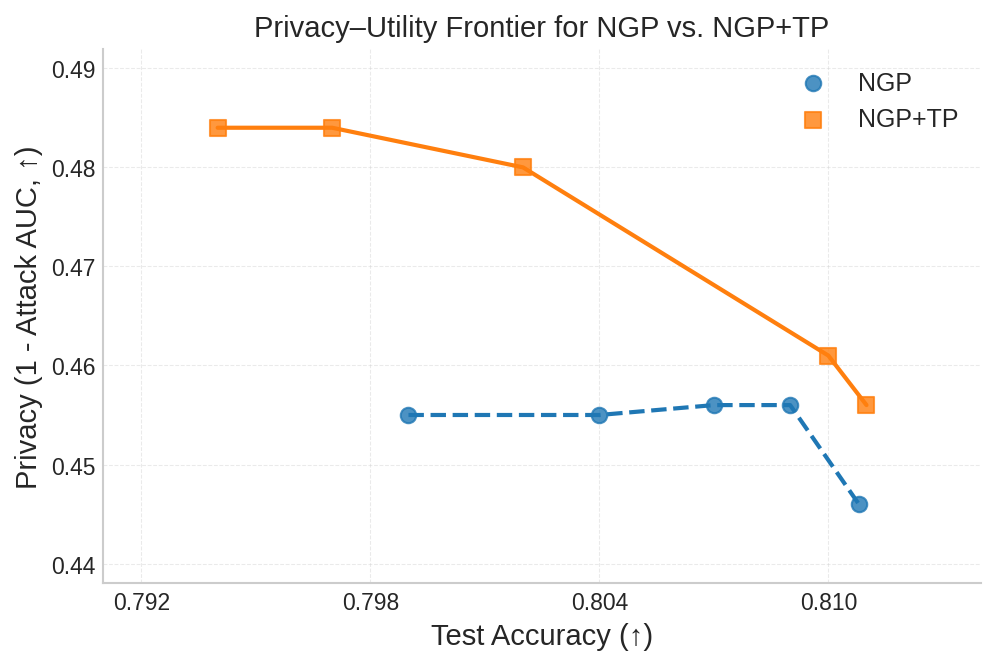}
    \caption{\textbf{Privacy--utility trade-off for \textsc{NGP} with and without WARP.} 
Each point is a hyperparameter trial, with privacy (1--AUC) averaged over $640$ shadow models (64 shadows $\times$ 10 forget sets) under the U\hbox{-}LiRA protocol. 
Points further to the right (higher accuracy) and upward (higher privacy) indicate better trade-offs.}
    \label{fig:tradeoff}
\end{figure}

Improving privacy in unlearning often comes at the cost of reduced model accuracy. Since test accuracy on the retain-set is one of the primary criteria for evaluating unlearning algorithms, it is critical to examine whether the proposed defense introduces unfavorable trade-offs. 

We focus this analysis on \textsc{NGP}, as Figure~\ref{fig:spider_six_methods} indicates that teleportation applied to \textsc{NGP} yields the most noticeable accuracy drop (roughly one percentage point), whereas for other methods accuracy remains stable or even improves. To probe this trade-off more carefully, we follow the hyperparameter tuning procedure described in Section~\ref{sec:exp-setup} and select the top $20$ trials with the highest validation score. From this pool we examine: (i) the single best-performing trial reported in Figure~\ref{fig:spider_six_methods}, (ii) the two trials with the highest validation accuracy, and (iii) the two trials with the lowest validation attack AUC. 

Figure~\ref{fig:tradeoff} plots test accuracy against privacy ($1{-}\text{AUC}$ of black-box MIA, higher is better) for \textsc{NGP} and \textsc{NGP+WARP} across the selected hyperparameter trials. 
The overall trade-off is clear: higher accuracy typically coincides with lower privacy. 
Yet teleportation consistently shifts the Pareto frontier upward, delivering strictly better privacy at nearly every accuracy level. 
While \textsc{NGP} saturates around privacy $\approx 0.455$, teleportation extends this frontier up to $ 0.484$, breaking through the baseline ceiling. 
At the highest-accuracy setting, teleportation still provides a $\sim\!18\%$ reduction in attack advantage over random, demonstrating that even at stringent accuracy targets the defense yields nontrivial privacy gains. 
Across the frontier, improvements remain stable, confirming that teleportation meaningfully reshapes the privacy–utility boundary in favor of the defender.

\section{Runtime Analysis}
\label{sec:appendix-runtime}
\begin{figure}
    \centering
    \includegraphics[width=0.8\linewidth]{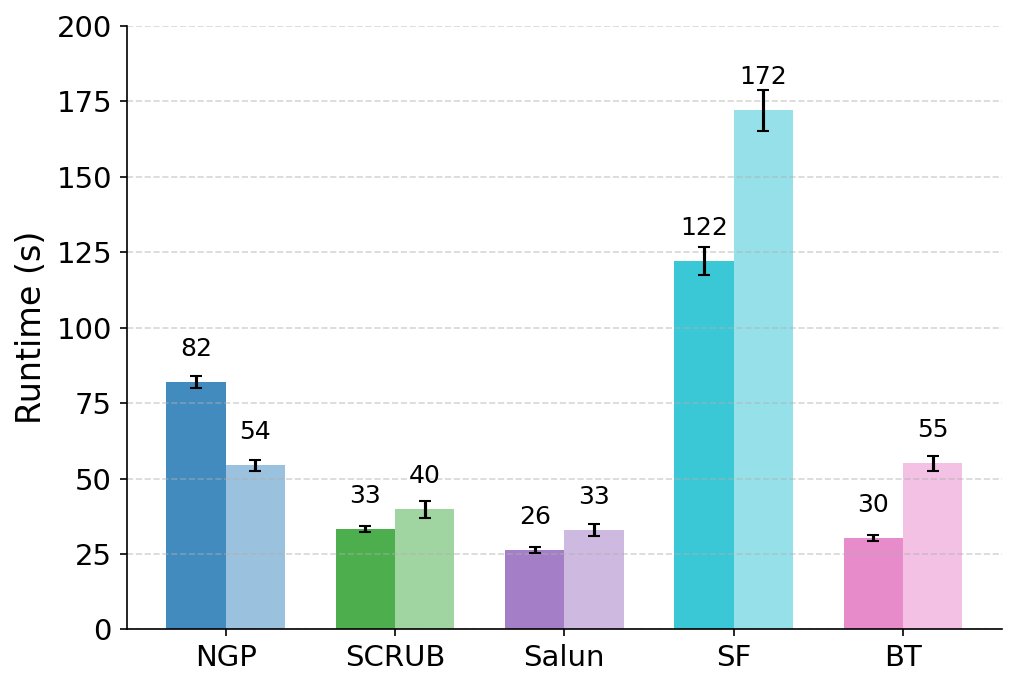}
    \caption{\textbf{Runtime overhead of teleportation.} 
Average runtimes (seconds) of unlearning algorithms with and without the \textsc{WARP} plugin, 
evaluated on CIFAR-10 with ResNet-18. 
Each bar reports the mean over five runs, with error bars showing standard deviations.}
    \label{fig:runtime}
\end{figure}

In this appendix we focus on the retain–null-space instantiation of $T_\phi$, which is the only variant that requires explicit SVDs; the change-of-basis teleportation in Appendix~\ref{sec:appendix-alternative} is SVD-free and without its computational overhead as a result.  Moreover, Section~\ref{sec:fastwarp} introduces FastWARP, which replaces full SVD with randomized low-rank approximations and further reduces this overhead.

We benchmark the runtime of our teleportation defense across unlearning algorithms on a machine equipped with an NVIDIA GeForce RTX 4090 GPU (24 GB memory) and an Intel 13th Gen Core i9-13900KF CPU (24 cores, 32 threads, base 3.0 GHz, boost up to 5.8 GHz).  
Each experiment was repeated five times, and Figure~\ref{fig:runtime} reports averages with standard deviations in the caption.  
All algorithms were run with the hyperparameters used in Table~\ref{tab:bb_mm_advred} and Figure~\ref{fig:wb_privacy}, ensuring runtime reflects the same conditions as our privacy–utility evaluations.  

For this particular SVD-based instantiation, teleportation increases runtime by approximately $+27\%$ relative to the baseline on average, reflecting the overhead of constructing the retain subspace.  
The main exception is \textsc{NGP}, where teleportation reduces runtime by about $-32\%$, due to more stable updates that in turn lower the required number of unlearning epochs.  
Since subspace computation can be pre-computed offline and does not need to be repeated after every teleportation step, this overhead can be amortized in practice. 
While updating the retain subspace less frequently can reduce cost, the primary computational overhead from full SVD is addressed directly by an approximate low-rank implementation (Appendix~\ref{sec:fastwarp}), which removes the per-step bottleneck entirely.

\section{Teleportation-based Unlearning Algorithm}
\label{sec:appendix-teleport}

In Algorithm~\ref{alg:tau}, $T_\phi$ denotes an abstract symmetry operator; in our experiments we instantiate it either with retain–null-space
teleportation or with change-of-basis teleportation, but any other loss-preserving symmetry could be used in its
place.

\begin{algorithm}[H]
\caption{\textsc{WARP} (retain–null-space instantiation): teleportation-augmented gradient-based unlearning.}
\label{alg:tau}
\begin{algorithmic}[1]
\setlength{\itemsep}{2pt}
\Require $\theta^{\mathrm{org}}$, $\mathcal{D}_{\mathrm f}$, $\mathcal{D}_{\mathrm r}$, $\ell_{\mathrm f}$, $\ell_{\mathrm r}$, $\lambda,\beta$, $\{\eta_t\}$, $\eta_{\mathrm{tel}}$, $k$, $S$ or $\tau_{\text{grad}}$, $\sigma^2$, $\varepsilon$, $T$
\State $\theta_0 \gets \theta^{\mathrm{org}}$
\For{$t=0,\dots,T-1$}
  \State sample $\mathcal{B}_{\mathrm f}\subset\mathcal{D}_{\mathrm f}$,\; $\mathcal{B}_{\mathrm r}\subset\mathcal{D}_{\mathrm r}$
  \State $\theta_{t+\frac12}\gets\theta_t-\eta_t\big(\nabla_\theta \ell_{\mathrm f}(\theta_t\mid\mathcal{B}_{\mathrm f})+\lambda\,\nabla_\theta \ell_{\mathrm r}(\theta_t\mid\mathcal{B}_{\mathrm r})\big)$
  \If{$(t\bmod S=0)\ \lor\ \|\nabla_\theta \ell_{\mathrm f}(\theta_{t+\frac12}\mid\mathcal{B}_{\mathrm f})\|_2>\tau_{\text{grad}}$}
    \For{layer $\ell$}
      \State build $R_\ell(\mathcal{B}_{\mathrm r})$; \quad $R_\ell=U_\ell\Sigma_\ell V_\ell^\top$ (SVD)
      \State $B_\ell\gets U_{\ell,1:k}$;\quad $\Pi_\ell^\perp\gets I-B_\ell B_\ell^\top$
    \EndFor
    \State $\mathcal{L}_{\mathrm{tel}}(\theta)=\tfrac12\!\sum_{(x,y)\in\mathcal{B}_{\mathrm f}}\!\|\nabla_\theta \ell(f(x;\theta),y)\|_2^2-\tfrac{\beta}{2}\|\theta-\theta^{\mathrm{org}}\|_2^2$
    \For{layer $\ell$}
      \State $W_\ell^{\,t+1}\gets W_\ell^{\,t+\frac12}-\eta_{\mathrm{tel}}\ \Pi_\ell^\perp\big(\nabla_{W_\ell}\mathcal{L}_{\mathrm{tel}}(\theta_{t+\frac12})\big)
      $
    \EndFor
    \State $\theta_{t+1}\gets\{W_\ell^{\,t+1}\}_\ell$
    \If{$\ell_{\mathrm r}(\theta_{t+1}\mid\mathcal{B}_{\mathrm r})>\ell_{\mathrm r}(\theta_t\mid\mathcal{B}_{\mathrm r})+\varepsilon$}
      \State $\theta_{t+1}\gets\theta_{t+\frac12}$ \Comment{backtrack/safeguard}
    \EndIf
  \Else
    \State $\theta_{t+1}\gets\theta_{t+\frac12}$
  \EndIf
\EndFor
\State \Return $\theta^{u}\gets\theta_T$
\end{algorithmic}
\end{algorithm}

\section{Approximate Null-Space Teleportation}
\label{sec:fastwarp}

\paragraph{Low-rank structure of retain representations.}
For a retain minibatch $\mathcal{B}_{\mathrm r}$ and layer $\ell$, let
$R_\ell(\mathcal{D}_{\mathrm r}) \in \mathbb{R}^{|\mathcal{B}_{\mathrm r}|\times d_\ell}$ 
denote the matrix whose rows collect the layer-$\ell$ inputs
$\{\phi_\ell(x)\}_{x \in \mathcal{B}_{\mathrm r}}$.
Empirically, $R_\ell(\mathcal{D}_{\mathrm r})$ exhibits strong spectral decay:
its spectrum is dominated by a small number of singular values, and most of the
energy lies in a low-dimensional subspace.
Such low-rank structure of activations, gradients and Hessians has been observed
repeatedly in modern deep networks
\citep{arora2019fine,ghorbani2019data,fort2020deep,gurari2019tiny},
and is often attributed to overparameterisation and the implicit regularisation of SGD.
In \textsc{WARP}, the retain subspace at layer $\ell$ is defined by the top-$k$
left singular vectors of $R_\ell(\mathcal{D}_{\mathrm r})$:
\[
R_\ell(\mathcal{D}_{\mathrm r}) = U_\ell \Sigma_\ell V_\ell^\top,
\qquad
B_\ell = U_{\ell,1:k},
\qquad
\Pi_\ell^\perp = I - B_\ell B_\ell^\top.
\]
Since only the span of these dominant directions matters for teleportation,
\emph{exact} SVD is not required: any procedure that recovers a good approximation
to the top-$k$ principal subspace suffices.

\paragraph{Covariance-based PCA and subspace iteration.}
Instead of computing a full thin SVD of $R_\ell(\mathcal{D}_{\mathrm r})$,
\textsc{FastWARP} estimates $B_\ell$ via a covariance eigen-decomposition
and a small number of subspace-iteration updates, following classical
PCA and online PCA methods
\citep{golub2013matrix,oja1982simplified,warmuth2007randomized,mitliagkas2013memory}.
We first form the covariance
\[
C_\ell \;=\; X_\ell X_\ell^\top \in \mathbb{R}^{d_\ell \times d_\ell},
\]
where $X_\ell \in \mathbb{R}^{d_\ell \times N}$ is a layer-wise input matrix
constructed from $\mathcal{B}_{\mathrm r}$ (for convolutional layers we use
unfolded patches; for batch-norm we aggregate per-channel features).
We then compute the eigen-decomposition
$C_\ell = Q_\ell \Lambda_\ell Q_\ell^\top$ and retain the smallest $k$ such that
the cumulative explained variance exceeds a threshold $\tau$:
\[
k \;=\; \min \Bigl\{ j : 
  \tfrac{\sum_{i=1}^{j} \max(\lambda_{\ell,i},0)}{\sum_{i=1}^{d_\ell} \max(\lambda_{\ell,i},0)}
  \ge \tau \Bigr\}, 
\qquad
B_\ell \;=\; Q_{\ell,1:k},
\]
optionally capped by a user-specified $k_{\max}$.
For subsequent teleportation steps, we update $B_\ell$ using a few iterations of
subspace iteration
\citep{golub2013matrix,halko2011finding,musco2015randomized,tropp2017practical,woodruff2014sketching}:
\[
Y \;\leftarrow\; C_\ell B_\ell,
\qquad
[B_\ell,\_ ] \;\leftarrow\; \mathrm{qr}(Y),
\]
which amounts to an Oja-style streaming PCA update
\citep{oja1982simplified} with QR re-orthogonalisation.
This reduces the cost of updating $B_\ell$ for a new minibatch from the
$\mathcal{O}(|\mathcal{B}_{\mathrm r}| d_\ell^2)$ cost of a fresh thin SVD to
$\mathcal{O}(|\mathcal{B}_{\mathrm r}| d_\ell k)$ for the covariance application
plus $\mathcal{O}(d_\ell k^2)$ for QR, with $k \ll d_\ell$.
The resulting projector $\Pi_\ell^\perp = I - B_\ell B_\ell^\top$ is then used
exactly as in the original \textsc{WARP} update.

\begin{algorithm}[H]
\caption{\textsc{FastWARP} basis update at layer $\ell$}
\label{alg:fastwarp_basis}
\begin{algorithmic}[1]
\setlength{\itemsep}{2pt}
\Require $d_\ell$, retain minibatch $\mathcal{B}_{\mathrm r}$, 
$B_\ell^{\mathrm{prev}}$ (or \textsc{None}), $\tau\in(0,1]$, $k_{\max}$, $T_{\mathrm{track}}$
\State build $X_\ell\in\mathbb{R}^{d_\ell\times N}$ from $\mathcal{B}_{\mathrm r}$
\State $C_\ell \gets X_\ell X_\ell^\top$;\quad $C_\ell \gets \tfrac12(C_\ell + C_\ell^\top)$
\If{$B_\ell^{\mathrm{prev}}=\textsc{None}$}
  \State $C_\ell = Q_\ell \Lambda_\ell Q_\ell^\top$
  \State sort $\Lambda_\ell$ in descending order, permute $Q_\ell$ accordingly
  \State $k \gets \min\Big\{k_{\max},\ \min\big\{k:\tfrac{\sum_{i=1}^k \Lambda_{\ell,ii}}{\sum_{i} \Lambda_{\ell,ii}}\ge\tau\big\}\Big\}$
  \State $B_\ell \gets Q_\ell[:,1\!:\!k]$
\Else
  \State $B_\ell \gets B_\ell^{\mathrm{prev}}$
  \For{$t = 1,\dots,T_{\mathrm{track}}$}
    \State $Y \gets C_\ell B_\ell$
    \State $[B_\ell,\_] \gets \mathrm{qr}(Y)$
    \State $B_\ell \gets B_\ell[:,1\!:\!k]$
  \EndFor
\EndIf
\State $\Pi_\ell^\perp \gets I_{d_\ell} - B_\ell B_\ell^\top$
\State \Return $B_\ell,\ \Pi_\ell^\perp$
\end{algorithmic}
\end{algorithm}



\begin{figure}[t]
    \centering

    \begin{minipage}{0.48\linewidth}
        \centering
        \captionsetup{width=\linewidth}
        \includegraphics[width=\linewidth]{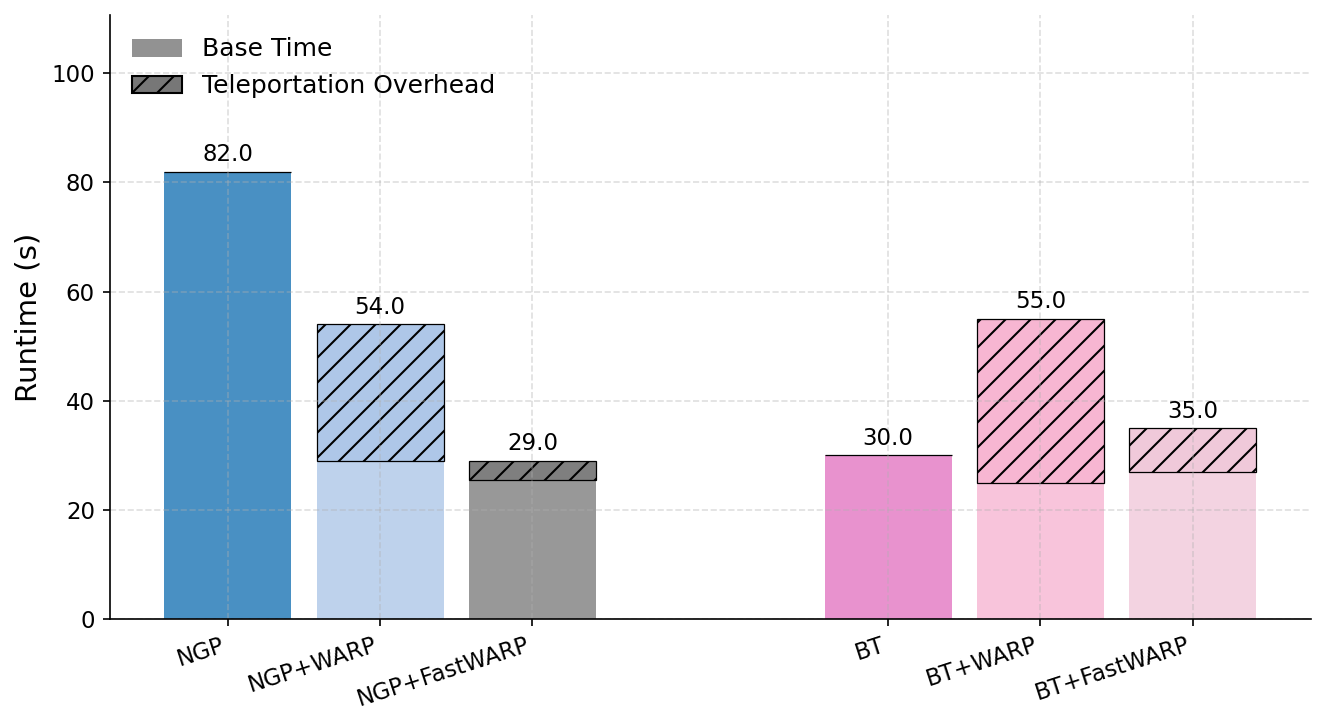}
        \caption{Runtime of the \textsc{WARP} plug-in on CIFAR-10 with ResNet-18. 
        Each bar reports the mean over five runs. 
        The top hatched segments correspond to the additional teleportation time; 
        the solid base is the runtime of the underlying MU algorithm.}
        \label{fig:fastTime}
    \end{minipage}
    \hfill
    \begin{minipage}{0.48\linewidth}
        \centering
        \captionsetup{width=\linewidth}
        \includegraphics[width=\linewidth]{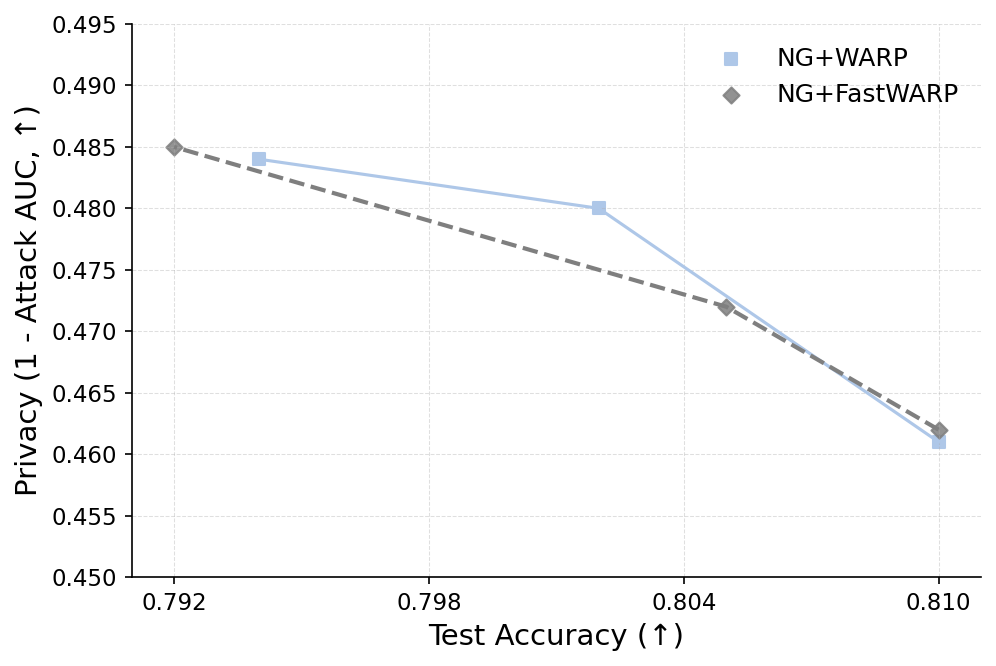}
        \caption{Privacy--utility comparison of NG+\textsc{WARP} and NG+\textsc{FastWARP}.  
        The approximate teleportation method 
        (\textsc{FastWARP}) matches the privacy--utility frontier of the exact variant, 
        achieving nearly identical privacy and test accuracy.}
        \label{fig:fastwarp}
    \end{minipage}

\end{figure}

\paragraph{Runtime and privacy--utility impact.}
Figure~\ref{fig:fastTime} reports the runtime for NG and BT with and without
teleportation on CIFAR-10/ResNet-18.
The hatched segments correspond to the teleportation component.
Using full SVD yields a moderate yet visible overhead (e.g.,  BT+WARP increases runtime from $30$\,s to $55$\,s).
Replacing full SVD with the covariance-based PCA and subspace iteration of Algorithm~\ref{alg:fastwarp_basis} (\textsc{FastWARP}) shrinks this overhead
substantially: total runtime drops to $29$\,s and $35$\,s for NG+FastWARP and
BT+FastWARP, corresponding to a $2\times$--$3\times$ reduction in the
teleportation time.  
The teleportation component becomes only a small fraction of the overall MU cost.

To measure the effect of this approximation on privacy and accuracy, 
Figure~\ref{fig:fastwarp} compares NG+\textsc{WARP} and NG+\textsc{FastWARP}
along the privacy--utility frontier.
The two curves are nearly indistinguishable: privacy ($1-\mathrm{AUC}$) differs
by at most $0.3$--$0.6\%$ across operating points, and test accuracy changes by
at most $0.2$--$0.3$ percentage points.
We also track retain-set loss during teleportation and observe that the relative
drift under \textsc{FastWARP} remains below $2\%$, indicating that the approximate
projector continues to enforce practical loss invariance.
In some configurations, the additional numerical noise introduced by the
approximation yields slightly \emph{higher} privacy for the same utility.
Overall, these results show that the privacy gains of \textsc{WARP} are robust
to approximate PCA, and that \textsc{FastWARP} preserves the empirical
privacy--utility trade-off while significantly reducing computational overhead.

\paragraph{Scalability to LLMs and calibration of the retain subspace.}
A natural concern is whether null-space teleportation remains practical and stable at LLM scale, where layer widths reach $d_\ell \sim 10^3$–$10^4$ and a single minibatch may not span the retain subspace. Empirically, recent compression work shows that truncated SVD and related low-rank factorizations are already applied efficiently to full LLM weight matrices with comparable or larger dimensions: SVD-LLM~\cite{wang2024svd,wang2025svd} optimizes singular-value truncation for LLaMA~\cite{touvron2023llama}- and GPT~\cite{brown2020language}-class models while preserving perplexity and throughput, demonstrating that rank-$k$ SVD with $k \ll d_\ell$ is tractable in practice on modern hardware.   Complementary methods such as ResSVD~\cite{bai2025ressvd} leverage the residual matrix left by truncation to correct the approximation, further reducing the effective loss of expressivity at fixed rank.  Orthogonal lines of work, e.g., weighted low-rank factorization for LMs, explicitly introduce data-dependent weights in the covariance (or Gram) operator to bias the recovered subspace toward high-importance tokens or examples, and report competitive compression ratios on transformer-based LMs~\cite{hsu2022language,sakr2024espace}. In our setting, we can adopt the same design principles: instead of forming $R_\ell(\mathcal{B}_{\mathrm r})$ from an arbitrary minibatch, we maintain a small buffer of retain batches with large gradient norm~\cite{sakr2024espace} or Fisher information, and construct the activation matrix $X\ell$ from this “high-influence” pool. This yields a weighted or importance-sampled covariance $C_\ell = X_\ell X_\ell^\top$ whose top-$k$ eigenspace more faithfully captures the retain subspace seen over the full retain stream, while keeping the per-teleportation cost at $\mathcal{O}(|\mathcal{B}_{\mathrm r}| d_\ell k)$. Combined with low-rank SVD implementations that are already optimized for LLM compression, these heuristics make the FastWARP projector construction compatible with large transformer architectures without breaking the retain loss invariance enforced by WARP.
We leave the adaptation to large language models for future research. Our contributions target symmetry-based defenses for generic neural networks and established MU baselines, and do not address LLM-specific challenges in unlearning, which constitute a distinct line of investigation.

\section{Comparison With DP–Langevin Noise Defences}
\label{sec:dp-comparison}

While our goal is to make neural networks more resilient to privacy attacks \emph{post~hoc}, a natural question is how \textsc{WARP} compares with defences based on differential privacy (DP).  
DP is the strongest known framework for providing indistinguishability guarantees between neighbouring datasets, and a small number of recent unlearning methods have attempted to translate these guarantees into \emph{certified} machine unlearning.  
Among these, noisy-gradient (Langevin) approaches provide the closest analogue to our setting; we therefore include them as a comparison point.

Certified unlearning methods such as \citet{guo2020certified,chien2024langevin} formalise unlearning as an indistinguishability requirement between (i) a model obtained by training on the full dataset, and (ii) a counterfactual model that has never seen the forget set.  
These works build on the principle that if the training algorithm is itself DP, then suitable post-processing can yield certified removal of training points.  
Such guarantees make DP–Langevin the strongest known \emph{general-purpose} defence with explicit indistinguishability guarantees, hence a meaningful baseline to evaluate privacy–utility trade-offs.

\paragraph{What the DP guarantees actually require.}
The formal guarantees in \citet{guo2020certified,chien2024langevin} rely on assumptions that do \emph{not} hold in the deep, non-convex MU regime we consider:
\begin{enumerate}[leftmargin=1.5em]
    \item \textbf{Convexity and strong dissipativity.}  
    Both works require (strongly) convex, $\ell_2$–regularised objectives to bound the stationary distribution of the noisy dynamics.  
    Deep convolutional networks trained with cross-entropy fundamentally violate these assumptions.
    \item \textbf{DP-trained initial model required.}  
    The certified-unlearning guarantee requires that the \emph{original} model be obtained using \emph{the same} noisy-gradient mechanism (noisy SGD or Langevin) applied throughout training on the full dataset.  
    This is explicitly stated as a necessary condition in \citet{chien2024langevin}.  
    In contrast, our setting begins from a standard ERM-trained model, which is non-DP and therefore outside the scope of their certification theorem.
\end{enumerate}
As a result, the ``$\varepsilon$'' obtained from the RDP accountant in our experiments should be interpreted purely as a calibrated \emph{noise level}, not as a valid DP guarantee.  
Our use of Langevin noise is therefore a \emph{strong noise-based defence}, not a certified mechanism.

\paragraph{Adapting projected Langevin unlearning to MU.}
Following \citet{chien2024langevin}, we implement projected Langevin dynamics on top of the same MU objective used throughout the paper.  
For a per-sample clipped gradient with radius $C$ and loss
\[
\mathcal{L}_{\mathrm{MU}}(\theta)
    = \alpha\,\big(\ell_{\mathrm r}(\theta) + \lambda\|\theta-\theta_p\|_2^2\big)
      - (1-\alpha)\,\ell_{\mathrm f}(\theta),
\]
the DP–Langevin update is
\begin{align}
   g_t &= \mathrm{clip}\big(\nabla_\theta \mathcal{L}_{\mathrm{MU}}(\theta_t),\, C \big), \\
   \theta_{t+1}
   &= \theta_t
      - \eta_t g_t
      + \sqrt{2\,\eta_t\,\lambda}\;\boldsymbol{\xi}_t,
      \qquad \boldsymbol{\xi}_t \sim \mathcal{N}(0,I),
\end{align}
where $\lambda$ is the regularisation parameter entering the RDP privacy analysis.  
Given a target privacy level $\varepsilon$, we follow the exact R\'enyi-DP accounting of \citet{chien2024langevin} to compute the Gaussian noise standard deviation $\sigma$ required by their Langevin update.  
In our implementation, three quantities act as tunable hyperparameters: the learning rate $\eta$, the per-sample gradient-clipping radius $C$, and the regularisation coefficient $\lambda$ that appears in the RDP analysis.  
For any chosen $(\eta, C, \lambda)$ and target $\varepsilon$, the formulas of \citet{chien2024langevin} uniquely determine the corresponding noise scale $\sigma$.  
To ensure fairness across baselines, we run the same number of hyperparameter-search trials as for the MU baselines, jointly sweeping $(\eta, C, \lambda)$ to obtain the set of reported results in Table~\ref{tab:dp-langevin-tradeoff}.

\begin{table}[t]
\centering
\caption{\textbf{NGP+WARP vs.\ Langevin noise (U-LiRA, black-box).}
Reported are risks on \emph{all forget samples} and on the \emph{most–memorized} subset (top 5\%), plus test accuracy.
U-LiRA AUC and TPR@0.1\% (FPR) are shown for each setting.}
\vspace{2mm}

\footnotesize
\resizebox{0.85\columnwidth}{!}{%
\begin{tabular}{lccccc}
\toprule
& \multicolumn{2}{c}{All samples (BB)} & \multicolumn{2}{c}{Most–memorized (top 5\%)} & \multicolumn{1}{c}{Acc.} \\
\cmidrule(lr){2-3}\cmidrule(lr){4-5}\cmidrule(lr){6-6}
Method & AUC & TPR@0.1 & AUC & TPR@0.1 & Test \\
\midrule
Langevin ($\varepsilon=1$)  & 0.523 & 0.004 & 0.671 & 0.029 & 0.682 \\
Langevin ($\varepsilon=4$)  & 0.571 & 0.006 & 0.766 & 0.048 & 0.718 \\
Langevin ($\varepsilon=8$)  & 0.627 & 0.020 & 0.912 & 0.166 & 0.771 \\
Langevin ($\varepsilon=16$) & 0.650 & 0.027 & \textbf0.935 & 0.224 & \textbf{0.798} \\
\midrule
NGP + WARP                  & \textbf{0.516} & \textbf{0.003} & \textbf{0.598} & \textbf{0.015} & 0.797 \\
\bottomrule
\end{tabular}
}
\label{tab:dp-langevin-tradeoff}
\end{table}

\paragraph{Interpretation under non-convexity.}
Although the privacy accountant yields a numerical $\varepsilon$, none of the formal conditions needed for DP-certified unlearning hold for our deep ResNet models.  
Consequently, we reiterate that the resulting values should not be interpreted as DP guarantees but rather as a systematic way of calibrating the magnitude of injected noise.  
The comparison therefore isolates the \emph{empirical} effect of noise injection on forgetting, retention, and attack success.

\paragraph{Empirical privacy--utility trade-off.}
Table~\ref{tab:dp-langevin-tradeoff} reveals a clear tension between nominal DP guarantees and empirical membership privacy.  
As the target privacy budget for Langevin is relaxed from $\varepsilon=1$ to $\varepsilon=16$, test accuracy gradually recovers (from $0.682$ up to $0.798$), but U-LiRA risk monotonically \emph{increases}: the all-sample AUC rises from $0.523$ to $0.650$, and the AUC on the top-$5\%$ most memorised points grows from $0.671$ to $0.935$, with TPR@0.1\% FPR increasing from $0.029$ to $0.224$.  
In contrast, \textsc{NGP+WARP} simultaneously achieves competitive utility and strictly lower attack success: on all forget samples it attains the best AUC and TPR@0.1\% ($0.516$ and $0.003$), and on the most–memorised subset it reduces AUC to $0.598$ and TPR@0.1\% to $0.015$, outperforming every Langevin configuration by a wide margin.  
Notably, relative to the lowest-noise setting ($\varepsilon=16$), \textsc{NGP+WARP} matches accuracy ($0.797$ vs.\ $0.798$) while cutting the memorised AUC from $0.935$ to $0.598$ and TPR@0.1\% from $0.224$ to $0.015$.  
For stronger nominal privacy ($\varepsilon=1$ or $4$), Langevin noise severely degrades accuracy (down to $0.682$) yet still leaves substantially higher attack AUC and TPR than \textsc{WARP}.  
Overall, these results suggest that isotropic DP noise is poorly aligned with the specific memorization patterns exploited by U-LiRA: it injects substantial randomness into all updates, harming utility without reliably protecting the most vulnerable examples, whereas \textsc{WARP} reshapes the parameter space in a targeted way that yields a markedly better empirical privacy--utility frontier.

Taken together, these observations clarify the roles of the two approaches.  
Langevin noise offers a principled mechanism for \emph{certified} unlearning in the restricted setting of convex, DP-trained models, but its guarantees do not extend to the non-convex MU regime nor to pretrained models obtained without DP noise.  
Consequently, applying Langevin updates post hoc to deep networks provides no formal protection and yields an unfavourable privacy–utility trade-off in practice.  
By contrast, \textsc{WARP} operates directly on arbitrary pretrained models, targets the directions most responsible for memorization, and empirically achieves substantially stronger resistance to membership inference at comparable accuracy.  
A compelling direction for future work is to investigate whether the geometric structure exploited by \textsc{WARP} can be combined with, or serve as a foundation for, certified unlearning mechanisms that simultaneously handle non-convex objectives and non-DP initialisation—a capability not supported by current DP-Langevin frameworks.

\section{Adaptive Reconstruction With Symmetry--Aware Attacker}
\label{sec:adaptive-recon}

Teleportation acts by composing the unlearning update with a symmetry transform that preserves predictions but redistributes parameter mass along loss–invariant directions (Section~\ref{sec:method-teleportation-based-defense}).  
This raises a natural question: can a stronger white-box adversary, aware of the teleportation family, \emph{invert} or compensate for these symmetry moves and recover the residual forget gradient?  
More concretely, if the attacker can parameterise and optimize over the change-of-basis (COB) scales $\tau$ used in neural teleportation~\citep{armenta2023neural}, does this restore reconstruction quality and defeat WARP?

It is worth noting that our privacy evaluation already includes two adaptive–attack families: U-LiRA and GLiR, both of which instantiate adaptive membership-inference attacks by optimising proxy models or surrogate loss landscapes.
However, the reconstruction attack considered in Section~\ref{sec:method-recon}—which directly targets instance-level recovery of the forgotten data—was \emph{not} adaptive: the attacker optimized only over the dummy image while keeping the teleportation parameters fixed.
To fully test the robustness of symmetry-based teleportation, we now consider a strictly stronger attacker that \emph{jointly} optimizes both the dummy image and the teleportation parameters themselves.

Concretely, we study whether an attacker who can parameterise and optimize over the change-of-basis (COB) symmetry scales $\tau$ used in neural teleportation~\citep{armenta2023neural} can undo the defender’s symmetry moves, thereby restoring the clean gradient geometry required for successful reconstruction.
This experiment directly probes whether teleportation is merely hiding the forget gradient behind a reversible reparameterisation, or whether it fundamentally reshapes the inverse problem faced by reconstruction attacks.

\paragraph{Attack formulation.}
In the adaptive setting, we give the attacker full knowledge of the teleportation family and let them \emph{shadow} the defender’s operations.  
Specifically, starting from the original pretrained weights $\theta_{\mathrm{org}}$, the attacker first applies a change-of-basis symmetry parametrised by COB scales $\tau = \{\tau_a > 0\}$, obtaining
\begin{equation}
  \theta_{\mathrm{org}}^{(\tau)} \;=\; T_\tau(\theta_{\mathrm{org}}),
\end{equation}
where $T_\tau$ is the COB teleportation map (Appendix~\ref{sec:appendix-alternative}).  
They then perform a single gradient step in parameter space using a dummy image--label pair $(x,y)$:
\begin{equation}
  \theta^{(\tau)}(x,y)
  \;=\;
  \theta_{\mathrm{org}}^{(\tau)}
  \;+\;
  \eta_{\mathrm{att}}\,
  \nabla_\theta \ell\big(f(x;\theta_{\mathrm{org}}^{(\tau)}), y\big),
  \label{eq:adaptive-shadow-step}
\end{equation}
with attack step size $\eta_{\mathrm{att}}>0$.  
The attacker’s goal is to choose $(x,\tau)$ so that the shadowed update in~\eqref{eq:adaptive-shadow-step} closely matches the actual unlearned parameters $\theta_u$ produced by WARP.  
Formally, we solve
\begin{equation}
  \hat x_f, \hat\tau
  \;\in\;
  \argmin_{x,\tau}
  \Big[
    D\big(
      \theta^{(\tau)}(x,y),
      \theta_u
    \big)
    + \lambda_{\mathrm{TV}}\operatorname{TV}(x)
    + \lambda_{\tau}\,\Omega(\tau)
  \Big],
  \label{eq:adaptive-obj}
\end{equation}
where $D(\cdot,\cdot)$ is a parameter-space discrepancy (we use $\ell_2$ distance over all weights), $\operatorname{TV}(x)$ is the total-variation regulariser on the image, and $\Omega(\tau)$ implements a Gaussian prior $\tau_a \sim \mathcal{N}(1,\sigma_{\mathrm{cob}}^2)$ on each COB scale.  
We optimize~\eqref{eq:adaptive-obj} by alternating gradient steps on $x$ and $\tau$, with $\tau$ clipped to a bounded interval around~$1$ to avoid degenerate scalings.

\paragraph{Experimental setup.}
For a fair comparison, we reuse exactly the reconstruction protocol of Section~\ref{sec:exp-recon} (same model, dataset, forgotten examples, optimizer, and image priors), and only extend the attack to optimize over the COB parameters $\tau$ via~\eqref{eq:adaptive-obj}.  
We vary the COB prior variance $\sigma_{\mathrm{cob}}$ that defines $\Omega(\tau)$, treating each $\tau_a$ as a scalar random variable centred at $1$ with variance $\sigma_{\mathrm{cob}}$.  
We sweep $\sigma_{\mathrm{cob}} \in {0, 0.1, 0.2, 0.4, 0.8}$, where $\sigma_{\mathrm{cob}}=0$ recovers the non-adaptive attack with fixed $\tau\equiv 1$, and larger values correspond to stronger dispersion along the symmetry orbit induced by WARP.  
Following the evaluation protocol of Table~\ref{tab:recon_teleport}, we quantify reconstruction quality using PSNR, SSIM, LPIPS, and feature MSE, reporting averages over 30 randomly drawn forget examples.


\begin{figure}[t]
    \centering
    \begin{minipage}{0.48\linewidth}
        \centering
        \includegraphics[width=\linewidth]{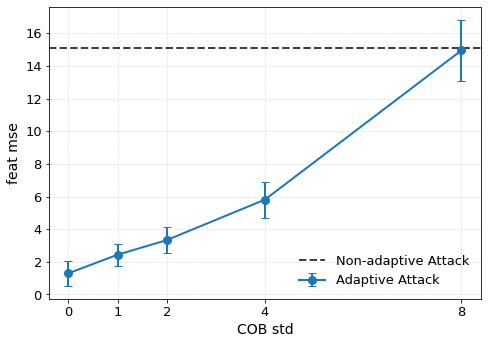}
        \vspace{1mm}
        \textbf{(A)}~Feature MSE
    \end{minipage}
    \hfill
    \begin{minipage}{0.48\linewidth}
        \centering
        \includegraphics[width=\linewidth]{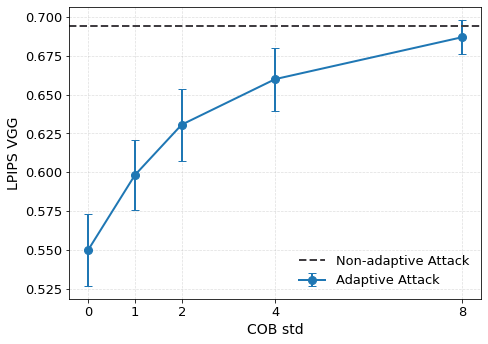}
        \vspace{1mm}
        \textbf{(B)}~LPIPS (VGG)
    \end{minipage}
    \vspace{2mm}
    \caption{\textbf{Adaptive reconstruction under change-of-basis teleportation (NGP, ImageNet-1K).}
(A) Feature MSE and (B) LPIPS (VGG) as a function of the COB standard deviation $\sigma_{\mathrm{cob}}$.
Increasing the symmetry variance consistently worsens reconstruction quality across both metrics.}
    \label{fig:adaptive-recon-cob}
\end{figure}


\paragraph{Results and connection to theory.}
Figure~\ref{fig:adaptive-recon-cob} shows how reconstruction quality changes as we increase the COB prior std $\sigma_{\mathrm{cob}}$ that controls the spread of admissible symmetry scales.  
When $\sigma_{\mathrm{cob}}=0$ the symmetry prior collapses around $\tau_a\approx 1$, so the attacker effectively searches over a narrow neighbourhood of the defender’s true teleportation and can partially recover the forgotten signal: the adaptive attack achieves substantially lower feature MSE and LPIPS than the non–adaptive WARP attack (dashed line).  
However, the attacker never observes the ground-truth COB scales used by the defender; as $\sigma_{\mathrm{cob}}$ grows and the symmetry orbit broadens, the optimisation over $(x,\tau)$ quickly becomes unstable.  
Both metrics deteriorate almost monotonically with $\sigma_{\mathrm{cob}}$: already at moderate variance the gains over the non–adaptive attack largely disappear, and for the largest tested $\sigma_{\mathrm{cob}}$ the adaptive reconstructions are statistically indistinguishable from (or slightly worse than) the non–adaptive baseline.
Importantly, the COB standard deviation is a defender-controlled knob: in this symmetry family we can raise $\sigma_{\mathrm{cob}}$ up to $1.0$ without changing the realised network function, and in our main reconstruction experiments in Table~\ref{tab:recon_teleport} we set $\sigma_{\mathrm{cob}}=0.8$, already placing the attacker in a high-variance regime where adaptive reconstruction is strongly impaired.

This trend is consistent with our theoretical analysis in Appendix~\ref{sec:appendix-theoretical}, which shows that the expected reconstruction error increases with the variance of the COB scales.  
Larger $\sigma_{\mathrm{cob}}$ expands the symmetry orbit of $\theta_{\mathrm{org}}$ and $\theta_u$, so the update $\Delta\theta$ admits many symmetry–equivalent decompositions whose gradients are nearly orthogonal to the true forget gradient $g_f$.  
The optimisation problem in~\eqref{eq:adaptive-obj} thus becomes a highly ill-posed inverse problem over the joint space $(x,\tau)$, where many different configurations of $(x,\tau)$ produce similar matches in parameter space.  
Empirically, the adaptive optimiser drifts toward such low–signal-to-noise solutions that satisfy the symmetry constraints but no longer encode the specific forgotten example, explaining the systematic degradation in reconstruction quality as symmetry variance (or std) increases.


\paragraph{Takeaway.}
Even under a strong white-box threat model—where the attacker knows the teleportation family and jointly adapts both the dummy input and the symmetry parameter—teleportation continues to disrupt reconstruction effectively.
The injected symmetry components become entangled with the forget-induced update $\Delta\theta$, enlarging the attacker’s search space and destroying the geometric alignment between parameter differences and the underlying forgotten example.
Thus, teleportation does not merely reparameterise the model in a way that can be inverted; instead, by injecting symmetry variance into the update, it structurally increases reconstruction error and removes the clean gradient-based signal that standard reconstruction attacks depend on.
This provides empirical and theoretical evidence that symmetry-based teleportation fundamentally hardens the inverse problem faced by adaptive adversaries.

\section{Teleportation-Aware Information-Theoretic Bounds on Gradient-Based Reconstruction}
\label{sec:appendix-theoretical}

\subsection{Overview of the Theoretical Analysis}

This appendix develops an information-theoretic lower bound on the
minimal reconstruction mean-squared error (MSE) achievable by a
gradient-based inversion adversary within a shared probabilistic
model for gradients. We first adapt standard
entropy--MSE relationships to the case where the attacker observes
gradients rather than intermediate features, closely following the
spirit of the analysis in Xia et al.~\cite{xia2025theoretical}. We
then introduce a Gaussian-mixture model (GMM) for gradient features
and derive a parametric lower bound on the conditional entropy
$H(x\mid g)$, analogous to the intermediate-feature analysis in
\cite{xia2025theoretical} but specialized to gradients. Finally, we
incorporate teleportation (change-of-basis) noise as private
randomness in the training dynamics and analyze its impact on the
\emph{same} lower-bound pipeline, under an explicit diagonal
approximation and an energy-preserving design assumption on the
change-of-basis distribution. Throughout, we keep the modelling
assumptions identical between the teleported and non-teleported
channels, so any improvement we prove directly reflects a genuine
tightening of the analytic lower bound on reconstruction error---and
hence a provable gain in information-theoretic privacy \emph{within
this common generative framework}. We emphasize that $H(x)$ is fixed
by the dataset distribution, so only \emph{relative} differences
between the channels are meaningful.

\subsection{Setup and threat model}

\paragraph{Data and model.}
Let $x \in \R^d$ denote the $d$-dimensional input random variable, distributed
according to some unknown data distribution on a measurable subset
$\mathcal{X} \subseteq \R^d$. We assume throughout that $x$ admits a
density w.r.t.\ Lebesgue measure and has finite second moment.
(If one wishes to model discrete or manifold-supported data, the
analysis can be recovered by adding an arbitrarily small Gaussian
perturbation to $x$ as is standard in differential-entropy arguments;
we implicitly assume such smoothing has been applied so that
conditional covariances below are positive definite.)

Consider a deep network with parameters $W \in \mathbb{R}^{m \times d}$ and first-layer
pre-activations
\[
  z = W x \in \R^m,
\]
and a subsequent decoder $F_d$. Let $\ell(\cdot,y)$ be a loss for a label $y$, and define
the gradient with respect to $z$:
\[
  g_z \;=\; \nabla_z \ell(F_d(z),y) \in \R^m.
\]
In the analysis below, the attacker’s observation will be a
gradient-based signal $g$ (not necessarily equal to $g_z$ directly)
that is deterministically related to $(x,y,W)$ plus noise. In a
white-box setting, for instance, the adversary can observe weight
differences across steps, which are affine functions of the
underlying gradient features; since mutual information and our
entropy-based bounds are invariant under fixed invertible affine
reparametrizations, it is without loss of generality to work with a
canonical gradient feature $g$.

\paragraph{Adversarial objective.}
An inversion adversary aims to reconstruct $x$ from the observable
$g$. Given an estimator $\hat x(g)$, we measure reconstruction
quality by the mean-squared error (MSE)
\begin{equation}
  \xi_g(\hat x)
  \;\coloneqq\;
  \frac{1}{d}\,\E\bigl[\|x - \hat x(g)\|_2^2\bigr].
  \label{eq:mse-def}
\end{equation}
The \emph{minimal} MSE $\xi_g$ is the infimum of~\eqref{eq:mse-def}
over all measurable estimators $\hat x(\cdot)$. We interpret
``information-theoretic robustness'' as the regime where the attacker
is Bayes-optimal under the assumed generative model, i.e.\ has access
to the true posterior $P(x\mid g)$ induced by that model and
implements the Minimum Mean Square Error (MMSE) estimator.

\begin{assumption}[Basic regularity]
\label{ass:basic}
We assume:
\begin{enumerate}[label=(\roman*), nosep]
  \item $x$ has a density on $\R^d$ and finite second moment;
  \item for the observation $g$, the conditional distribution
        $P(x\mid g)$ admits a density with finite second moment, and
        its covariance matrix $\Cov(x\mid g)$ is positive definite
        almost surely;
  \item all entropies, mutual informations and expectations used
        below are finite.
\end{enumerate}
\end{assumption}

These conditions are standard in information-theoretic MMSE analysis
(see, e.g.,~\cite{xia2025theoretical}) and ensure that all quantities
are well-defined and that the maximum-entropy characterization for
Gaussians can be applied without degeneracy.

\subsection{Minimal MSE from gradients and an entropy-based lower bound}

\paragraph{Bayes-optimal reconstruction from gradients}

We first recall the standard MMSE characterization.

\begin{proposition}[Minimal reconstruction MSE from gradients]
\label{prop:min-mse-grad}
Let $x \in \R^d$ and an observation $g$ satisfy
\Cref{ass:basic}. Consider estimators $\hat x(g)$ of $x$ based on $g$
and define $\xi_g(\hat x)$ as in~\eqref{eq:mse-def}. Then:
\begin{enumerate}[label=(\roman*),nosep]
\item The estimator that minimizes $\xi_g(\hat x)$ is the conditional
      mean $\hat x^\star(g) = \E[x\mid g]$.
\item The corresponding minimal MSE is
\begin{equation}
  \xi_g
  \;\coloneqq\;
  \inf_{\hat x} \xi_g(\hat x)
  \;=\;
  \frac{1}{d}\,\E_g\Bigl[\Tr\bigl(\Cov(x\mid g)\bigr)\Bigr],
  \label{eq:xi-covxg}
\end{equation}
where $\Cov(x\mid g)$ denotes the conditional covariance of $x$ given
$g$ and $\E_g$ is expectation w.r.t.\ $g$.
\end{enumerate}
\end{proposition}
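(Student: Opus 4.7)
}
The plan is to treat this as a standard MMSE result and structure the argument around the orthogonality principle for conditional expectation, invoking the regularity conditions in \Cref{ass:basic} only to ensure that all squared-norm expectations and conditional covariances are finite. Both claims follow from a single bias--variance decomposition, so I would prove them together rather than separately.

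First I would fix an arbitrary measurable estimator $\hat x(g)$ with $\E\|\hat x(g)\|_2^2 < \infty$ (estimators violating this cannot be optimal, since the baseline $\hat x \equiv \E[x]$ already has finite MSE by Assumption~\ref{ass:basic}(i)). I would then add and subtract the conditional mean $\hat x^\star(g) = \E[x\mid g]$, which is well-defined and square-integrable under Assumption~\ref{ass:basic}(ii), and expand:
\begin{equation*}
  \E\|x - \hat x(g)\|_2^2
  = \E\|x - \hat x^\star(g)\|_2^2
  + \E\|\hat x^\star(g) - \hat x(g)\|_2^2
  + 2\,\E\bigl\langle x - \hat x^\star(g),\; \hat x^\star(g) - \hat x(g)\bigr\rangle.
\end{equation*}
The next step is to kill the cross term using the tower property: conditioning on $g$ makes $\hat x^\star(g) - \hat x(g)$ measurable, so
\begin{equation*}
  \E\bigl\langle x - \hat x^\star(g), \hat x^\star(g) - \hat x(g)\bigr\rangle
  = \E\bigl\langle \E[x - \hat x^\star(g)\mid g], \hat x^\star(g) - \hat x(g)\bigr\rangle = 0,
\end{equation*}
since $\E[x\mid g] = \hat x^\star(g)$ by definition. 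This yields $\E\|x - \hat x(g)\|_2^2 = \E\|x - \hat x^\star(g)\|_2^2 + \E\|\hat x^\star(g) - \hat x(g)\|_2^2$, establishing part (i): dividing by $d$ gives $\xi_g(\hat x) \ge \xi_g(\hat x^\star)$ with equality iff $\hat x(g) = \hat x^\star(g)$ almost surely.

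For part (ii) I would then rewrite the residual term coordinate-wise. Conditioning on $g$ and using $\E[(x_i - \E[x_i\mid g])^2 \mid g] = \Var(x_i \mid g)$, summing over $i=1,\dots,d$ gives $\E[\|x - \hat x^\star(g)\|_2^2 \mid g] = \Tr(\Cov(x\mid g))$, and taking the outer expectation over $g$ and dividing by $d$ yields~\eqref{eq:xi-covxg}. The main obstacle is essentially bookkeeping rather than mathematical: verifying measurability of $\hat x^\star(g)$, integrability of all the expansions (covered by Assumption~\ref{ass:basic}(iii)), and the fact that the positive-definiteness condition in Assumption~\ref{ass:basic}(ii) ensures $\Cov(x\mid g)$ is a well-defined non-degenerate matrix so that the trace expression is finite and strictly positive, which is needed for the entropy-based lower bounds developed in the subsequent subsections to be non-vacuous.
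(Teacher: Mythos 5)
Your proposal is correct and uses the same underlying argument the paper invokes: the paper simply cites ``standard MMSE theory'' for the optimality of the conditional mean and then computes the trace of the conditional covariance, while you spell out that standard theory via the bias--variance decomposition and tower-property cancellation of the cross term. The only cosmetic difference is that the paper argues pointwise in $g$ (minimizing the conditional risk for each fixed $g$) whereas you decompose the unconditional MSE directly; both give the same conclusion.
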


\begin{proof}
For any fixed $g$, the conditional risk
$\E[\|x - \hat x(g)\|_2^2 \mid g]$ is uniquely minimized by
$\hat x^\star(g) = \E[x\mid g]$ (standard MMSE theory, cf.\
\cite{xia2025theoretical}). The minimal conditional risk at $g$ is
\[
  \E\bigl[\|x - \E[x\mid g]\|_2^2 \mid g\bigr]
  \;=\; \Tr\bigl(\Cov(x\mid g)\bigr),
\]
since for any random vector $X$ with mean $\mu$ and covariance
$\Sigma$ one has $\E\|X-\mu\|_2^2 = \Tr(\Sigma)$. Taking
expectation over $g$ and dividing by $d$ yields~\eqref{eq:xi-covxg}.
\end{proof}

Thus, when we refer to the ``minimal MSE achievable by an attacker''
for a given observation model, we mean $\xi_g$ as given
in~\eqref{eq:xi-covxg}, corresponding to a Bayes-optimal adversary
within that model.

\paragraph{An entropy-based lower bound on the minimal MSE}

We now relate the minimal MSE $\xi_g$ to the conditional entropy
$H(x\mid g)$, generalizing standard entropy--MMSE inequalities (cf.\
\cite{xia2025theoretical}).

\begin{theorem}[Entropy-based lower bound on gradient reconstruction]
\label{thm:entropy-mse-grad}
Under \Cref{ass:basic}, let $H(x\mid g)$ be the conditional
differential entropy of $x$ given the observation $g$. Then the
minimal reconstruction MSE $\xi_g$ in~\eqref{eq:xi-covxg} satisfies
\begin{equation}
  \xi_g
  \;\ge\;
  \frac{1}{2\pi e}\,\exp\!\Bigl(\frac{2}{d}\,H(x\mid g)\Bigr).
  \label{eq:mse-lb-entropy}
\end{equation}
\end{theorem}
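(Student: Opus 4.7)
The plan is to chain together two classical inequalities with Proposition~\ref{prop:min-mse-grad}: the Gaussian maximum-entropy bound (which upper-bounds conditional entropy by log-determinant of conditional covariance) and the arithmetic--geometric mean inequality for eigenvalues (which upper-bounds the determinant by a power of the trace), followed by Jensen's inequality to pass from pointwise to averaged quantities. Throughout, I will write $h(x\mid g=g_0)$ for the differential entropy of the conditional distribution $P(x\mid g=g_0)$, so that $H(x\mid g)=\E_g[h(x\mid g)]$, and denote $\Sigma(g_0)\coloneqq \Cov(x\mid g=g_0)$, which is positive definite a.s.\ by Assumption~\ref{ass:basic}.

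\paragraph{Step 1: Gaussian maximum entropy conditional on $g$.}
First I would fix $g_0$ and apply the standard fact that among all distributions on $\R^d$ with a given covariance $\Sigma$, the Gaussian maximizes differential entropy. This yields
\begin{equation*}
h(x\mid g=g_0)\;\le\;\tfrac{1}{2}\log\!\bigl((2\pi e)^d\,\det\Sigma(g_0)\bigr).
\end{equation*}
Assumption~\ref{ass:basic} guarantees that $\Sigma(g_0)$ is positive definite almost surely, so $\log\det\Sigma(g_0)$ is finite and the bound is non-degenerate.

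\paragraph{Step 2: AM--GM on the conditional covariance spectrum.}
Next I would apply the inequality $\det\Sigma\le\bigl(\tfrac{1}{d}\Tr\Sigma\bigr)^d$, which is just AM--GM on the (nonnegative) eigenvalues of $\Sigma(g_0)$. Substituting into Step~1 and dividing by $d/2$ gives
\begin{equation*}
\tfrac{2}{d}\,h(x\mid g=g_0)\;\le\;\log(2\pi e)+\log\!\Bigl(\tfrac{1}{d}\Tr\Sigma(g_0)\Bigr),
\end{equation*}
so that, after exponentiating,
\begin{equation*}
\exp\!\Bigl(\tfrac{2}{d}\,h(x\mid g=g_0)\Bigr)\;\le\;2\pi e\cdot\tfrac{1}{d}\Tr\Sigma(g_0).
\end{equation*}

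\paragraph{Step 3: Jensen's inequality to average over $g$.}
Finally I would take expectation over $g$. Since the exponential is convex, Jensen's inequality yields $\exp\!\bigl(\tfrac{2}{d}H(x\mid g)\bigr)=\exp\!\bigl(\tfrac{2}{d}\E_g[h(x\mid g)]\bigr)\le\E_g\!\bigl[\exp(\tfrac{2}{d}h(x\mid g))\bigr]$. Combined with Step~2 and the identity $\xi_g=\tfrac{1}{d}\E_g[\Tr\Sigma(g)]$ from Proposition~\ref{prop:min-mse-grad}, this gives $\exp\!\bigl(\tfrac{2}{d}H(x\mid g)\bigr)\le 2\pi e\cdot\xi_g$, which is~\eqref{eq:mse-lb-entropy} after dividing by $2\pi e$.

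\paragraph{Anticipated obstacle.}
The steps themselves are classical, so the main care is in the measure-theoretic bookkeeping rather than in any clever inequality. The most delicate point is ensuring that the Jensen step in Step~3 is applied in the right direction (convexity of $\exp$ gives $\exp\E\le\E\exp$, which is precisely what we need) and that the integrand $\exp(\tfrac{2}{d}h(x\mid g))$ is integrable so that the expectation is finite; this relies on Assumption~\ref{ass:basic}(iii), which asserts finiteness of the relevant entropies and moments. A secondary subtlety is justifying the Gaussian maximum-entropy bound conditionally, which is standard once one invokes Assumption~\ref{ass:basic}(i)--(ii) to ensure that $P(x\mid g=g_0)$ is a genuine probability density with finite second moment for almost every $g_0$. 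No other obstacles arise, and the bound is tight exactly when $x\mid g$ is an isotropic Gaussian, which indicates that~\eqref{eq:mse-lb-entropy} captures the right scaling between reconstruction error and residual uncertainty.
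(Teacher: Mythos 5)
Your proposal is correct and takes essentially the same route as the paper: Gaussian maximum entropy to bound $h(x\mid g=g_0)$ by $\tfrac12\log\bigl((2\pi e)^d\det\Sigma(g_0)\bigr)$, AM--GM on the eigenvalues of $\Sigma(g_0)$ to replace $\det$ by $(\Tr/d)^d$, and a single application of Jensen together with $\xi_g=\tfrac{1}{d}\E_g[\Tr\Sigma(g)]$ from Proposition~\ref{prop:min-mse-grad}. The only cosmetic difference is the order of exponentiation and averaging: you exponentiate the pointwise bound first and then apply Jensen for the convex $\exp$, whereas the paper takes the expectation of the logarithmic bound and then applies Jensen for the concave $\log$; since $\exp(\E[Y])\le\E[\exp Y]$ and $\E[\log Z]\le\log\E[Z]$ are the same inequality, the two chains are equivalent and arrive at the same conclusion.
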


\begin{proof}
Fix $g$ and define $\Sigma(g) \coloneqq \Cov(x\mid g)$. Under
\Cref{ass:basic}, $\Sigma(g)$ is symmetric and positive definite
almost surely. For each such $g$, the conditional distribution of $x$
given $g$ has entropy bounded above by that of a Gaussian with the
same covariance:
\[
  H(x\mid g=g)
  \;\le\;
  \frac{1}{2}\log\bigl((2\pi e)^d \det(\Sigma(g))\bigr),
\]
with equality iff $x\mid g$ is Gaussian. This is the usual maximum
entropy property of Gaussians. Taking expectation over $g$ gives
\begin{equation}
  H(x\mid g)
  \;=\;
  \E_g\bigl[H(x\mid g=g)\bigr]
  \;\le\;
  \E_g\Bigl[\tfrac12 \log\bigl((2\pi e)^d \det(\Sigma(g))\bigr)\Bigr].
  \label{eq:entropy-upper-cov}
\end{equation}
Let $\lambda_1(g),\dots,\lambda_d(g)$ be the eigenvalues of
$\Sigma(g)$ (all positive). Then
\[
  \det(\Sigma(g)) = \prod_{j=1}^d \lambda_j(g),
  \quad
  \Tr(\Sigma(g)) = \sum_{j=1}^d \lambda_j(g).
\]
By the Arithmetic Mean-Geometric Mean (AM--GM) inequality,
\[
  \prod_{j=1}^d \lambda_j(g)
  \;\le\;
  \Bigl(\frac{1}{d}\sum_{j=1}^d \lambda_j(g)\Bigr)^d
  = \Bigl(\frac{\Tr(\Sigma(g))}{d}\Bigr)^d,
\]
so
\[
  \log\det(\Sigma(g))
  \;\le\;
  d \log\Bigl(\frac{\Tr(\Sigma(g))}{d}\Bigr).
\]
Substituting into~\eqref{eq:entropy-upper-cov},
\[
  H(x\mid g)
  \;\le\;
  \E_g\Bigl[
    \tfrac12 \log\bigl((2\pi e)^d \det(\Sigma(g))\bigr)
  \Bigr]
  \;\le\;
  \E_g\Bigl[
    \frac{d}{2}\log\Bigl(2\pi e \frac{\Tr(\Sigma(g))}{d}\Bigr)
  \Bigr].
\]
Since $\log(\cdot)$ is concave, Jensen’s inequality yields
\[
  \E_g\Bigl[
    \log\Bigl(2\pi e \frac{\Tr(\Sigma(g))}{d}\Bigr)
  \Bigr]
  \;\le\;
  \log\Bigl(2\pi e \frac{\E_g[\Tr(\Sigma(g))]}{d}\Bigr).
\]
Therefore
\begin{equation}
  H(x\mid g)
  \;\le\;
  \frac{d}{2}\log\Bigl(
    2\pi e \frac{\E_g[\Tr(\Sigma(g))]}{d}
  \Bigr).
  \label{eq:H-upper-Trace}
\end{equation}
By \Cref{prop:min-mse-grad},
$\E_g[\Tr(\Sigma(g))] = d\,\xi_g$, so~\eqref{eq:H-upper-Trace}
becomes
\[
  H(x\mid g)
  \;\le\;
  \frac{d}{2}\log(2\pi e\,\xi_g).
\]
Rearranging,
\[
  \log(2\pi e\,\xi_g) \;\ge\; \frac{2}{d}H(x\mid g),
  \qquad
  2\pi e\,\xi_g \;\ge\; \exp\!\Bigl(\frac{2}{d}H(x\mid g)\Bigr),
\]
which yields~\eqref{eq:mse-lb-entropy}.
\end{proof}

Note that $H(x)$---and hence the absolute scale of these lower
bounds---is fully determined by the underlying dataset distribution
and does not depend on teleportation. In our comparisons between
teleported and non-teleported channels, $H(x)$ cancels and only
\emph{differences} or ratios matter.

\subsection{A parametric lower bound on $H(x| g)$ via Gaussian mixtures}

We now introduce a specific probabilistic model for the gradient
signal and derive a tractable parametric lower bound on $H(x\mid g)$.
The modelling choices mirror those used for intermediate features in
\cite{xia2025theoretical}, but here are applied to gradients.

\subsubsection{Gradient feature and observation model}

\paragraph{Clean gradient feature.}

Let $G: \mathbb{R}^{d} \rightarrow \mathbb{R}^{m}$ be a deterministic mapping producing a \emph{clean} gradient feature from input $x$. Specifically, let $u = G(x) \in \R^m$ denote a feature
derived deterministically from $(x,y,W)$ (e.g., the gradient with
respect to first-layer pre-activations, or a flattened stack of
first-layer weight gradients). Thus $u$ is a deterministic function
of $x$ once the model and label are fixed.

\begin{assumption}[Gaussian Mixture Model (GMM) for $u$]
\label{ass:gmm-u}
We assume that the marginal distribution of $u$ can be well
approximated by a Gaussian mixture
\begin{equation}
  u \sim \sum_{i=1}^K \pi_i\,\N(\mu_i,\Sigma_i),
  \quad
  \sum_{i=1}^K \pi_i = 1,\quad \pi_i > 0,\quad \Sigma_i \succ 0.
  \label{eq:gmm-u}
\end{equation}
\end{assumption}

This GMM assumption is standard in information-theoretic analyses of
representations~\cite{xia2025theoretical} and serves as our common
surrogate model for gradient features.

\paragraph{Noisy gradient observation.}
We model the attacker’s baseline observation as a noisy version of
$u$:
\begin{equation}
  g_0 = u + \varepsilon,
  \quad
  \varepsilon \sim \N(0,\Sigma_g),\quad
  \varepsilon \perp (x,u),
  \label{eq:g-channel}
\end{equation}
where $\Sigma_g \succ 0$ is a fixed positive-definite covariance
matrix. This captures gradient perturbations due to stochastic
training, subsampling, or other noise sources; $\Sigma_g$ is assumed
known to the attacker, as in~\cite{xia2025theoretical}. 
We use this Gaussian channel as the standard abstraction of gradient perturbations for the subsequent information-theoretic analysis.

\subsubsection{A mutual-information identity for deterministic features}

We will repeatedly use the following simple lemma for deterministic
features.

\begin{lemma}[Mutual information for deterministic feature maps]
\label{lem:mi-deterministic}
Let $u = G(x)$ be a deterministic function of $x$, and let $g$ be a
random variable such that $p(g\mid x,u) = p(g\mid u)$
(i.e., $g$ depends on $(x,u)$ only through $u$). Then
\[
  I(x;g) = I(u;g).
\]
\end{lemma}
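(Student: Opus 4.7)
The plan is to reduce the claim to the standard chain rule for mutual information, applied to the joint variable $(x,u)$ versus the observation $g$. The key idea is that two natural decompositions of $I(x,u;g)$ can be written down, and the hypotheses of the lemma force each to collapse onto one of $I(x;g)$ or $I(u;g)$.

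Concretely, I would first invoke the chain rule in both orderings:
\[
  I(x,u;g) \;=\; I(x;g) + I(u;g\mid x) \;=\; I(u;g) + I(x;g\mid u).
\]
Next I would argue that $I(u;g \mid x) = 0$. Since $u = G(x)$ is a deterministic function of $x$, conditioning on $x$ renders $u$ a point mass at $G(x)$; interpreting the conditional mutual information as $\E_x\!\bigl[I(u;g\mid x{=}x)\bigr]$, each summand vanishes because a constant random variable shares no information with anything. Then I would show that $I(x;g \mid u) = 0$: the hypothesis $p(g\mid x,u) = p(g\mid u)$ is exactly the conditional independence $x \perp g \mid u$, which is equivalent to the vanishing of the conditional mutual information. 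Substituting both zero terms into the chain-rule identity gives $I(x;g) = I(u;g)$.

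The only delicate point is that the joint law of $(x,u)$ is singular with respect to a product measure (its support lies on the graph $\{(x,G(x))\}$), so one should not try to manipulate joint densities of $(x,u)$ directly. This is handled by treating $I(u;g\mid x)$ via its conditional-expectation form and by relying on Assumption~\ref{ass:basic} for all marginal and conditional densities to ensure that $I(x;g)$, $I(u;g)$, and $I(x;g\mid u)$ are individually well defined and finite. No additional regularity beyond that assumption is required, so I do not anticipate a substantive technical obstacle; the lemma is essentially a two-line consequence of the chain rule once the conditional-independence hypothesis is read correctly.
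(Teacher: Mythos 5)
Your proof is correct, but it takes a genuinely different route from the paper's. You apply the chain rule for mutual information twice to $I(x,u;g)$ and reduce the claim to the vanishing of two conditional mutual informations, $I(u;g\mid x)=0$ (because $u$ is a deterministic function of $x$) and $I(x;g\mid u)=0$ (the conditional independence $x\perp g\mid u$). The paper instead works directly with conditional entropies: it writes $I(x;g)=H(g)-H(g\mid x)$, uses $p(g\mid x)=p(g\mid x,G(x))=p(g\mid u)$ to conclude $H(g\mid x)=H(g\mid u)$, and then identifies this with $I(u;g)=H(g)-H(g\mid u)$. The paper's route is slightly more elementary — it never needs to form the joint variable $(x,u)$, so the singularity concern you flag (the joint law concentrated on the graph of $G$) simply never arises, and it avoids conditional mutual information entirely. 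Your route is arguably cleaner conceptually, because the symmetry of the two chain-rule decompositions makes the identity look tautological once the two zero terms are noted, but it does require the care you correctly identify: $I(u;g\mid x)$ must be read as $\E_x[I(u;g\mid x=x)]$ rather than via joint densities. Both arguments are valid under Assumption~\ref{ass:basic}.
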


Where $I(x; g)$ denotes the mutual information between $x$ and $g$.

\begin{proof}
Since $u$ is a deterministic function of $x$, we have
$H(u\mid x)=0$ and $H(x,u) = H(x)$. Moreover,
$p(g\mid x) = p(g\mid u)$ by the conditional-independence
assumption, so
\[
  H(g\mid x) = \E_x H(g\mid x=x)
             = \E_x H(g\mid u=G(x))
             = H(g\mid u).
\]
Therefore
\[
  I(x;g)
  = H(g) - H(g\mid x)
  = H(g) - H(g\mid u)
  = I(u;g).\qedhere
\]
\end{proof}

We will apply this lemma to both the baseline channel $g_0$ and the
teleported channel $g$ below.

\subsubsection{Parametric GMM-based lower bound on $H(x\mid g_0)$}

We now adapt the mixture-entropy bound used in
\cite{xia2025theoretical} to gradients.

\begin{theorem}[Parametric lower bound on $H(x\mid g_0)$]
\label{thm:grad-gmm-lb}
Under \Cref{ass:basic} and \Cref{,ass:gmm-u} and the channel
\eqref{eq:g-channel}, the conditional entropy $H(x\mid g_0)$
satisfies
\begin{equation}
  H(x\mid g_0)
  \;\ge\;
  H(x)
  -
  \sum_{i=1}^K \pi_i\left(
    -\log\pi_i
    + \frac{1}{2}\log\frac{|\Sigma_i + \Sigma_g|}{|\Sigma_g|}
  \right).
  \label{eq:Hxg-lb-gmm}
\end{equation}
\end{theorem}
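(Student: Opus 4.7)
The strategy is to convert the statement into an upper bound on the mutual information $I(x;g_0)$ via the identity $H(x\mid g_0)=H(x)-I(x;g_0)$, and then replace $I(x;g_0)$ with the more tractable quantity $I(u;g_0)$. Since $u=G(x)$ is a deterministic function of $x$ and the channel \eqref{eq:g-channel} makes $g_0$ depend on $(x,u)$ only through $u$, \Cref{lem:mi-deterministic} applies and gives $I(x;g_0)=I(u;g_0)$. Thus it suffices to show that
\[
  I(u;g_0)\;\le\;\sum_{i=1}^K \pi_i\!\left(-\log\pi_i + \tfrac12\log\frac{|\Sigma_i+\Sigma_g|}{|\Sigma_g|}\right),
\]
and plugging this back into $H(x\mid g_0)=H(x)-I(x;g_0)$ will produce \eqref{eq:Hxg-lb-gmm}.

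Next I would compute the two pieces of $I(u;g_0)=H(g_0)-H(g_0\mid u)$. The conditional term is immediate: conditionally on $u$, $g_0=u+\varepsilon$ is a pure translation of a Gaussian noise vector independent of $u$, so $H(g_0\mid u)=H(\varepsilon)=\tfrac12\log\bigl((2\pi e)^m|\Sigma_g|\bigr)$. For the unconditional term, \Cref{ass:gmm-u} together with the independence of $\varepsilon$ implies that $g_0$ is itself a Gaussian mixture with weights $\pi_i$ and components $\N(\mu_i,\Sigma_i+\Sigma_g)$. To upper bound $H(g_0)$ I would introduce the latent component indicator $Z\in\{1,\dots,K\}$ with $\Pr(Z=i)=\pi_i$ and $g_0\mid Z=i\sim\N(\mu_i,\Sigma_i+\Sigma_g)$, and use the standard chain-rule bound
\[
  H(g_0)\;\le\; H(g_0,Z)\;=\;H(Z)+H(g_0\mid Z)\;=\;-\sum_{i=1}^K\pi_i\log\pi_i + \sum_{i=1}^K \pi_i\cdot\tfrac12\log\bigl((2\pi e)^m|\Sigma_i+\Sigma_g|\bigr),
\]
which follows from $H(g_0)\le H(g_0,Z)$ and the Gaussian-entropy formula for each conditional component.

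Subtracting $H(g_0\mid u)$ cancels the $\tfrac12\log(2\pi e)^m$ factor and leaves the determinant ratio $|\Sigma_i+\Sigma_g|/|\Sigma_g|$, yielding the claimed bound on $I(u;g_0)$ and hence \eqref{eq:Hxg-lb-gmm}. The main obstacle I anticipate is not the chain-rule step itself, which is standard once $Z$ is introduced, but the careful verification that the conditional-independence hypothesis of \Cref{lem:mi-deterministic} is satisfied for the gradient channel: one must check that $\varepsilon\perp(x,u)$ (given) really does imply $p(g_0\mid x,u)=p(g_0\mid u)$, and that the GMM density assumed in \Cref{ass:gmm-u} produces a well-defined joint law for which all entropies in the above manipulations are finite (so that \Cref{ass:basic}(iii) applies and the chain-rule inequality $H(g_0)\le H(g_0,Z)$ is not vacuous or infinite). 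Once these regularity checks are in place, the remaining algebra is a direct combination of the three displayed identities above.
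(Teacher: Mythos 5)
Your proposal is correct and follows essentially the same route as the paper: reduce to bounding $I(u;g_0)$ via \Cref{lem:mi-deterministic}, compute $H(g_0\mid u)$ as the Gaussian noise entropy, observe that $g_0$ is a convolution GMM, and apply the mixture-entropy upper bound before canceling the $(2\pi e)^m$ terms. Your explicit introduction of the component indicator $Z$ and the chain-rule inequality $H(g_0)\le H(Z)+H(g_0\mid Z)$ is exactly the mechanism the paper invokes parenthetically for the mixture-entropy bound, so there is no substantive difference in approach.
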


\begin{proof}
Because $u = G(x)$ is deterministic given $x$, and $g_0$ depends on
$(x,u)$ only through $u$ via~\eqref{eq:g-channel}, we have
$g_0 \perp x \mid u$ and the conditions of \Cref{lem:mi-deterministic}
hold. Thus
\[
  I(x;g_0) = I(u;g_0),
\]
and
\[
  H(x\mid g_0) = H(x) - I(x;g_0) = H(x) - I(u;g_0).
\]
We bound $I(u;g_0)$ from above using the GMM model. We have
\[
  I(u;g_0) = H(g_0) - H(g_0\mid u).
\]
From~\eqref{eq:g-channel}, $g_0\mid u \sim \N(u,\Sigma_g)$, so
\[
  H(g_0\mid u)
  = \tfrac12\log\bigl((2\pi e)^m |\Sigma_g|\bigr).
\]
Marginally, $g_0$ is the convolution of the GMM $u$ with the Gaussian
$\varepsilon$, hence
\[
  g_0 \sim \sum_{i=1}^K \pi_i\,\N(\mu_i,\Sigma_i + \Sigma_g).
\]
For any mixture density $p(z) = \sum_i \pi_i p_i(z)$ with components
$p_i$, the differential entropy satisfies the standard upper bound
\[
  H(p) \;\le\; H(\pi) + \sum_i \pi_i H(p_i),
\]
where $H(\pi) = -\sum_i \pi_i\log\pi_i$ is the discrete entropy of
the mixture weights (this follows by considering the joint entropy of
the component index and the sample). Applying this with Gaussian
components $p_i = \N(\mu_i,\Sigma_i+\Sigma_g)$ yields
\[
  H(g_0)
  \;\le\;
  \sum_{i=1}^K \pi_i\left(
    -\log\pi_i
    + \frac{1}{2}\log\bigl((2\pi e)^m|\Sigma_i + \Sigma_g|\bigr)
  \right),
\]
as in~\cite{xia2025theoretical}. Therefore

\begin{align*}
  I(u;g_0)
  &\le
  \sum_{i=1}^K \pi_i\left(
    -\log\pi_i
    + \frac{1}{2}\log\bigl((2\pi e)^m|\Sigma_i + \Sigma_g|\bigr)
  \right)
  - \frac{1}{2}\log\bigl((2\pi e)^m |\Sigma_g|\bigr) \\
  &= \sum_{i=1}^K \pi_i\left(
      -\log\pi_i
      + \frac{1}{2}\log\bigl((2\pi e)^m|\Sigma_i + \Sigma_g|\bigr)
    \right)
    + \sum_{i=1}^K \pi_i\left(
      - \frac{1}{2}\log\bigl((2\pi e)^m |\Sigma_g|\bigr)
    \right) \\
  &= \sum_{i=1}^K \pi_i\left(
      -\log\pi_i
      + \frac{1}{2}\log\bigl((2\pi e)^m|\Sigma_i + \Sigma_g|\bigr)
      - \frac{1}{2}\log\bigl((2\pi e)^m |\Sigma_g|\bigr)
    \right) \\
  &= \sum_{i=1}^K \pi_i\left(
      -\log\pi_i
      + \frac{1}{2}\log\frac{|\Sigma_i + \Sigma_g|}{|\Sigma_g|}
    \right).
\end{align*}

where the $(2\pi e)^m$ terms cancel. Substituting into
$H(x\mid g_0) = H(x) - I(u;g_0)$ yields~\eqref{eq:Hxg-lb-gmm}.
\end{proof}

\Cref{thm:grad-gmm-lb} yields a parametric lower bound on
$H(x\mid g_0)$---parametric in the GMM and noise covariances. Via
\Cref{thm:entropy-mse-grad}, this in turn induces a lower bound on
the minimal reconstruction MSE for an attacker observing $g_0$. Our
teleportation analysis will reuse exactly the same ingredients (GMM
approximation and mixture-entropy bound) so comparisons are on equal
footing.

\subsection{Teleportation / change-of-basis noise on gradients}

We now incorporate teleportation (change-of-basis; CoB) symmetry as a
source of private randomness in the gradient dynamics and analyze its
impact on the \emph{same} lower-bound pipeline used for $g_0$.

\subsubsection{Teleportation as private multiplicative noise}

\paragraph{Teleportation structure.}
For each layer $\ell$, let $\tau^{[\ell]}$ denote the corresponding CoB vector (with all entries nonzero).
The teleported gradient at layer $\ell$ is obtained by column-scaling with $\tau^{[\ell-1]}$ and row-scaling with $1/\tau^{[\ell]}$, i.e.
\[
    dV^{[\ell]}
    \;=\;
    \tau^{[\ell-1]} \,\bullet\, dW^{[\ell]} \,\bullet\, \bigl(1/\tau^{[\ell]}\bigr),
\]
\[
dV^{[\ell]}_{ij}
\;=\;
\tau^{[\ell-1]}_j \;
dW^{[\ell]}_{ij} \;
\bigl(1/\tau^{[\ell]}_i\bigr),
\]
where the left operation multiplies each column of $dW^{[\ell]}$ by the corresponding coordinate of $\tau^{[\ell-1]}$, and the right operation multiplies each row by the corresponding coordinate of $1/\tau^{[\ell]}$.
Consequently, each gradient entry acquires a multiplicative factor equal to a ratio of CoB coordinates.
As such, each gradient entry picks up a multiplicative factor equal to a
ratio of CoB entries. Flattening all gradient parameters into a
single vector, we write the clean gradient feature as $u$ and its
teleported version as
\begin{equation}
  \tilde u = R(\tau)\,u,
  \label{eq:teleported-u}
\end{equation}
where $R(\tau)$ is a diagonal matrix with entries
$r_j(\tau) = \tau_{b(j)}/\tau_{a(j)}$ corresponding to the
appropriate input/output channels $(a(j),b(j))$ of coordinate $j$.
In practice, these ratios are constrained by the underlying
channel-wise $\tau^{[\ell]}$ structure; our analysis below treats
$\{r_j(\tau)\}$ as effective per-coordinate scalings induced by that
structure.

\paragraph{Threat model for teleportation.}
We adopt the following threat model.

\begin{assumption}[Teleportation threat model]
\label{ass:tele-threat}
\mbox{}
\begin{enumerate}[label=(\roman*),nosep]
  \item The CoB parameters $\tau$ are sampled from a distribution
        $P_\tau$ that is independent of $(x,u)$.
  \item Teleportation is applied internally in the training update
        rule, so that the observable gradient feature (e.g.,
        weight differences across a step) is a function of $\tilde u$
        rather than $u$. Algebraically, this yields an observation
        of the form~\eqref{eq:g-tele} below.
  \item The adversary has white-box access to the model architecture
        and weights but \emph{does not} observe $\tau$ directly.
        They know the distribution $P_\tau$.
\end{enumerate}
\end{assumption}

Under \Cref{ass:tele-threat}, the teleported observation channel is
\begin{equation}
  g = \tilde u + \varepsilon
  = R(\tau)\,u + \varepsilon,
  \quad
  \varepsilon \sim \N(0,\Sigma_g),
  \quad
  \varepsilon \perp (x,u,\tau).
  \label{eq:g-tele}
\end{equation}
This is the same additive-noise form as in~\eqref{eq:g-channel},
applied to a multiplicatively perturbed feature $R(\tau)u$.

\subsubsection{Teleportation-aware entropy lower bound}

We now derive the teleportation-aware counterpart of
\Cref{thm:grad-gmm-lb}, using the same GMM approximation for $u$.
Here the relevant mutual-information identity is again supplied by
\Cref{lem:mi-deterministic}.

\begin{theorem}[Teleportation-aware lower bound on $H(x\mid g)$]
\label{thm:teleportation-Hxg}
Under \Cref{ass:basic}, \Cref{ass:gmm-u}, \Cref{ass:tele-threat} and the teleported
channel~\eqref{eq:g-tele}, the conditional entropy $H(x\mid g)$
satisfies
\begin{equation}
  H(x\mid g)
  \;\ge\;
  H(x)
  -
  \sum_{i=1}^K \pi_i\left(
    -\log\pi_i
    + \frac{1}{2}\,\E_\tau\log\frac{|R(\tau)\Sigma_i R(\tau)^\top + \Sigma_g|}
                                         {|\Sigma_g|}
  \right),
  \label{eq:Hxg-tele}
\end{equation}
where the expectation is taken w.r.t.\ $\tau \sim P_\tau$.
\end{theorem}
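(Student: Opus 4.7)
The plan is to mirror the proof of \Cref{thm:grad-gmm-lb}, treating the teleportation variable $\tau$ as additional private randomness and handling it via a chain-rule argument that preserves the mixture-entropy bound.

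First I would establish $I(x;g)=I(u;g)$ by invoking \Cref{lem:mi-deterministic}: under \eqref{eq:g-tele} and \Cref{ass:tele-threat}, $g$ depends on $x$ only through $u=G(x)$ because $\tau$ and $\varepsilon$ are jointly independent of $x$, so $p(g\mid x,u)=p(g\mid u)$. This gives $H(x\mid g)=H(x)-I(u;g)$, reducing the problem to upper bounding $I(u;g)$.

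Next I would pass to a conditional channel that restores Gaussianity. Using $\tau\perp u$ from \Cref{ass:tele-threat}(i), the chain rule yields $I(u;g,\tau)=I(u;\tau)+I(u;g\mid\tau)=I(u;g\mid\tau)$, while monotonicity of mutual information gives $I(u;g)\le I(u;g,\tau)$. Hence $I(u;g)\le I(u;g\mid\tau)=H(g\mid\tau)-H(g\mid u,\tau)$. Conditional on $(u,\tau)$, the channel is exactly $\N(R(\tau)u,\Sigma_g)$, so $H(g\mid u,\tau)=\tfrac12\log((2\pi e)^m|\Sigma_g|)$, mirroring the clean-channel computation. The remaining term $H(g\mid\tau)$ is then bounded pointwise in $\tau$: convolving the GMM \eqref{eq:gmm-u} pushed forward by the linear map $R(\tau)$ with the Gaussian noise $\varepsilon$ yields $g\mid\tau\sim\sum_i\pi_i\,\N(R(\tau)\mu_i,\,R(\tau)\Sigma_iR(\tau)^\top+\Sigma_g)$, with each component covariance positive definite because $\Sigma_g\succ 0$. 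Applying the mixture-entropy bound $H(p)\le H(\pi)+\sum_i\pi_iH(p_i)$ pointwise in $\tau$, and then taking $\E_\tau$ (justified by Fubini and the finiteness premise of \Cref{ass:basic}), introduces exactly the $\E_\tau\log|R(\tau)\Sigma_iR(\tau)^\top+\Sigma_g|$ term. Assembling the pieces, the $(2\pi e)^m|\Sigma_g|$ factors telescope because $\sum_i\pi_i=1$, yielding \eqref{eq:Hxg-tele}.

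The main conceptual obstacle is the inequality $I(u;g)\le I(u;g\mid\tau)$, which looks paradoxical since conditioning typically \emph{decreases} entropy; it holds here precisely because $\tau\perp u$ forces $I(u;\tau)=0$, so revealing $\tau$ carries no marginal information about $u$ and can only sharpen the $u$--$g$ dependence. A secondary technical issue is integrability of $\E_\tau\log|R(\tau)\Sigma_iR(\tau)^\top+\Sigma_g|$: the lower bound $\log|\Sigma_g|$ is immediate from $\Sigma_g\succ 0$, whereas integrability from above must be inherited from tail control on $R(\tau)$ under $P_\tau$, which is automatic for the bounded change-of-basis families used in WARP but should be stated explicitly to make the bound rigorous in full generality.
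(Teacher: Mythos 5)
Your proof is correct and follows essentially the same route as the paper's: reduce to $I(u;g)$ via \Cref{lem:mi-deterministic}, bound $I(u;g)\le I(u;g\mid\tau)$ using $\tau\perp u$ via the chain rule, identify $g\mid\tau$ as a GMM with component covariances $R(\tau)\Sigma_i R(\tau)^\top+\Sigma_g$, and apply the mixture-entropy upper bound pointwise in $\tau$ before taking $\E_\tau$. The only cosmetic difference is that you argue $I(u;g)\le I(u;g,\tau)=I(u;g\mid\tau)$ by monotonicity and then the chain rule, whereas the paper writes $I(u;g)=I(u;g\mid\tau)-I(u;\tau\mid g)\le I(u;g\mid\tau)$; these are the same inequality, and your explicit integrability remark is a reasonable extra caution that the paper subsumes under \Cref{ass:basic}(iii).
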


\begin{proof}
As before, $u = G(x)$ is deterministic given $x$, and $g$ depends on
$(x,u)$ only through $(u,\tau)$ via~\eqref{eq:g-tele}. In particular,
we have the Markov chain
\[
  x \to u \to (g,\tau) \to g,
\]
and $g \perp x \mid (u,\tau)$. Integrating over the independent
$\tau$ yields $p(g\mid x,u) = p(g\mid u)$, and hence the conditions
of \Cref{lem:mi-deterministic} hold, giving
\[
  I(x;g) = I(u;g),
  \quad
  H(x\mid g) = H(x) - I(x;g) = H(x) - I(u;g).
\]

We bound $I(u;g)$ from above. By the chain rule and independence of
$u$ and $\tau$,
\[
  I(u;g)
  = I(u;g,\tau) - I(u;\tau\mid g)
  = I(u;g\mid\tau) - I(u;\tau\mid g)
  \;\le\; I(u;g\mid\tau),
\]
since $I(u;\tau\mid g) \ge 0$. Here $I(u;g\mid\tau)$ is conditional
mutual information and can be written as
\[
  I(u;g\mid\tau)
  = \E_\tau\bigl[I(u;g\mid\tau=t)\bigr].
\]
For a fixed realization $\tau=t$, the channel is linear with Gaussian
noise:
\[
  g\mid\tau=t = R(t)u + \varepsilon.
\]
Conditionally on mixture component $i$, $u\mid i \sim \N(\mu_i,\Sigma_i)$, so
\[
  g \mid (i,\tau=t)
  \sim \N\bigl(R(t)\mu_i,\;R(t)\Sigma_i R(t)^\top + \Sigma_g\bigr),
\]
and $g\mid\tau=t$ is a GMM with components indexed by $i$. For this
fixed $t$,
\[
  I(u;g\mid\tau=t) = H(g\mid\tau=t) - H(g\mid u,\tau=t).
\]
Since $g\mid(u,\tau=t) \sim \N(R(t)u,\Sigma_g)$, we obtain
\[
  H(g\mid u,\tau=t)
  = \tfrac12\log\bigl((2\pi e)^m|\Sigma_g|\bigr).
\]
Using the same mixture-entropy bound as before, applied to the GMM
$g\mid\tau=t$, we have
\[
  H(g\mid\tau=t)
  \;\le\;
  \sum_{i=1}^K \pi_i\left(
    -\log\pi_i
    + \tfrac12\log\bigl((2\pi e)^m|R(t)\Sigma_i R(t)^\top + \Sigma_g|\bigr)
  \right).
\]
Therefore
\begin{align*}
  I(u;g\mid\tau=t)
  &\le
  \sum_{i=1}^K \pi_i\left(
    -\log\pi_i
    + \tfrac12\log\bigl((2\pi e)^m|R(t)\Sigma_i R(t)^\top + \Sigma_g|\bigr)
  \right)
  - \tfrac12\log\bigl((2\pi e)^m|\Sigma_g|\bigr) \\
  &=
  \sum_{i=1}^K \pi_i\left(
    -\log\pi_i
    + \tfrac12\log\frac{|R(t)\Sigma_i R(t)^\top + \Sigma_g|}{|\Sigma_g|}
  \right).
\end{align*}
Taking expectation over $\tau$ yields
\[
  I(u;g\mid\tau)
  = \E_\tau I(u;g\mid\tau=t)
  \;\le\;
  \sum_{i=1}^K \pi_i\left(
    -\log\pi_i
    + \tfrac12\,\E_\tau\log\frac{|R(\tau)\Sigma_i R(\tau)^\top + \Sigma_g|}
                                        {|\Sigma_g|}
  \right).
\]
Combining $I(u;g) \le I(u;g\mid\tau)$ with
$H(x\mid g) = H(x) - I(u;g)$ gives~\eqref{eq:Hxg-tele}.
\end{proof}

\Cref{thm:teleportation-Hxg} is the teleportation analogue of
\Cref{thm:grad-gmm-lb}, obtained via the same steps, with $\Sigma_i$
replaced by $R(\tau)\Sigma_i R(\tau)^\top$ and an additional
expectation over $\tau$.

\subsection{Diagonal approximation and the role of the CoB distribution}

To make the teleportation effect more interpretable at a
per-coordinate level, we now adopt a diagonal approximation. This is
a modelling simplification, similar in spirit to
\cite{xia2025theoretical}, and all comparisons between teleported and
baseline channels will be made \emph{within} this shared surrogate
approximation.

\subsubsection{Diagonal approximation}

\begin{assumption}[Diagonal covariance approximation]
\label{ass:diag}
We work in the canonical channel basis in which teleportation is
defined and posit that, in this basis,
\[
  \Sigma_i = \diag(\sigma_{i,1}^2,\dots,\sigma_{i,m}^2),
  \quad
  \Sigma_g = \diag(\gamma_1^2,\dots,\gamma_m^2),
\]
and the teleportation matrix has the form
\[
  R(\tau) = \diag(r_1(\tau),\dots,r_m(\tau)).
\]
\end{assumption}

That is, we adopt a surrogate model in which gradient covariance,
observation noise and CoB factors act coordinatewise in the natural
channel basis, rather than attempting to diagonalize arbitrary
covariances and then reinterpret teleportation in that rotated frame.
This is not claimed to be an exact description of real networks, but
a structured approximation for per-coordinate interpretation.

Under \Cref{ass:diag},
\[
  R(\tau)\Sigma_i R(\tau)^\top + \Sigma_g
  = \diag\bigl(\gamma_1^2 + r_1(\tau)^2\sigma_{i,1}^2,\dots,
               \gamma_m^2 + r_m(\tau)^2\sigma_{i,m}^2\bigr),
\]
and hence
\begin{equation}
  \frac{|R(\tau)\Sigma_i R(\tau)^\top + \Sigma_g|}{|\Sigma_g|}
  = \prod_{j=1}^m \Bigl(1 + \alpha_{i,j} r_j(\tau)^2\Bigr),
  \quad
  \alpha_{i,j} \coloneqq \frac{\sigma_{i,j}^2}{\gamma_j^2}.
  \label{eq:ratio-diag}
\end{equation}
Taking logs and expectation in~\eqref{eq:Hxg-tele}, we obtain
\[
  \E_\tau\log\frac{|R(\tau)\Sigma_i R(\tau)^\top + \Sigma_g|}{|\Sigma_g|}
  = \sum_{j=1}^m \psi_{i,j},
\]
where we define the per-coordinate quantities
\begin{equation}
  \psi_{i,j}
  \;\coloneqq\;
  \E_\tau\bigl[\log(1 + \alpha_{i,j} r_j(\tau)^2)\bigr].
  \label{eq:psi-ij-def}
\end{equation}
Thus \Cref{thm:teleportation-Hxg} becomes, under \Cref{ass:diag},
\begin{equation}
  H(x\mid g)
  \;\ge\;
  H(x)
  -
  \sum_{i=1}^K \pi_i\left(
    -\log\pi_i
    + \frac{1}{2}\sum_{j=1}^m \psi_{i,j}
  \right).
  \label{eq:Hxg-tele-diag}
\end{equation}

\subsubsection{Baseline (non-teleported) diagonal bound}

For comparison, if no teleportation is applied, we have $R(\tau)\equiv I$ and
$r_j(\tau)^2 \equiv 1$. Under the same diagonal surrogate,
\[
  \frac{|\Sigma_i + \Sigma_g|}{|\Sigma_g|}
  = \prod_{j=1}^m (1+\alpha_{i,j}),
\]
and the GMM-based entropy lower bound~\eqref{eq:Hxg-lb-gmm} reduces
to
\begin{equation}
  H(x\mid g_0)
  \;\ge\;
  H^{\mathrm{lb}}_0
  \;\coloneqq\;
  H(x)
  -
  \sum_{i=1}^K \pi_i
  \left(
    -\log\pi_i
    + \frac{1}{2}\sum_{j=1}^m \log\bigl(1 + \alpha_{i,j}\bigr)
  \right).
  \label{eq:H0-lb-def}
\end{equation}
We explicitly introduce $H^{\mathrm{lb}}_0$ to denote the analytic
lower bound on $H(x\mid g_0)$ obtained under the GMM and diagonal
surrogate.

Similarly, in the teleported diagonal setting
\eqref{eq:Hxg-tele-diag} we define
\begin{equation}
  H(x\mid g)
  \;\ge\;
  H^{\mathrm{lb}}_{\tele}
  \;\coloneqq\;
  H(x)
  -
  \sum_{i=1}^K \pi_i\left(
    -\log\pi_i
    + \frac{1}{2}\sum_{j=1}^m \psi_{i,j}
  \right).
  \label{eq:Htele-lb-def}
\end{equation}
Both $H^{\mathrm{lb}}_0$ and $H^{\mathrm{lb}}_{\tele}$ are computed
from exactly the same modelling ingredients and diagonal surrogate.

\subsubsection{Energy-preserving CoB and improvement of the bound}

To isolate teleportation as a pure source of \emph{randomization}
(rather than a trivial global rescaling of gradient energy), we
consider energy-preserving CoB distributions at the level of the
per-coordinate effective ratios.

\begin{assumption}[Energy-preserving CoB marginals]
\label{ass:energy-pres}
For each coordinate $j$, the marginal distribution of $r_j(\tau)^2$
satisfies $\E_\tau[r_j(\tau)^2] = 1$.
\end{assumption}

This condition enforces that, on average, teleportation does not
inflate or shrink per-coordinate gradient energy; it only redistributes
it stochastically. In practice, the defender controls the sampling of
$\tau$ and hence the induced distribution of ratios $\{r_j(\tau)\}$,
subject to architectural constraints (shared channels, etc.). We do
not model those constraints explicitly here; we treat $\{r_j(\tau)\}$
as effective per-coordinate scalings whose marginals can be chosen to
satisfy \Cref{ass:energy-pres}.

We do not assume independence of $r_j(\tau)$ across $j$, only these
marginals.

Define, for each $(i,j)$,
\begin{equation}
  \Delta\psi_{i,j}
  \;\coloneqq\;
  \log\bigl(1+\alpha_{i,j}\bigr) - \psi_{i,j}
  \;=\;
  \log\bigl(1+\alpha_{i,j}\bigr)
  - \E_\tau\bigl[\log(1+\alpha_{i,j} r_j(\tau)^2)\bigr].
  \label{eq:Deltapsi-def}
\end{equation}
Subtracting \eqref{eq:H0-lb-def} from \eqref{eq:Htele-lb-def} yields
an exact relation between the two analytic entropy lower bounds under
the diagonal surrogate.

\begin{corollary}[Exact relation between diagonal entropy lower bounds]
\label{cor:tele-improvement-exact-lb}
Under \Cref{ass:diag}, the diagonal entropy lower bounds
\eqref{eq:H0-lb-def}--\eqref{eq:Htele-lb-def} satisfy
\begin{equation}
  H^{\mathrm{lb}}_{\tele}
  \;=\;
  H^{\mathrm{lb}}_0
  + \frac{1}{2}\sum_{i=1}^K \pi_i\sum_{j=1}^m \Delta\psi_{i,j},
  \label{eq:Hxg-improvement-exact-lb}
\end{equation}
with $\Delta\psi_{i,j}$ defined in~\eqref{eq:Deltapsi-def}. If, in
addition, \Cref{ass:energy-pres} holds, then each $\Delta\psi_{i,j}$
is non-negative, and hence
\begin{equation}
  H^{\mathrm{lb}}_{\tele}
  \;\ge\;
  H^{\mathrm{lb}}_0.
  \label{eq:Hlb-tele-vs-Hlb0}
\end{equation}
\end{corollary}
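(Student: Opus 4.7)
}
The plan is to split the statement into two pieces: first, an algebraic identity relating the two diagonal entropy lower bounds, and second, a sign statement that relies only on Jensen's inequality together with the energy-preserving marginal constraint.

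For the identity~\eqref{eq:Hxg-improvement-exact-lb}, I would simply subtract \eqref{eq:H0-lb-def} from \eqref{eq:Htele-lb-def}. Both bounds share the $H(x)$ term and the mixture-entropy contribution $-\sum_i \pi_i(-\log\pi_i)$, which therefore cancel. What remains is the difference of the two per-coordinate log-determinant terms, $\tfrac12\sum_i\pi_i\sum_j\bigl(\log(1+\alpha_{i,j}) - \psi_{i,j}\bigr)$, which is exactly $\tfrac12\sum_i\pi_i\sum_j \Delta\psi_{i,j}$ by the definition~\eqref{eq:Deltapsi-def}. This step is purely bookkeeping and requires no analytical input.

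For the sign statement, the key observation is that for any fixed $\alpha_{i,j}\ge 0$, the map $t \mapsto \log(1+\alpha_{i,j} t)$ is concave on $[0,\infty)$, since its second derivative equals $-\alpha_{i,j}^2/(1+\alpha_{i,j} t)^2 \le 0$. Applying Jensen's inequality to the random variable $t = r_j(\tau)^2\ge 0$ with respect to $P_\tau$ gives
\begin{equation*}
\E_\tau\bigl[\log(1+\alpha_{i,j} r_j(\tau)^2)\bigr]
\;\le\;
\log\bigl(1+\alpha_{i,j}\,\E_\tau[r_j(\tau)^2]\bigr).
\end{equation*}
Under Assumption~\ref{ass:energy-pres}, the inner expectation equals $1$, so the right-hand side collapses to $\log(1+\alpha_{i,j})$. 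Rearranging yields $\Delta\psi_{i,j}\ge 0$ for every pair $(i,j)$. Since $\pi_i>0$, summing and combining with~\eqref{eq:Hxg-improvement-exact-lb} immediately gives~\eqref{eq:Hlb-tele-vs-Hlb0}.

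There is no serious obstacle here: the only mildly delicate point is making sure that Jensen's inequality is applied in the correct direction (concave function, so the expectation is below the function evaluated at the mean) and that only the marginal $\E_\tau[r_j(\tau)^2]=1$ is needed, not any independence across $j$. This matches the comment after Assumption~\ref{ass:energy-pres} that cross-coordinate dependencies of $r_j(\tau)$ are allowed, since the bound is built coordinatewise from terms of the form $\log(1+\alpha_{i,j}r_j(\tau)^2)$ before summing. Thus the proof is essentially a two-line algebraic rearrangement followed by a one-line Jensen argument per coordinate.
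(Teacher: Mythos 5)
Your proof is correct and follows exactly the same route as the paper's: direct subtraction of the two bounds to isolate the $\tfrac12\sum_i\pi_i\sum_j\Delta\psi_{i,j}$ term, followed by Jensen's inequality applied to the concave map $t\mapsto\log(1+\alpha t)$ with the energy-preserving marginal $\E_\tau[r_j(\tau)^2]=1$. Your additional observation that only per-coordinate marginals are needed (not joint independence across $j$) accurately reflects the paper's own remark after Assumption~\ref{ass:energy-pres}.
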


\begin{proof}
Equation~\eqref{eq:Hxg-improvement-exact-lb} is obtained by direct
subtraction of \eqref{eq:H0-lb-def} from \eqref{eq:Htele-lb-def} and
using \eqref{eq:Deltapsi-def}. For the sign of $\Delta\psi_{i,j}$,
fix $\alpha>0$ and define
$\phi_\alpha(t)\coloneqq \log(1+\alpha t)$, which is concave on
$t>0$. Under \Cref{ass:energy-pres},
\[
  \psi_{i,j}
  = \E_\tau[\phi_{\alpha_{i,j}}(r_j(\tau)^2)]
  \;\le\;
  \phi_{\alpha_{i,j}}\bigl(\E_\tau[r_j(\tau)^2]\bigr)
  = \phi_{\alpha_{i,j}}(1)
  = \log(1+\alpha_{i,j}),
\]
so $\Delta\psi_{i,j}\ge 0$ for all $(i,j)$, implying
\eqref{eq:Hlb-tele-vs-Hlb0}.
\end{proof}

\begin{remark}[Scope and strength of the entropy result]
\label{rem:scope}
Within the shared modelling assumptions (GMM, diagonal surrogate,
energy-preserving CoB), \Cref{cor:tele-improvement-exact-lb} shows
that teleportation \emph{never decreases} the analytic entropy lower
bound:
\[
  H(x\mid g_0) \;\ge\; H^{\mathrm{lb}}_0,
  \qquad
  H(x\mid g) \;\ge\; H^{\mathrm{lb}}_{\tele} \;\ge\; H^{\mathrm{lb}}_0.
\]
We stress that we do \emph{not} claim $H(x\mid g)\ge H(x\mid g_0)$
for the true channels. Rather, we compare the surrogate quantities
$H^{\mathrm{lb}}_0$ and $H^{\mathrm{lb}}_{\tele}$ arising under the
same generative model; under this common lens, teleportation strictly
improves the analytic lower bound on uncertainty about $x$.
\end{remark}

\subsection{Teleportation-aware reconstruction lower bound}

We now translate the entropy bounds into reconstruction MSE lower
bounds using \Cref{thm:entropy-mse-grad}.

\subsubsection{Baseline and teleported MSE lower bounds}

From \eqref{eq:H0-lb-def}--\eqref{eq:Htele-lb-def} and
\Cref{thm:entropy-mse-grad}, we obtain analytic lower bounds on the
minimal reconstruction MSE for the baseline and teleported channels:
\begin{align}
  \underline{\xi}_0
  &\;\coloneqq\;
  \frac{1}{2\pi e}\,
  \exp\!\left(
    \frac{2}{d} H^{\mathrm{lb}}_0
  \right),
  \label{eq:xi0-lb} \\
  \underline{\xi}_{\tele}
  &\;\coloneqq\;
  \frac{1}{2\pi e}\,
  \exp\!\left(
    \frac{2}{d} H^{\mathrm{lb}}_{\tele}
  \right).
  \label{eq:xitele-lb}
\end{align}
By construction and monotonicity of the exponential,
\begin{equation}
  \xi_{g_0} \;\ge\; \underline{\xi}_0,
  \qquad
  \xi_{g} \;\ge\; \underline{\xi}_{\tele},
  \label{eq:true-vs-lb}
\end{equation}
where $\xi_{g_0}$ and $\xi_g$ are the true minimal MSEs for the
baseline and teleported channels, respectively. Again, $H(x)$ is
common to both channels and cancels in all \emph{relative} statements
about $\underline{\xi}_{\tele}/\underline{\xi}_0$.

\subsubsection{Improvement factor on the analytic MSE bound}

Combining the definitions, the ratio between the teleported and
baseline MSE \emph{lower bounds} satisfies
\begin{equation}
  \frac{\underline{\xi}_{\tele}}{\underline{\xi}_0}
  \;=\;
  \exp\!\left(
    \frac{2}{d}(H^{\mathrm{lb}}_{\tele} - H^{\mathrm{lb}}_0)
  \right)
  \;=\;
  \exp\!\left(
    \frac{1}{d}\sum_{i=1}^K \pi_i\sum_{j=1}^m \Delta\psi_{i,j}
  \right),
  \label{eq:ratio-lb-general}
\end{equation}
with $\Delta\psi_{i,j}$ as in~\eqref{eq:Deltapsi-def}.

Under the energy-preserving assumption (\Cref{ass:energy-pres}),
$\Delta\psi_{i,j} \ge 0$, hence the exponential factor in
\eqref{eq:ratio-lb-general} is at least $1$, and the analytic
teleportation-aware MSE lower bound is never smaller than the
baseline one. In other words, teleportation provably raises the
information-theoretic floor on reconstruction accuracy \emph{as
captured by this shared surrogate model}. We do not assert any
ordering between the true minimal MSEs $\xi_{g_0}$ and $\xi_g$.

\begin{remark}[Interpretation for privacy]
\label{rem:interpretation-final}
Equation~\eqref{eq:ratio-lb-general} provides a quantitative,
distribution-aware guarantee: under the shared assumptions
(GMM, diagonal surrogate, energy-preserving CoB), teleportation
inflates the analytic lower bound on the attacker’s reconstruction
MSE by a factor given by the RHS of
\eqref{eq:ratio-lb-general}. This factor depends on the CoB
distribution only through $\Delta\psi_{i,j}$, which in turn are
functions of the per-coordinate signal-to-noise ratios $\alpha_{i,j}$
and the marginals of $r_j(\tau)^2$. Thus teleportation is not merely
a heuristic perturbation: for any attacker whose behaviour is
dominated by this generative model (in essentially the same sense as
in \cite{xia2025theoretical}), there is a formal lower bound on how
accurately they can reconstruct $x$.
\end{remark}

\subsection{Log-normal CoB family (effective model)}

We now specialize the general diagonal analysis to an effective
log-normal model for the CoB-induced per-coordinate scalings
$r_j(\tau)^2$, to make the dependence on CoB variance explicit in
the analytic MSE lower bounds.

\paragraph{Log-normal marginal model.}
We model each per-coordinate scaling as
\[
  r_j(\tau)^2 = \exp(Y_j),
\]
where
\[
  Y_j \sim \N\bigl(-\tfrac{1}{2}s_j^2,\, s_j^2\bigr),
\]
so that
\[
  \E[r_j(\tau)^2]
  = \E[e^{Y_j}]
  = \exp\bigl(-\tfrac{1}{2}s_j^2 + \tfrac{1}{2}s_j^2\bigr)
  = 1.
\]
This ensures the energy-preserving condition
$\E[r_j(\tau)^2]=1$ (\Cref{ass:energy-pres}), while the parameter
$s_j^2 \ge 0$ controls the strength of teleportation-induced
variability on coordinate $j$. Practically, the defender can aim to
implement such marginals by sampling $\tau$ so that the induced
ratios $r_j(\tau)^2$ are approximately log-normal; we do not model
the exact mapping from channel-wise $\tau^{[\ell]}$ to ratio
marginals. We emphasize that this is an \emph{effective parametric
family} for $r_j^2$, chosen for analytical clarity; our rigorous
inequalities rely only on \Cref{ass:energy-pres}, while log-normality
is used to express the dependence on a small number of variance
parameters.

Under this model, the per-coordinate quantities
$\psi_{i,j}$ and $\Delta\psi_{i,j}$ admit explicit expressions.

\begin{corollary}[Log-normal teleportation and analytic MSE bound improvement]
\label{cor:lognormal-improvement}
Under the log-normal CoB marginal model above, for each mixture
component $i$ and coordinate $j$,
\begin{equation}
  \psi_{i,j}(s_j^2)
  \;=\;
  \E_{Y_j \sim \N(-\tfrac{1}{2}s_j^2,\,s_j^2)}
  \bigl[\log(1 + \alpha_{i,j} e^{Y_j})\bigr],
  \label{eq:psi-lognormal}
\end{equation}
and
\begin{equation}
  \Delta\psi_{i,j}(s_j^2)
  \;=\;
  \log(1+\alpha_{i,j})
  -
  \E_{Y_j \sim \N(-\tfrac{1}{2}s_j^2,\,s_j^2)}
  \bigl[\log(1 + \alpha_{i,j} e^{Y_j})\bigr].
  \label{eq:Deltapsi-lognormal}
\end{equation}
Let $\underline{\xi}_0$ and $\underline{\xi}_{\tele}$ denote the
analytic lower bounds on the minimal reconstruction MSE for the
non-teleported and teleported channels, respectively, as defined in
\eqref{eq:xi0-lb}--\eqref{eq:xitele-lb}. Then
\begin{equation}
  \frac{\underline{\xi}_{\tele}(s^2)}{\underline{\xi}_0}
  \;=\;
  \exp\!\left(
    \frac{1}{d}
    \sum_{i=1}^K \pi_i\sum_{j=1}^m
    \Bigl[
      \log(1+\alpha_{i,j}) - \psi_{i,j}(s_j^2)
    \Bigr]
  \right),
  \label{eq:ratio-lb-lognormal-exact}
\end{equation}
where $s^2 = (s_1^2,\dots,s_m^2)$ collects the log-variance parameters
across coordinates.
\end{corollary}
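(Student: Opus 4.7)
The plan is to proceed by direct substitution into the general diagonal framework developed above, so that the log-normal specialization inherits its guarantees from \Cref{cor:tele-improvement-exact-lb} rather than requiring any new entropy calculation. First I would verify that the log-normal CoB model falls under the energy-preserving regime of \Cref{ass:energy-pres}. Since $Y_j \sim \N(-\tfrac{1}{2}s_j^2, s_j^2)$, the standard Gaussian moment generating function gives
\[
\E[r_j(\tau)^2] \;=\; \E[e^{Y_j}] \;=\; \exp\!\Bigl(-\tfrac{1}{2}s_j^2 + \tfrac{1}{2}s_j^2\Bigr) \;=\; 1,
\]
so each per-coordinate scaling has unit mean and \Cref{ass:energy-pres} is satisfied. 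This immediately makes \Cref{cor:tele-improvement-exact-lb} applicable, and in particular guarantees that the ratio in \eqref{eq:ratio-lb-lognormal-exact} is bounded below by $1$ once we verify the algebraic form.

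Next I would plug the parametric form $r_j(\tau)^2 = e^{Y_j}$ into the general per-coordinate definition \eqref{eq:psi-ij-def}, yielding
\[
\psi_{i,j}(s_j^2) \;=\; \E_\tau\!\bigl[\log(1+\alpha_{i,j}\,r_j(\tau)^2)\bigr] \;=\; \E_{Y_j}\!\bigl[\log(1+\alpha_{i,j}\,e^{Y_j})\bigr],
\]
which is exactly \eqref{eq:psi-lognormal}. The analogous substitution in \eqref{eq:Deltapsi-def} gives \eqref{eq:Deltapsi-lognormal} directly. At this point I would briefly justify that the expectations are finite: for $\alpha>0$ and $Y\sim\N(\mu,\sigma^2)$, one has the two-sided bound $0 \le \log(1+\alpha e^Y) \le \log(1+\alpha) + \max(Y,0)$, and since $Y$ has finite absolute moments of all orders, the expectation is finite. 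This legitimates the use of Fubini/dominated convergence implicit in treating $\psi_{i,j}$ as a well-defined, continuous function of $s_j^2$.

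Finally, I would invoke the general ratio identity \eqref{eq:ratio-lb-general} from \Cref{cor:tele-improvement-exact-lb}, namely
\[
\frac{\underline{\xi}_{\tele}}{\underline{\xi}_0}
\;=\; \exp\!\Bigl(\tfrac{1}{d}\sum_{i=1}^K \pi_i \sum_{j=1}^m \Delta\psi_{i,j}\Bigr),
\]
and substitute the log-normal expression for $\Delta\psi_{i,j}(s_j^2)$ derived in the previous step. This yields \eqref{eq:ratio-lb-lognormal-exact} verbatim, collecting the variance parameters into $s^2=(s_1^2,\dots,s_m^2)$. The main obstacle, such as it is, is not analytic difficulty but a modelling subtlety: the statement implicitly treats the per-coordinate marginals of $r_j(\tau)^2$ as log-normal, while the underlying CoB mechanism couples the $r_j(\tau)$ through shared channel-wise factors $\tau^{[\ell]}$. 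Since \Cref{cor:tele-improvement-exact-lb} only requires marginal moment conditions and not independence across $j$, this coupling poses no issue for the proof, and I would include a short remark confirming that each step in the derivation depends only on the marginals of $Y_j$, not on their joint law.
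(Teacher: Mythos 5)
Your proposal is correct and follows essentially the same route as the paper: substitute $r_j(\tau)^2 = e^{Y_j}$ into the per-coordinate definitions \eqref{eq:psi-ij-def} and \eqref{eq:Deltapsi-def} to obtain \eqref{eq:psi-lognormal}–\eqref{eq:Deltapsi-lognormal}, then plug the resulting $\Delta\psi_{i,j}(s_j^2)$ into the general ratio identity \eqref{eq:ratio-lb-general}. The extra care you take — verifying the unit-mean property to establish \Cref{ass:energy-pres}, checking integrability via $\log(1+\alpha e^Y)\le\log(1+\alpha)+\max(Y,0)$, and observing that only marginals (not the joint law) of the $r_j(\tau)^2$ matter — is sound and goes slightly beyond the paper's terse proof, but does not change the argument.
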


\begin{proof}
The identities \eqref{eq:psi-lognormal}–\eqref{eq:Deltapsi-lognormal}
are obtained by substituting $r_j^2 = e^{Y_j}$ with
$Y_j~\sim~\N(-\tfrac{1}{2}s_j^2,\,s_j^2)$ into the definition
\eqref{eq:psi-ij-def} of $\psi_{i,j}$ and the definition
\eqref{eq:Deltapsi-def} of $\Delta\psi_{i,j}$. The ratio
\eqref{eq:ratio-lb-lognormal-exact} then follows immediately by
plugging $\Delta\psi_{i,j}(s_j^2)$ into
\eqref{eq:ratio-lb-general}, which relates the analytic MSE lower
bounds $\underline{\xi}_{\tele}$ and $\underline{\xi}_0$ to the
$\Delta\psi_{i,j}$.
\end{proof}

\begin{remark}[Local small-variance expansion (heuristic)]
\label{rem:small-var}
To gain intuition about the dependence on teleportation strength, it
is useful to consider the regime $s_j^2 \ll 1$, where the log-normal
marginals are close to the degenerate case $r_j^2 \equiv 1$. This
section provides a local Taylor expansion for intuition; it is
\emph{not} used in our rigorous inequalities, which already follow
from \Cref{ass:energy-pres}.

For $t_j \coloneqq r_j^2 = e^{Y_j}$ with
$Y_j \sim \N(-\tfrac{1}{2}s_j^2,\,s_j^2)$ we have
\[
  \E[t_j] = 1,
  \qquad
  \Var(t_j) = \E[t_j^2] - 1 = \exp(s_j^2) - 1.
\]
Thus $\Var(t_j) = s_j^2 + O(s_j^4)$ as $s_j^2 \to 0$. Writing
$t_j = 1 + \delta_j$, we have $\E[\delta_j]=0$ and
$\Var(\delta_j)~=~\Var(t_j)$.

Since log-normal marginals have finite moments of all orders, a
second-order Taylor expansion of
$\phi_\alpha(t) \coloneqq \log(1+\alpha t)$ around $t=1$ yields
\[
  \phi_\alpha(1+\delta)
  =
  \log(1+\alpha)
  + \frac{\alpha}{1+\alpha}\,\delta
  - \frac{\alpha^2}{2(1+\alpha)^2}\,\delta^2
  + R_\alpha(\delta),
\]
with $|R_\alpha(\delta)| \le C_\alpha |\delta|^3$ for some constant
$C_\alpha$ depending on $\alpha$. Taking expectations with
$\E[\delta]=0$ and $\E[\delta^2] = \Var(t_j)$ gives
\[
  \E[\phi_\alpha(1+\delta)]
  =
  \log(1+\alpha)
  - \frac{\alpha^2}{2(1+\alpha)^2}\,\Var(t_j)
  + O\bigl(\E[|\delta|^3]\bigr).
\]
Applying this with $\alpha = \alpha_{i,j}$ and $t_j = r_j^2$, and
recalling that
$\psi_{i,j} = \E[\log(1+\alpha_{i,j} r_j^2)]$, we obtain the local
approximation
\[
  \Delta\psi_{i,j}(s_j^2)
  =
  \log(1+\alpha_{i,j}) - \psi_{i,j}(s_j^2)
  \;\approx\;
  \frac{\alpha_{i,j}^2}{2(1+\alpha_{i,j})^2}\,
  \Var(t_j),
\]
with an error term controlled by $\E[|\delta_j|^3]$. For the log-normal
model, $\Var(t_j) = \exp(s_j^2) - 1$, so we arrive at the heuristic
expression
\[
  \Delta\psi_{i,j}(s_j^2)
  \;\approx\;
  \frac{\alpha_{i,j}^2}{2(1+\alpha_{i,j})^2}\,
  \bigl(\exp(s_j^2) - 1\bigr),
  \qquad s_j^2 \ll 1.
\]
Substituting this into \eqref{eq:ratio-lb-lognormal-exact} yields the
corresponding small-variance approximation for the logarithm of the
analytic MSE bound ratio:
\[
  \log\frac{\underline{\xi}_{\tele}(s^2)}{\underline{\xi}_0}
  \;\approx\;
  \frac{1}{2d}
  \sum_{i=1}^K \pi_i\sum_{j=1}^m
  \frac{\alpha_{i,j}^2}{(1+\alpha_{i,j})^2}\,
  \bigl(\exp(s_j^2) - 1\bigr),
  \qquad s_j^2 \ll 1.
\]

This approach highlight that, in the small-variance regime, the teleportation-induced
improvement in the analytic reconstruction MSE lower bound grows approximately linearly in $s_j^2$ (via $\exp(s_j^2)-1$), with a slope governed by the per-coordinate signal-to-noise ratios $\alpha_{i,j}$ and the mixture weights~$\pi_i$

\end{remark}

\section{Ablation: Sensitivity of Teleportation Hyperparameters}
\label{sec:teleport-hp-sensitivity}

Teleportation introduces a small set of additional hyperparameters that control how strongly we move along symmetry directions.  
In this section we study the sensitivity of WARP to two core choices:  
(i) the target retain-variance fraction used to choose the per-layer rank $k$ in the SVD projector (Section~\ref{sec:method-teleportation-based-defense}), and  
(ii) the size of the retain minibatch $B_r$ used to estimate the retain subspace.  
Both directly govern the geometry of the retain null space and the amount of stochasticity in the teleportation step, and were explicitly highlighted as potential sources of instability.

\paragraph{Setup.}
We perform a controlled sweep on CIFAR-10 with ResNet-18 and NGP+WARP under the U-LiRA black-box auditor (Section~\ref{sec:exp-ulira}).  
For the SVD projector, we vary the target retain-variance level from $95\%$ to $99.9\%$, which induces different per-layer ranks $k_\ell$ such that the top singular vectors of $R_\ell(D_r)$ capture the chosen fraction of retain energy.  
For the retain minibatch, we vary the teleportation batch size $|B_r| \in \{256, 512, 1024, 2048, 4096\}$ while keeping the forget minibatch and unlearning hyperparameters fixed.  
For each configuration we run the full unlearning pipeline and record test accuracy as well as privacy measured by $(1-\mathrm{AUC})$ of U-LiRA (higher is better).

\begin{figure}[t]
    \centering
    \begin{minipage}{0.48\linewidth}
        \centering
        \includegraphics[width=\linewidth]{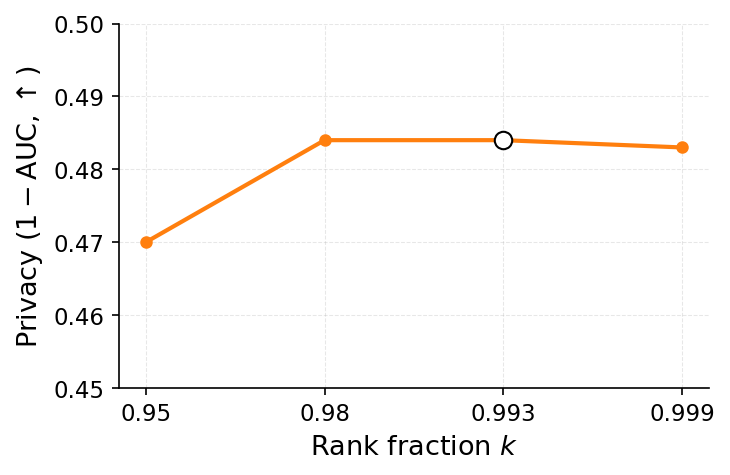}
        \vspace{1mm}
        \textbf{(A)}~Privacy $(1-\mathrm{AUC})$ vs.\ retain-variance target
    \end{minipage}
    \hfill
    \begin{minipage}{0.48\linewidth}
        \centering
        \includegraphics[width=\linewidth]{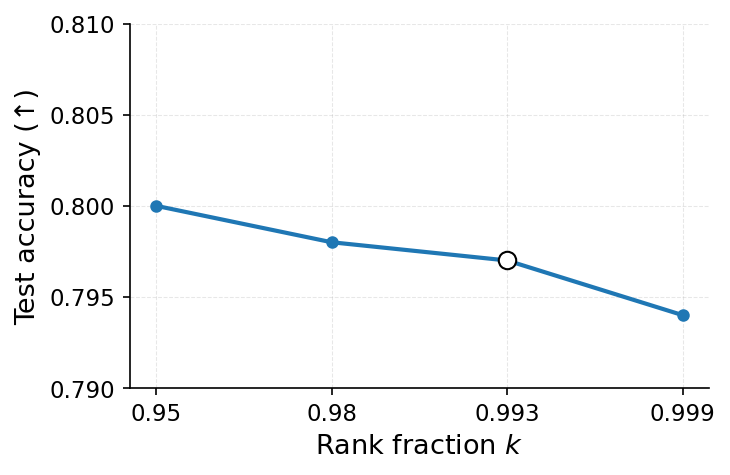}
        \vspace{1mm}
        \textbf{(B)}~Test accuracy vs.\ retain-variance target
    \end{minipage}

    \vspace{2mm}

    \begin{minipage}{0.48\linewidth}
        \centering
        \includegraphics[width=\linewidth]{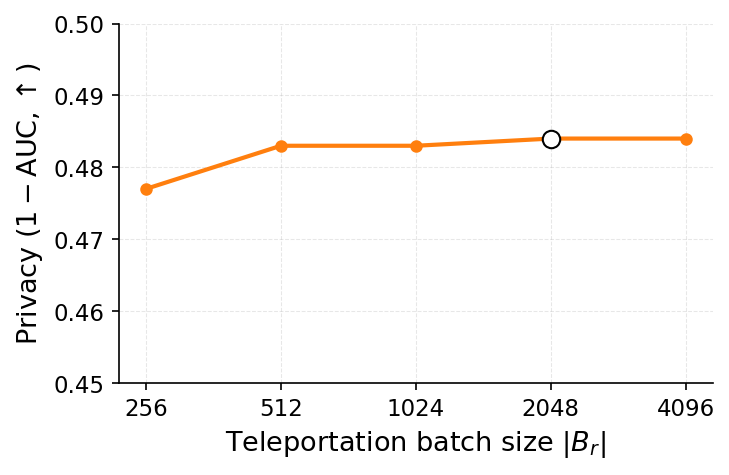}
        \vspace{1mm}
        \textbf{(C)}~Privacy $(1-\mathrm{AUC})$ vs.\ teleportation batch size $|B_r|$
    \end{minipage}
    \hfill
    \begin{minipage}{0.48\linewidth}
        \centering
        \includegraphics[width=\linewidth]{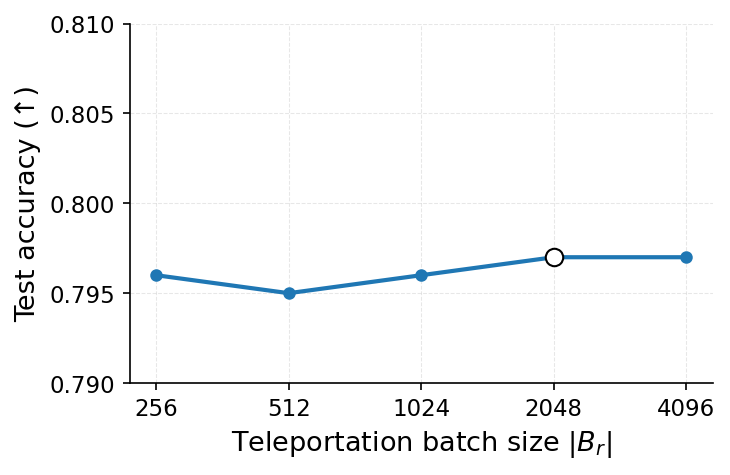}
        \vspace{1mm}
        \textbf{(D)}~Test accuracy vs.\ teleportation batch size $|B_r|$
    \end{minipage}

    \vspace{2mm}
    \caption{\textbf{Sensitivity of teleportation hyperparameters.}
        Plots (A,B) vary the target retain-variance level used to set the per-layer rank $k_\ell$; plots (C,D) vary the retain minibatch size $|B_r|$ used to estimate the retain subspace.
        Privacy is measured as $1-\mathrm{AUC}$ of U-LiRA (higher is better).  
        Markers highlight the configuration used in our main experiments ($95.3\%$ retain variance and $|B_r|=2048$).}
    \label{fig:teleport-hp-sensitivity}
\end{figure}

\paragraph{Results and discussion.}
Figure~\ref{fig:teleport-hp-sensitivity} shows that teleportation is \emph{remarkably insensitive} to both hyperparameters in the regime we consider.

\emph{Retain-variance target.}  
Increasing the target retain-variance from $95\%$ to $99.9\%$ changes privacy $(1-\mathrm{AUC})$ by less than $0.015$ in absolute terms, while test accuracy varies in a narrow band of $\approx 0.79$--$0.80$.  
Privacy slightly improves as we move from $95\%$ to around $99.3\%$, after which the curve flattens: very high targets effectively make the retain projector full-rank, leaving less room for teleportation to move in symmetry directions and yielding diminishing returns.  
The configuration used in the main experiments (target retain-variance $\approx 99.3\%$) lies near this plateau, indicating that our chosen rank provides a good privacy--utility compromise.

\emph{Retain minibatch size $|B_r|$.}  
Varying $|B_r|$ over an order of magnitude has only a minor effect: privacy $(1-\mathrm{AUC})$ shifts by at most $\sim 0.01$, and test accuracy remains within $\pm 0.2\%$ points of $0.796$.  
Even relatively small batches ($|B_r|=256$) already provide a sufficiently representative retain subspace for teleportation, and larger batches only yield a slight, saturating gain in privacy.  
This suggests that the random retain minibatch need not tightly approximate the full retain set to obtain a stable projector and effective defense; in practice, a modest $|B_r|$ balances computational cost with stable subspace estimation.

Overall, these ablations show that WARP’s performance does not hinge on fragile hyperparameter choices: both privacy and utility are stable across wide ranges of the SVD rank and retain minibatch size.  
Moreover, the small spread in test accuracy (\(<0.6\%\) across all settings) empirically confirms that teleportation remains approximately loss-preserving on the retain set, providing an implicit bound on worst-case retain-loss drift in our experiments.

\end{document}